\newcommand{\revision}[1]{#1}
\def\b0{{0}}
\def\poly{\operatorname{\mathop{poly}}}
\def\RR{\mathbb{R}}
\def\>{\rangle}
\def\rank{\operatorname{\mathop{rank}}}
\def\vec{\operatorname{\mathop{vec}}}
\def\diag{\operatorname{\mathop{diag}}}
\def\Set#1{\left\{ #1 \right\}}
\def\Bigbar#1{\mathrel{\left|\vphantom{#1}\right.}}%\n@space}}
\def\Setbar#1#2{\Set{#1 \Bigbar{#1 #2} #2}}
\newcommand{\E}{\mathbb{E}}
\newcommand{\distas}[1]{\mathbin{\overset{#1}{\sim}}}
\newtheorem{theorem}{Theorem}[section]
\newtheorem{lemma}[theorem]{Lemma}
\newtheorem{definition}[theorem]{Definition}
\newtheorem{corollary}[theorem]{Corollary}
\newtheorem{assumptions}[theorem]{Assumption}
\newenvironment{proof}{\par\noindent{\bf Proof:\ }}{\hfill$\Box$\\[2mm]}
\newcommand{\Id}{\mathbb{I}}
\newcommand{\littleO}[1]{o\left(#1\right)}
\newcommand{\bigO}[1]{\mathcal{O}\left(#1\right)}
\newcommand{\bigOmg}[1]{\Omega\left(#1\right)}
\newcommand{\bigTheta}[1]{\Theta\left(#1\right)}
\newcommand{\inner}[1]{\left\langle#1\right\rangle}
\newcommand{\bigexp}[1]{\exp\left(#1\right)}
\newcommand{\norm}[1]{\left\|#1\right\|}
\newcommand{\abs}[1]{\left|#1\right|}
\newcommand{\svmin}[1]{\sigma_{\rm min}\left(#1\right)}
\newcommand{\evmax}[1]{\lambda_{\rm max}\left(#1\right)}
\newcommand{\evmin}[1]{\lambda_{\rm min}\left(#1\right)}
\def\HWineq{Theorem 6.2.1 of \cite{vershynin2018high}}
\def\Bernstein{Theorem 2.8.1 of \cite{vershynin2018high}}
\def\Hoeff{Theorem 2.2.6 of \cite{vershynin2018high}}
\def\GaussOpNorm{Theorem 2.12 of \cite{Davidson2001}}
\def\MatrixChernoff{Theorem 1.1 of \cite{Tropp2011}}
\def\Lip{\mathrm{Lip}}
\def\op{\mathrm{op}}
\def\bydef{\mathrel{\mathop:}=}
\def\PP{\mathbb{P}}
\def\tr{\mathop{\rm tr}\nolimits}
\def\interior{\mathop{\rm int}\nolimits}
\def\argmax{\mathop{\rm arg\,max}\limits}
\def\argmin{\mathop{\rm arg\,min}\limits}
\def\sign{\mathop{\rm sign}\limits}
\def\min{\mathop{\rm min}\nolimits}
\def\max{\mathop{\rm max}\nolimits}
\begin{document}

%\twocolumn[
\title{Tight Bounds on the Smallest Eigenvalue \\of the Neural Tangent Kernel for Deep ReLU Networks}

\author{Quynh Nguyen\thanks{MPI-MIS, Germany. Email: \texttt{quynhnguyenngoc89@gmail.com}.}\;,
\;\;Marco Mondelli\thanks{Institute of Science and Technology Austria (ISTA). Email: \texttt{marco.mondelli@ist.ac.at}.}\;,\;\;Guido Montufar\thanks{MPI-MIS, Germany and UCLA. Email: \texttt{montufar@math.ucla.edu}.}}

\date{\today\thanks{This version corrects a mistake in the argument of Lemma \ref{lem:Fk_vs_Fk_tilde} of the paper (with the same title) appeared at ICML 2021. The mistake also affects Lemma \ref{lem:svmin_Fk_tilde_Khatri_Rao}. These two Lemmas have been edited and the corresponding proofs corrected. All the other results remain untouched.}}

\maketitle

\begin{abstract}
    A recent line of work has analyzed the theoretical properties of deep neural networks via the Neural Tangent Kernel (NTK). 
    In particular, the smallest eigenvalue of the NTK has been related to the memorization capacity, 
    the global convergence of gradient descent algorithms and the generalization of deep nets. 
    However, existing results either provide bounds in the two-layer setting or assume that the spectrum of the NTK matrices
    is bounded away from 0 for multi-layer networks. 
    In this paper, we provide tight bounds on the smallest eigenvalue of NTK matrices for deep ReLU nets, 
    both in the limiting case of infinite widths and for finite widths. 
    In the finite-width setting, the network architectures we consider are fairly general:
    we require the existence of a wide layer with roughly order of $N$ neurons, 
    $N$ being the number of data samples; and the scaling of the remaining layer widths is arbitrary (up to logarithmic factors). 
    To obtain our results, we analyze various quantities of independent interest: 
    we give lower bounds on the smallest singular value of hidden feature matrices, 
    and upper bounds on the Lipschitz constant of input-output feature maps.
\end{abstract}

\section{Introduction}
% Understanding the properties of Neural Tangent Kernel (NTK) \cite{JacotEtc2018} and its connections to theoretical aspects of 
% neural networks has become an active research area in recent years.
% The recent success of multi-layer neural networks has spurred significant interest in understanding their theoretical properties.
% One popular line of work has studied this problem via the so-called Neural Tangent Kernel (NTK) \cite{JacotEtc2018}.
Consider an $L$-layer ReLU network with feature maps $f_l:\RR^d\to\RR^{n_l}$ defined for every $x\in\RR^d$ as
\begin{align}\label{eq:def_feature_map}
    f_l(x)=\begin{cases}
	    x & l=0,\\
	    \sigma(W_l^T f_{l-1}) & l\in[L-1],\\
	    W_L^T f_{L-1} & l=L, 
        \end{cases}
\end{align}
where $W_l\in\RR^{n_{l-1}\times n_l}$, $\sigma(x)=\max(0,x)$ and, given an integer $n$, we use the shorthand $[n]=\{1, \ldots, n \}$. 
We assume that the network has a single output, namely $n_L=1$ and $W_L\in\RR^{n_{L-1}}.$
For consistency, let $n_0=d.$
Let $g_l:\RR^d\to\RR^{n_l}$ be the pre-activation feature map so that $f_l(x)=\sigma(g_l(x)).$
Let $(x_1,\ldots,x_N)$ be $N$ samples in $\RR^d$, 
$\theta=[\vec(W_1),\ldots,\vec(W_L)]$,
and $F_L(\theta)=[f_L(x_1),\ldots,f_L(x_N)]^T.$
Let $J$ be the Jacobian of $F_L$ with respect to all the weights:
\begin{equation}\label{eq:Jac}
    J =\left[\frac{\partial F_L}{\partial\vec(W_1)},\ldots,\frac{\partial F_L}{\partial\vec(W_L)}\right] \in\RR^{N\times\sum_{l=1}^Ln_{l-1}n_l}.
\end{equation}
If not mentioned otherwise, we will assume throughout the paper that all the partial derivatives 
are computed by the standard back-propagation with the convention that $\sigma'(0)=0$.
The empirical Neural Tangent Kernel (NTK) Gram matrix, denoted by $\bar{K}^{(L)} \in\RR^{N\times N}$, is defined as:
\begin{equation}\label{eq:NTKgramdef}
    \bar{K}^{(L)}
    =J J^T
    =\sum_{l=1}^{L} \left[\frac{\partial F_L}{\partial\vec(W_l)}\right] \left[\frac{\partial F_L}{\partial\vec(W_l)}\right]^T.
\end{equation}
% Let $(W_l)_{ij}\distas{}\mathcal{N}(0,\frac{2}{n_l}).$
As shown in \citep{JacotEtc2018},
when $(W_l)_{ij}\distas{}\mathcal{N}(0,1)$ for all $l\in[L]$ and $\min\Set{n_1,\ldots,n_{L-1}}\to\infty$,
the normalized NTK matrix converges in probability to a non-random limit, called the limiting NTK matrix:
\begin{align}\label{eq:plim}
    \left(\prod_{l=1}^{L-1}\frac{2}{n_l}\right) \bar{K}^{(L)} \xrightarrow{\enskip p \enskip} K^{(L)} .
\end{align}
A quantitative bound for the convergence rate is provided in \citep{AroraEtal2019}.
Several theoretical aspects of training neural networks have been related to the spectrum of the NTK matrices.
For instance, considering the square loss $\Phi(\theta)=\frac{1}{2}\norm{F_L-Y}_2^2$, then a simple calculation shows that
\begin{align}\label{eq:GD}
    \norm{\nabla\Phi(\theta)}_2^2
%     &=(F_L-Y)^T \bar{K}^{(L)} (F_L-Y)\nonumber \\
    &\geq \evmin{\bar{K}^{(L)}} 2\Phi(\theta) .
\end{align}
% This inequality has been used for proving the global convergence of gradient descent in many prior works.
The idea is that, if the spectrum of $\bar{K}^{(L)}$ is bounded away from zero at initialization,
then under suitable conditions, one can show that this property continues to hold during training.
In that case, $\evmin{\bar{K}^{(L)}}$ from \eqref{eq:GD} can be replaced by a positive constant, 
and thus minimizing the gradient on the LHS will drive the loss to zero.
This property, together with other smoothness conditions of the loss,
has been used for proving the global convergence of gradient descent in many prior works:
% Some references on this topic:
\citep{DuEtal2018_ICLR, OymakMahdi2019, SongYang2020,wu2019global} consider two layer nets, 
\citep{AllenZhuEtal2018,DuEtal2019,zou2020gradient,ZouGu2019} consider deep nets with polynomially wide layers,
and most recently \citep{QuynhMarco2020} consider deep nets with one wide layer of linear width followed by a pyramidal shape.
Beside optimization, the smallest eigenvalue of the NTK has been used to prove generalization bounds \citep{arora2019fine,Andrea2020}
and memorization capacity \citep{Andrea2020}.
All these analyses show that understanding the scaling of the smallest eigenvalue of the NTK is a problem of fundamental importance.
% Thus, it is a fundamental problem to understand how this quantity scales in general.
% Thus, understanding the spectrum of the NTK and bounds on its smallest eigenvalue is a fundamental and important problem.
% Let us review below some recent works in this direction.
% All these analysis motivated the study of the current paper.

% All of these results so far (implicitly or explicitly) require non-trivial lower bounds on this quantity.
The recent work \citep{fan2020spectra} characterizes the full spectrum of
the limiting NTK via an iterated Marchenko-Pastur map.
Yet, this does not have implications on the scaling of any individual eigenvalue.
\citep{Andrea2020} gives a quantitative lower bound on $\evmin{\bar{K}^{(L)}}$ 
in a regime in which the number of parameters scales linearly with $N.$
This result is particularly interesting but currently restricted to a two-layer setup.
To our knowledge, for multi-layer architectures, 
the fact that the spectrum of the NTK is bounded away from zero is a typical working assumption \citep{DuEtal2019, HuangYau2020}.

\paragraph{Main contributions.} 
The aim of this paper is to provide tight lower bounds on the smallest eigenvalues of the empirical NTK matrices for deep ReLU networks.

First, we consider the asymptotic setting.
For i.i.d.\ data from a class of distributions that satisfy a Lipschitz concentration property
and for $(W_l)_{ij}\distas{}\mathcal{N}(0,1)$,
we show that the smallest eigenvalue of the limiting NTK matrix scales as
\begin{align}\label{eq:fast1}
    L\mathcal{O}(d) \geq 
%     \lim_{\min\Set{n_1,\ldots,n_{L-1}}\to\infty} \evmin{\left(\prod_{l=1}^{L-1}\frac{2}{n_l}\right) \bar{K}^{(L)} }
    \evmin{K^{(L)}}
    \geq \Omega(d),
\end{align}
where $d$ captures the scaling of the average $L_2$ norm of the data 
\footnote{As introduced later, $d$ is also the input dimension. However, only the scaling of the data matters for our bounds.}.
% That is, the smallest eigenvalue of the NTK increases with depth and the norm of the data.
This result is proved in our Theorem \ref{thm:limiting_NTK}.

Next, we consider networks with large but {\em finite} widths, and fixed depth.
Let $\xi_l$ be an auxiliary variable which takes value $1$ if $n_l=\tilde{\Omega}(N)$ and $0$ otherwise, 
where $N$ is the number of data points and $\tilde{\Omega}$ neglects logarithmic factors.
Then for $(W_l)_{ij}\distas{}\mathcal{N}(0,\beta_l^2)$, we show that
\begin{align}\label{eq:fast2}
    &\mathcal{O}\left(\left(d\prod_{l=1}^{L-1}n_l\right)\left(\prod_{l=1}^{L}\beta_l^2\right)\left(\sum_{l=1}^{L}\beta_l^{-2}\right)\right)
    \geq \evmin{\bar{K}^L} \nonumber \\
    &\geq \Omega\left(\left(d\prod_{l=1}^{L-1}n_l\right)\left(\prod_{l=1}^{L}\beta_l^2\right)\left(\sum_{l=1}^{L}\xi_{l-1}\beta_l^{-2}\right)\right) .
\end{align}
This is proved in Theorem \ref{thm:empirical_Jacobian}.
Our result directly implies that the spectrum of the NTK matrix is bounded away from zero whenever the network contains one wide layer of order $N$. 
This holds regardless of the position of the wide layer and the widths of the remaining ones (up to log factors).
The last property allows for networks with bottleneck layers.

Comparing the lower and upper bounds of \eqref{eq:fast2}, we note that they only differ in the scaling of 
$\sum_{l=1}^{L}\beta_l^{-2}$ and $\sum_{l=1}^{L}\xi_{l-1}\beta_l^{-2}.$
Let $k=\argmin\nolimits_{l\in[L-1]}\beta_l.$
Then, as long as $\xi_{k-1}=1$, both the sums will scale as $\beta_k^{-2}$.
In that case, the lower bound in \eqref{eq:fast2} is tight (up to a multiplicative constant).
For instance, this occurs if \emph{(i)} the network has one wide layer with $\tilde{\Omega}(N)$ neurons, 
and \emph{(ii)} it is initialized under He's initialization (i.e., $\beta_{l}=\sqrt{2/n_{l-1}}$)
or LeCun's initialization (i.e., $\beta_{l}=1/\sqrt{n_{l-1}}$) \citep{XavierBengio2010,he2015delving,lecun2012efficient}.
Note also that our bound for finite widths is consistent with the asymptotic one in \eqref{eq:fast1} (except that we do not track the dependence on $L$ in \eqref{eq:fast2}).
% An immediate consequence of this fact is that such a class of deep networks can interpolate $N$ distinct data points with arbitrary real-valued labels. Furthermore, recent studies of gradient descent methods in the lazy training regime suggest that having the smallest eigenvalue of the empirical NTK bounded away from 0 at initialization leads to optimization guarantees \cite{ChizatEtc2019,DuEtal2019}.
%A similar observation has been made in \cite{Andrea2020} for shallow nets.

During the proof of our main theorems, we obtain other intermediate results which could be of independent interest:
\begin{itemize}
    \item We give a tight bound on the smallest singular value of feature matrices $F_k=[f_k(x_1), \ldots, f_k(x_N)]^T\in\RR^{N\times n_k},$ for $k\in[L-1]$. Our analysis requires only  a single wide layer, i.e.\ $n_k=\tilde{\Omega}(N)$, while all the previous layers can have {\em sublinear} widths.
  %  Note that a simpler concentration argument may lead to a similar result, but requires all the layer widths to be at least order of $N$.
    \item We obtain a new bound on the Lipschitz constant of the feature maps $f_k$'s for random Gaussian weights. %, and large but finite widths.
    This bound is tighter than the one typically appearing in the literature (as given by the product of the operator norms of all the layers).
    The proof exploits a novel characterization of the Lipschitz constant of these maps, and leverages existing bounds on the number of activation patterns of deep ReLU nets.
\end{itemize}
This analysis allows us to prove the main results for a fairly general class of network shapes: 
there exists a layer with order  of $N$ neurons in an {\em arbitrary} position, and all the remaining layers can have {\em sublinear} widths, see Figure \ref{fig:net}.
No special ordering or relation between the scalings of these layers is needed. This goes beyond the setting of the typical NTK regime, where all the layers of the network have $\poly(N)$ neurons.

\begin{figure}[t]
    \centering
    \includegraphics[width=.5\columnwidth]{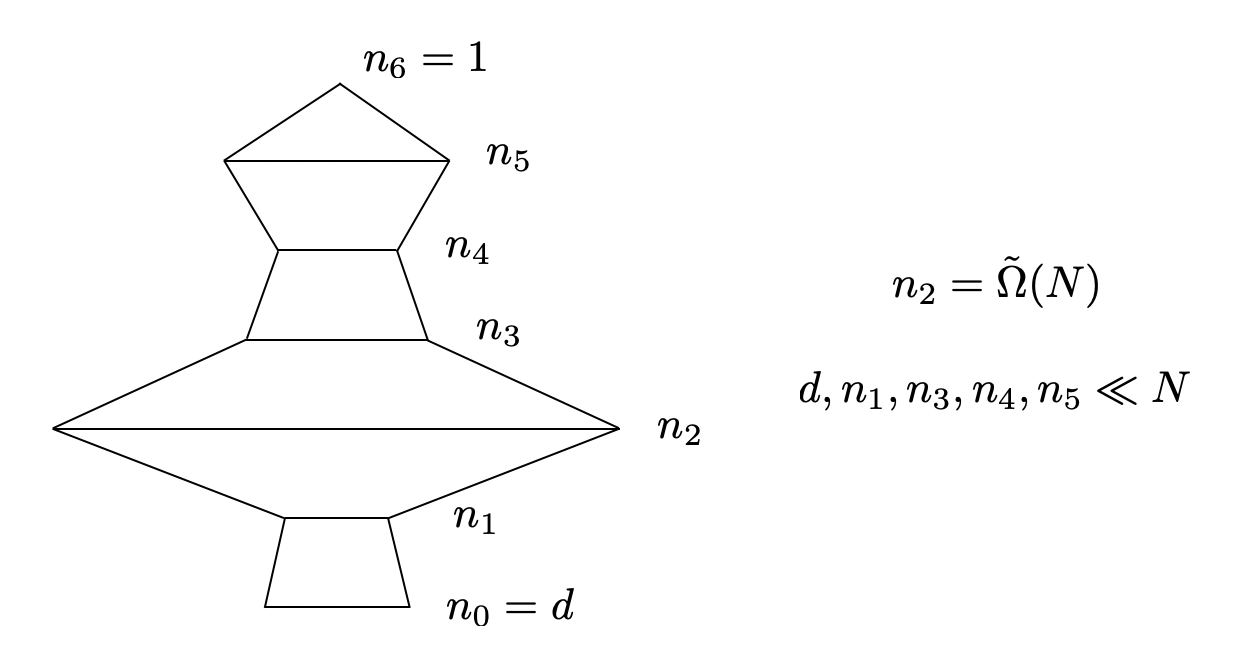}
\caption{Illustration of a network architecture to which our results can be applied (and that does not fall in the typical NTK regime).}\label{fig:net}
\end{figure}

\section{Preliminaries}\label{sec:setting}

\paragraph{Notations.} The following notations are used throughout the paper: given two integers $n<m$, let $[n, m]=\{n, n+1, \ldots, m\}$;
$X=[x_1,\ldots,x_N]^T\in\RR^{N\times d}$; the feature matrix at layer $l$ is
$F_l=[f_l(x_1),\ldots,f_l(x_N)]^T\in\RR^{N\times n_l}$; 
the centered feature matrices are $\tilde{F}_l=F_l-\E_X[F_l]$ for $l\in[L-1]$,
where the expectation is taken over all the samples; 
$\Sigma_l(x)=\diag([\sigma'(g_{l,j}(x))]_{j=1}^{n_l})$ for $l\in[L-1]$,
where $g_{l,j}(x)$ is the pre-activation neuron. Given two matrices $A,B\in\RR^{m\times n}$, we denote by $A\circ B$ their Hadamard product, 
and by $A\ast B=[(A_{1:}\otimes B_{1:}),\ldots,(A_{m:}\otimes B_{m:})]^T\in\RR^{m\times n^2}$ their row-wise Khatri-Rao product. 
Let $\norm{A}_{\op}$ be the operator norm of the matrix $A.$
Given a p.s.d.\ matrix $A$, we denote by $\sqrt{A}$ its square root (i.e. $\sqrt{A}=\sqrt{A}^T$ and $\sqrt{A}\sqrt{A}=A$).
We denote by $\norm{f}_{\Lip}$ the Lipschitz constant of the function $f$. 
All the complexity notations $\Omega(\cdot)$ and $\mathcal{O}(\cdot)$ are understood for sufficiently large $N,d,n_1,n_2,\ldots,n_{L-1}$. 
If not mentioned otherwise, the depth $L$ is considered a constant.

\noindent {\bf Hermite expansion.} 
Our bounds depend on the $r$-th Hermite coefficient of the ReLU activation function $\sigma$.
Let us denote it by $\mu_r(\sigma).$
By standard calculations, we have for any even integer $r\geq 2$, 
\begin{equation}\label{eq:HermiteReLU}
    \mu_r(\sigma) = \frac{1}{\sqrt{2\pi}} (-1)^{\frac{r-2}{2}}\frac{(r-3)!!}{\sqrt{r!}}.
\end{equation}

\noindent {\bf Weight and data distribution.} We consider the setting where both the weights of the network and the data are random.
In particular, $(W_l)_{i,j}\distas{}_{\rm i.i.d.}\mathcal{N}(0,\beta_l^2)$ for all $l\in[L], i\in[n_{l-1}], j\in[n_l],$
where the variable $\beta_l$ may depend on layer widths. 
Throughout the paper, we let $(x_1,\ldots,x_N)$ be $N$ i.i.d.\ samples from a data distribution, say $P_X$, such that the following conditions are satisfied.

\begin{assumptions}[Data scaling]\label{ass:data_dist}
 The data distribution $P_X$ satisfies the following properties:
    \begin{enumerate}
	\item $\int \norm{x}_2 dP_X(x)=\Theta(\sqrt{d}).$
	\item $\int \norm{x}_2^2 dP_X(x)=\Theta(d).$
	\item $\int \norm{x-\int x'\, dP_X(x')}_2^2 dP_X(x)=\bigOmg{d}.$
    \end{enumerate}
\end{assumptions}
These are just scaling conditions on the data vector $x$ or its centered counterpart $x-\E x$. 
We remark that the data can have any scaling, but in this paper we fix it to be of order $d$ for convenience.
We further assume the following condition on the data distribution.

\begin{assumptions}[Lipschitz concentration]\label{ass:data_dist2}
 The data distribution $P_X$ satisfies the \emph{Lipschitz concentration property}. Namely, for every Lipschitz continuous function $f:\RR^d\to\RR$, there exists an absolute constant $c>0$ such that, for all $t>0$,
$$
\PP\left(\abs{f(x)-\int f(x')\, dP_X(x')}>t\right)
\leq 2e^{-ct^2 / \norm{f}_{\Lip}^2}.
$$
\end{assumptions}
In general, Assumption \ref{ass:data_dist2} covers the whole family of distributions which satisfies the log-Sobolev inequality with a dimension-independent constant (or distributions with log-concave densities).
% Many distributions of interest satisfy Assumption \ref{ass:data_dist2}, 
This includes, for instance, the standard Gaussian distribution, the uniform distribution on the sphere, 
or uniform distributions on the unit (binary or continuous) hypercube \citep{vershynin2018high}.
Let us remark that the coordinates of a random sample need not be independent under the above assumptions.
Note also that, by applying a Lipschitz map to the data, Assumption \ref{ass:data_dist2} still holds. 
Thus, data produced via a Generative Adversarial Network (GAN) fulfills our assumption, see \citep{seddik2020random}.

% \noindent{\bf NTK definition.} 
%Consider an $L$-layer ReLU network \eqref{eq:def_feature_map} and let $(x_1,\ldots,x_N)$ be a given set of $N$ data points. 
% Let $p=\sum_{l=1}^{L}n_ln_{l-1}$ be the total number of parameters of the network.
% Let $F_L:\RR^{p}\to\RR^{N}$ be the parameter-output mapping defined for every $\theta\in\RR^p$ as $F_L(\theta)=[f_L(x_1),\ldots,f_L(x_N)]^T.$
% Let the Jacobian matrix $J\in\RR^{N\times p}$ be defined as in \eqref{eq:Jac}.
% Let the NTK Gram matrix $\bar{K}^{L}\in\RR^{N\times N}$ be defined as in \eqref{eq:NTKgramdef}.
% We assume that all the partial derivatives in \eqref{eq:Jac}-\eqref{eq:NTKgramdef} 
% are computed by the standard back-propagation  with the convention that $\sigma'(0)=0$.

\section{Limiting NTK with All Wide Layers}\label{sec:NTK_infinite}

This section provides tight bounds on the smallest eigenvalue of the {\em limiting} NTK matrix $K^{(L)}\in\RR^{N\times N}$ from \eqref{eq:plim}.
% In this section, we study the smallest eigenvalue of $K^{(L)}.$
As shown in \citep{JacotEtc2018}, one can compute this matrix recursively as follows, for all $l\in[2,L]$:
    \begin{equation}\label{eq:limNTK}
    \begin{split}
	&K^{(1)}_{ij}=G^{(1)},\\
	&K^{(l)}_{ij}=K^{(l-1)}_{ij}\, \dot{G}^{(l)}_{ij} + G^{(l)}_{ij},\\
	&\dot{G}^{(l)}_{ij} = 2\, \E_{(u,v)\distas{}\mathcal{N}(0, A^{(l)}_{ij})} [\sigma'(u)\sigma'(v)] , 
    \end{split}
    \end{equation}
where the matrices $G^{(l)}\in\RR^{N\times N}$ and $A^{(l)}_{ij}\in\RR^{2\times 2}$ are given by, for all $l\in[2,L]$,
    \begin{align}\label{eq:limNTK2}
    \begin{split}
	&G^{(1)}_{ij}=\inner{x_i, x_j},\\
	&A^{(l)}_{ij} = \begin{bmatrix}G^{(l-1)}_{ii} & G^{(l-1)}_{ij}\\G^{(l-1)}_{ji} & G^{(l-1)}_{jj}\end{bmatrix},\\
	&G^{(l)}_{ij} = 2\, \E_{(u,v)\distas{}\mathcal{N}(0, A^{(l)}_{ij})} [\sigma(u)\sigma(v)] , 
    \end{split}
    \end{align}
% Note that the NTK matrix $\bar{K}^L$ is well-defined for any layer widths,
% whereas $K^{(L)}$ only exists in the infinite width limit.
% Moreover, one has that $\bar{K}^L\to K^{(L)}$ as the $n_l$'s tend to infinity \cite{AroraEtal2019,JacotEtc2018}.

In order to prove our main result of this section,  we first need to rewrite the entry-wise formula of the NTK \eqref{eq:limNTK} in a more compact form. In particular, the following lemma provides a helpful characterization of the limiting NTK matrix.

\begin{lemma}\label{lem:limNTK_matform}
    The following holds for the matrices \eqref{eq:limNTK}-\eqref{eq:limNTK2}:
    \begin{align}
	&G^{(1)}=XX^T,\nonumber\\
	&G^{(2)} = 2\,\E_{w\distas{}\mathcal{N}(0,\, \Id_d)} \left[\sigma(Xw)\sigma(Xw)^T\right],\nonumber\\
	&G^{(l)} = 2\,\E_{w\distas{}\mathcal{N}(0,\, \Id_N)}\hspace{-.2em} \left[\sigma\left(\hspace{-.2em}\sqrt{G^{(l-1)}}\,w\right) \sigma\left(\hspace{-.2em}\sqrt{G^{(l-1)}}\,w\right)^T\right]\hspace{-.2em},\nonumber\\
	&\textrm{for } l\in[3,L].\label{eq:limGlnew}
    \end{align}
    \begin{align}
	&K^{(1)}=G^{(1)},\nonumber\\
	&K^{(l)} = K^{(l-1)} \circ \dot{G}^{(l)} + G^{(l)}, \quad\forall\,l\in[2,L],\label{eq:Kl_recurse} \\
	&\dot{G}^{(l)}=2\,\E_{w\distas{}\mathcal{N}(0,\, \Id_N)}\hspace{-.3em} \left[\sigma'\left(\hspace{-.3em}\sqrt{G^{(l-1)}}\,w\right)\hspace{-.2em} \sigma'\left(\hspace{-.3em}\sqrt{G^{(l-1)}}\,w\right)^T\right] \hspace{-.3em}\nonumber,\\
	&\textrm{for } l\in[2,L] \nonumber.
    \end{align}
    Moreover, we have
    \vspace{-5pt}
    \begin{align}\label{eq:KL_unroll}
	K^{(L)} = G^{(L)}\!+\!\sum_{l=1}^{L-1} G^{(l)} \circ \dot{G}^{(l+1)}\circ\ldots\circ\dot{G}^{(L)}.
    \end{align}
\end{lemma}
        \vspace{-5pt}
\begin{proof}
    Fix $l\in[2,L]$, and let $B=\sqrt{G^{(l-1)}}.$ 
    Then, 
%     one has $B=B^T$ and $B^2=G^{(l-1)}.$ Furthermore, 
    the equation \eqref{eq:limGlnew} can be rewritten as
%    It follows that
    \begin{align*}
	G^{(l)}_{ij}
	=2\,\E_{w\distas{}\mathcal{N}(0,\, \Id_N)} \left[\sigma\left(\inner{B_{i:},w}\right) \sigma\left(\inner{B_{j:},w}\right)\right].
    \end{align*}
    Let $u=\inner{B_{i:},w}$ and $v=\inner{B_{j:},w}.$
    Then, one has $(u,v)\distas{}\mathcal{N}\left(0,\begin{bmatrix}G^{(l-1)}_{ii} & G^{(l-1)}_{ij}\\G^{(l-1)}_{ji} & G^{(l-1)}_{jj}\end{bmatrix}\right),$ which suffices to prove the expressions for $G^{(l)}$.
%     Thus, the definition coincides with \eqref{eq:limNTK} for $G^{(l)}_{ij}.$ 
    A similar argument applies to $\dot{G}^{(l)}.$
    The equation \eqref{eq:KL_unroll} is obtained by unrolling \eqref{eq:Kl_recurse}.
\end{proof}
We are now ready to state the main result of this section. For space reason, a proof sketch is given below, and the full proof is deferred to Appendix \ref{app:prooflim}.
\begin{theorem}[Smallest eigenvalue of limiting NTK]\label{thm:limiting_NTK}
    Let $\Set{x_i}_{i=1}^{N}$ be a set of i.i.d.\ data points from $P_X$, where $P_X$ has zero mean and satisfies the Assumptions \ref{ass:data_dist} and \ref{ass:data_dist2}. 
    Let $K^{(L)}$ be the limiting NTK recursively defined in \eqref{eq:limNTK}. 
    Then, for any even integer constant $r\ge 2$, 
    we have w.p.\ at least $ 1 - Ne^{-\bigOmg{d}} - N^2e^{-\bigOmg{dN^{-2/(r-0.5)}}}$ that
        \vspace{-5pt}
    \begin{align}\label{eq:scalinginf}
	L\mathcal{O}(d)  
	\geq \evmin{K^{(L)}} \geq
	\mu_r(\sigma)^2\; \Omega(d), 
    \end{align}
    where $\mu_r(\sigma)$ is the $r$-th Hermite coefficient of the ReLU function given by \eqref{eq:HermiteReLU}.   
\end{theorem}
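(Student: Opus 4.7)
The strategy is to exploit the decomposition
\[
K^{(L)} = G^{(L)} + \sum_{l=1}^{L-1} G^{(l)} \circ \dot G^{(l+1)} \circ \cdots \circ \dot G^{(L)}
\]
from Lemma \ref{lem:limNTK_matform}. By Schur's product theorem every summand is PSD, so in particular $K^{(L)} \succeq G^{(L)}$. I would first establish the scaling invariant $G^{(l)}_{ii}=\|x_i\|^2$ for all $l \ge 1$ (by induction on $l$, using ReLU positive homogeneity and $\E_{Z \sim \Nc(0,1)}[\sigma(Z)^2]=1/2$), together with $\dot G^{(l)}_{ii}=1$. Combining this with the norm concentration $\|x_i\|^2 = \Theta(d)$ for all $i$ (obtained from Assumption \ref{ass:data_dist} and Assumption \ref{ass:data_dist2} applied to the $1$-Lipschitz map $x \mapsto \|x\|$, costing the failure term $Ne^{-\Omega(d)}$), the upper bound is immediate:
\[
\evmin{K^{(L)}} \;\le\; \tr(K^{(L)})/N \;=\; L\cdot \mathcal{O}(d).
\]

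For the lower bound, the crux is a \emph{matrix Hermite expansion} of $G^{(l)}$: using ReLU homogeneity $\sigma(\sqrt{a}\,U)=\sqrt{a}\,\sigma(U)$ to standardize the jointly Gaussian pair in the definition of $G^{(l)}_{ij}$ and expanding $\sigma$ in the orthonormal Hermite basis, one obtains
\[
G^{(l)} \;=\; 2\sum_{s \ge 0}\mu_s(\sigma)^2\, D^{(s)}_{l-1}\,(G^{(l-1)})^{\circ s}\, D^{(s)}_{l-1},\qquad (D^{(s)}_{l-1})_{ii}=(G^{(l-1)}_{ii})^{(1-s)/2}.
\]
Every summand is PSD, so keeping a single even $s\ge 2$ yields a PSD lower bound. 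I would apply this identity at layer $l=2$ with the general exponent $r$ from the theorem, and propagate through the higher layers $l=3,\dots,L$ with the milder choice $s=2$. At the base, $G^{(1)}=XX^T$ has diagonal $\|x_i\|^{2r}=\Theta(d^r)$ after the $r$-th Hadamard power; a Gershgorin-type estimate then gives $\evmin{(XX^T)^{\circ r}} \ge \Omega(d^r)$ as soon as the off-diagonals $|\langle x_i,x_j\rangle|^r$ are $\mathcal{O}(d^r/N)$ uniformly in $i\ne j$ (handled below), and combining with $\min_i (D^{(r)}_{1,ii})^2 = \Omega(d^{1-r})$ produces $\evmin{G^{(2)}} \ge \Omega(\mu_r(\sigma)^2 d)$. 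For the recursive step at $s=2$, a Fiedler-type inequality $\evmin{(G^{(l-1)})^{\circ 2}} \ge \min_i G^{(l-1)}_{ii} \cdot \evmin{G^{(l-1)}}$ together with the diagonal invariant and $\min_i (D^{(2)}_{l-1,ii})^2 = \Omega(1/d)$ yields $\evmin{G^{(l)}} \ge \Omega(\mu_2(\sigma)^2)\,\evmin{G^{(l-1)}}$. Since $L$ is a fixed constant, iterating absorbs the factor $\mu_2(\sigma)^{2(L-2)}$ into the implicit constant and concludes $\evmin{K^{(L)}} \ge \evmin{G^{(L)}} \ge \mu_r(\sigma)^2 \cdot \Omega(d)$.

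The main obstacle is the uniform off-diagonal concentration needed for the base-case Gershgorin step. Because $P_X$ has zero mean, for each fixed $x_j$ the map $x\mapsto \langle x,x_j\rangle$ is $\|x_j\|$-Lipschitz with zero expectation, so Assumption \ref{ass:data_dist2} gives the subgaussian tail $\PP(|\langle x_i,x_j\rangle|>t) \le 2\exp(-\Omega(t^2/\|x_j\|^2))$; conditioning on the event $\|x_j\|^2 = \Theta(d)$ and choosing $t$ so that $t^r \asymp d^r/N$, a union bound over the $N^2$ pairs produces precisely the failure term $N^2 \exp(-\Omega(dN^{-2/(r-0.5)}))$ in the theorem (the slight shift from $-2/r$ to $-2/(r-0.5)$ accommodates the simultaneous book-keeping of the norm-fluctuation event). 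The flexibility of choosing $r$ is essential: a larger $r$ weakens the required off-diagonal control and thus permits arbitrary polynomial $N$, at the cost of shrinking the Hermite prefactor $\mu_r(\sigma)^2$ in the lower bound.
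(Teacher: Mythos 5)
Your proposal is correct and shares most of its machinery with the paper's proof: the same decomposition $K^{(L)}=\sum_{l}G^{(l)}\circ\dot G^{(l+1)}\circ\cdots\circ\dot G^{(L)}$, the same Hermite-expansion-plus-homogeneity identity for $G^{(l)}$, the same Gershgorin estimate on $(X^{*r})(X^{*r})^T$, and the same probability bookkeeping via Assumption \ref{ass:data_dist2} (norm concentration costing $Ne^{-\Omega(d)}$, off-diagonal control with $t\asymp dN^{-1/(r-0.5)}$ costing $N^2e^{-\Omega(dN^{-2/(r-0.5)})}$). The one place you genuinely diverge is in how depth is handled. The paper applies the Schur bound $\evmin{P\circ Q}\geq\evmin{P}\min_iQ_{ii}$ to the whole telescoped sum, uses $\dot G^{(p)}_{ii}=1$ to get $\evmin{K^{(L)}}\geq\sum_l\evmin{G^{(l)}}$, drops all nonnegative terms except $l=2$, and bounds $\evmin{G^{(2)}}$ directly — one Hermite step, done. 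You instead keep only $K^{(L)}\succeq G^{(L)}$ and propagate a lower bound from $G^{(L)}$ back down to the data through $L-2$ applications of the $s=2$ Hermite term and the Schur/Fiedler inequality for Hadamard squares. Your recursion is sound (the diagonal invariant $G^{(l)}_{ii}=\|x_i\|_2^2$ makes the $D^{(2)}$ rescalings cancel up to constants), and for fixed $L$ it proves the stated bound. What it costs you is a multiplicative factor of $(2\mu_2(\sigma)^2)^{L-2}=(2\pi)^{-(L-2)}$ absorbed into the $\Omega(\cdot)$, i.e.\ a constant that decays exponentially in depth; the paper's shortcut yields a constant with no $L$-dependence at all, which is exactly the point made in the remark following the theorem. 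If you want your argument to support that remark as well, replace $K^{(L)}\succeq G^{(L)}$ by $K^{(L)}\succeq G^{(2)}\circ\dot G^{(3)}\circ\cdots\circ\dot G^{(L)}$ and use $\dot G^{(p)}_{ii}=1$, which eliminates the recursion entirely.
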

\begin{proof}
    Recall that for two p.s.d.\ matrices $P$ and $Q$, 
    it holds $\evmin{P\circ Q}\geq\evmin{P}\min_{i\in[n]}Q_{ii}$ \citep{schur1911bemerkungen}.
    By applying this inequality to the formula for the matrix $K_L$ in Lemma \ref{lem:limNTK_matform}, 
    and exploiting the fact that $\dot{G}^{(p)}_{ii}=1$ for all $p\in[2,L],i\in[N]$, 
    we obtain that $\evmin{K^{(L)}} \geq \sum_{l=1}^{L} \evmin{G^{(l)}}.$
    By using the Hermite expansion and homogeneity of ReLU, 
    one can bound $\evmin{G^{(l)}}$ in terms of $\evmin{\left((G^{(l-1)})^{*r}\right)\left((G^{(l-1)})^{*r}\right)^T}$, 
    for any integer $r>0$, where $(G^{(l-1)})^{*r}$ denotes the $r$-th Khatri Rao power of $G^{(l-1)}$. 
    Iterating this argument, it suffices to bound $\evmin{(X^{*r})(X^{*r})^T}.$ 
    This can be done via the Gershgorin circle theorem, and by using Assumptions \ref{ass:data_dist}-\ref{ass:data_dist2}.
\end{proof}
Let us make a few remarks about the result of Theorem \ref{thm:limiting_NTK}.
First, the probability can be made arbitrarily close to $1$ as long as $N$ does not grow super-polynomially in $d.$
% This comes from the fact that the ReLU activation has infinitely many non-zero Hermite coefficients,
% and thus the constant $r$ in the theorem can be chosen to be arbitrarily large.
Second, the $\Omega$ and $\mathcal{O}$ notations in \eqref{eq:scalinginf} do not hide any other dependencies on the depth $L.$
Finally, the proof of the theorem can be extended to other types of architectures, such as ResNet.
% The key idea there is still to rewrite the kernel matrix as in Lemma \ref{lem:limNTK_matform} and then apply a Hermite expansion to reduce the problem to bounding some quantity of the data matrix.

As mentioned in the introduction, non-trivial lower bounds on the smallest eigenvalue of the NTK have been used 
as a key assumption for proving optimization and generalization results in many previous works,
see e.g. \citep{arora2019fine,ChenEtal2020, DuEtal2018_ICLR} for shallow models and \citep{DuEtal2019, HuangYau2020} for deep models.
While quantitative lower bounds have been developed for shallow networks \citep{GhorbaniEtal2020},
this is the first time, to the best of our knowledge, that these bounds are proved for deep ReLU models.

% So far, Theorem \ref{thm:limiting_NTK} is concerned with the infinite width limit.
For finite-width networks, when all the layer widths are sufficiently large, one would expect that, at initialization,
the smallest eigenvalue of the NTK matrix \eqref{eq:NTKgramdef} has a scaling similar to that given by Theorem \ref{thm:limiting_NTK}. 
A quantitative result can be obtained whenever the convergence rates of $\bar{K}^{(L)}$ to $K^{(L)}$  is available.
For instance, by using Theorem 3.1 of \citep{AroraEtal2019}, one has that, for $(W_l)_{ij}\distas{}\mathcal{N}(0,1),$
        \vspace{-3pt}
\begin{align}
    \abs{\left(\prod_{l=1}^{L-1}\frac{2}{n_l}\right) \bar{K}^{(L)}_{ij} - K^{(L)}_{ij}}
    \leq (L+1)\epsilon,
\end{align}
provided that $\min_{l\in[L-1]} n_l=\bigOmg{\epsilon^{-4}{\textrm{poly}(L)}}.$
By taking $\epsilon=(2(L+1)N)^{-1}\evmin{K^{(L)}},$ 
it follows that $\norm{\left(\prod_{l=1}^{L}\frac{2}{n_l}\right) \bar{K}^{(L)}-K^{(L)}}_{F}\leq\evmin{K^{(L)}}/2$, and thus
        \vspace{-3pt}
\begin{align}
    \evmin{\left(\prod_{l=1}^{L}\frac{2}{n_l}\right)\bar{K}^{(L)}}\in\left[\frac{1}{2}, \frac{3}{2}\right] \evmin{K^{(L)}} .
\end{align}
By applying Theorem \ref{thm:limiting_NTK}, one concludes that
        \vspace{-3pt}
\begin{align}
    \evmin{\bar{K}^{(L)}}=\bigTheta{d\prod_{l=1}^{L-1} n_l}
\end{align}
if $\min_{l\in[L-1]}n_l=\bigOmg{N^4}.$
% Note that $\Theta$ neglects the dependence on $L.$
This condition can be potentially improved if a better convergence rate of the NTK is available, e.g.\ plugging in the bounds of \citep{Sam2021} may give $\Omega(N^2)$.
Nevertheless, this still raises two questions: \emph{(i)} can one further relax the current conditions on layer widths? 
And \emph{(ii)} is it necessary to require all the layers to be wide 
to get a similar lower bound on the smallest eigenvalue? We address these questions in the next section.
%%%%%%%%%%%%%%%%%%%%%%%%%%%%%%%%%%%%%%%%%%%%%%%%%
\section{NTK Matrix with a Single Wide Layer}\label{sec:NTK_finite}
In this section, we provide bounds on the smallest eigenvalue of the empirical NTK matrix for networks of finite widths and fixed depth.
The networks we consider have a single wide layer (or more generally, any given subset of layers) with width linear in $N$ (up to logarithmic factors),
while all the remaining layers can have poly-logarithmic scalings.
Let us highlight that the position of the wide layer can be anywhere 
between the input and output layer of the network.
This setting is more challenging and closer to practice than the typical NTK one
where all the layers are often required to be very large in $N$.
Our main result of this section is stated below. Its proof is given in Section \ref{subsec:proofth}.
\begin{theorem}[Finite-width scaling of NTK eigenvalue] \label{thm:empirical_Jacobian}
    Consider an $L$-layer ReLU network \eqref{eq:def_feature_map}.
    Let $\Set{x_i}_{i=1}^{N}$ be a set of i.i.d.\ data points from $P_X$, 
    where $P_X$ satisfies the Assumptions \ref{ass:data_dist}-\ref{ass:data_dist2},
    and let $\bar{K}^{(L)}$ be the NTK Gram matrix, as defined in \eqref{eq:NTKgramdef}.
    Let the weights of the network be initialized as
    $[W_l]_{i,j}\distas{}\mathcal{N}(0,\beta_l^2)$, for all $l\in[L].$
    Fix any $\delta>0$ and any even integer $r\ge 2$. For $k\in[L-1]$, let $\xi_k$ be $1$ if the following condition holds:
    %     such that $N\leq (n_{k-1})^r.$
%    Let $\xi_k$ be an indicator variable defined for every  as 
        \vspace{-5pt}
    \begin{align}
	&n_k = \bigOmg{N\log(N) \log\Big(\frac{N}{\delta}\Big)},\\
	&\prod_{l=1}^{k-2} \log(n_l)=\littleO{\min_{l\in[0,k-1]} n_l},
    \end{align}
    and let $\xi_k$ be $0$ otherwise.
    Let $\mu_r(\sigma)$ be given by \eqref{eq:HermiteReLU}.
    Then,
          \vspace{-5pt}
  \begin{align}\label{eq:Jacobian_lowerbound}
%     \begin{split}
	\evmin{\bar{K}^{(L)}} 
	&\geq\sum_{k=2}^{L} \xi_{k-1}\; \mu_r(\sigma)^2\; 
	\bigOmg{d\prod_{l=1}^{L-1}n_l \prod_{\substack{l=1\\l\neq k}}^{L}\beta_l^2 } \nonumber\\
	&+ \evmin{XX^T} \bigOmg{\prod_{l=1}^{L-1}n_l \prod_{l=2}^{L}\beta_l^2 } 
%     \end{split}
    \end{align}
    w.p.\ at least
            \vspace{-5pt}
    \begin{align}\label{eq:prob}
	&1 \hspace{-.2em}-\hspace{-.2em} \delta \hspace{-.2em}-\hspace{-.2em} \sum_{k=1}^{L-1} \xi_k N^2 \bigexp{ \hspace{-.2em}- \bigOmg{ \frac{\min_{l\in[0,k-1]}n_l}{N^{2/(\revision{r/2}-0.1)} \prod_{l=1}^{k-2}\log(n_l)} } } \nonumber\\
	&- N \sum_{l=1}^{L-1} \bigexp{-\bigOmg{n_l}} - N \exp(-\bigOmg{d}) .
    \end{align}
    Moreover, we have that, w.p.\ at least $1 - \sum_{l=1}^{L-1} \bigexp{-\bigOmg{n_l}} - \exp(-\bigOmg{d})$,
             \vspace{-5pt}
   \begin{align}\label{eq:Jacobian_upperbound}
	\evmin{\bar{K}^{(L)}} 
	\leq\sum_{k=1}^{L}\,
	\bigO{d\prod_{l=1}^{L-1}n_l \prod_{\substack{l=1\\l\neq k}}^{L}\beta_l^2 }. 
% 	+ \bigO{d\prod_{l=1}^{L-1}n_l \prod_{l=2}^{L}\beta_l^2 } .
    \end{align}
\end{theorem}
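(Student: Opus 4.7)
The plan is to decompose $\bar K^{(L)} = JJ^T$ as a sum of $L$ positive-semidefinite Hadamard-product blocks, one per weight layer, and bound each block via Schur's product theorem. Using the chain rule for a ReLU net, the $i$-th row of the $l$-th Jacobian block $\partial F_L/\partial\vec(W_l)$ equals $f_{l-1}(x_i)\otimes b_l(x_i)$, where $b_L(x_i)\equiv 1$ and for $l<L$ the backpropagation vector is
\begin{equation*}
b_l(x_i)=\Sigma_l(x_i)\,W_{l+1}\,\Sigma_{l+1}(x_i)\,W_{l+2}\cdots W_L\in\RR^{n_l}.
\end{equation*}
Letting $B_l$ stack $b_l(x_i)^T$ as rows, the Kronecker factorisation yields
\begin{equation*}
\left[\frac{\partial F_L}{\partial\vec(W_l)}\right]\!\left[\frac{\partial F_L}{\partial\vec(W_l)}\right]^T=(F_{l-1}F_{l-1}^T)\circ (B_lB_l^T),
\end{equation*}
so summing over $l$ and applying Schur's inequality $\evmin{P\circ Q}\ge \evmin{P}\min_i Q_{ii}$ gives $\evmin{\bar K^{(L)}}\ge \sum_{l=1}^{L}\evmin{F_{l-1}F_{l-1}^T}\cdot\min_{i\in[N]}\norm{b_l(x_i)}_2^2$.

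For $l=1$, $F_0=X$ produces the $\evmin{XX^T}$ summand in \eqref{eq:Jacobian_lowerbound}. For $l=k\ge 2$ with $\xi_{k-1}=1$, I would invoke the smallest-singular-value bound for hidden feature matrices established in Section \ref{sec:svmin_Fk}, which yields $\evmin{F_{k-1}F_{k-1}^T}=\mu_r(\sigma)^2\,\bigOmg{d\prod_{j=1}^{k-1}n_j\beta_j^2}$ and absorbs the $\xi_k N^2\exp(\cdots)$ failure term in \eqref{eq:prob}; for $l$ with $\xi_{l-1}=0$ I simply drop the summand. The backpropagation norm $\norm{b_k(x_i)}_2^2$ is then controlled by conditioning on $W_1,\ldots,W_{k-1}$ and propagating a Gaussian concentration argument backward from $b_L=1$: at each backward step $l$, $W_l\,b_l$ concentrates around its conditional mean by a Hanson--Wright / Lipschitz-Gaussian estimate, while the diagonal mask $\Sigma_{l-1}$ preserves a constant fraction of its squared norm with probability $1-e^{-\bigOmg{n_{l-1}}}$ by a standard active-units counting argument. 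Iterating and union-bounding over $i\in[N]$ and the $L-1$ backward steps gives $\min_i\norm{b_k(x_i)}_2^2=\bigOmg{\prod_{j=k+1}^{L}n_{j-1}\beta_j^2}$ with the probability matching the $N\sum_l e^{-\bigOmg{n_l}}$ and $Ne^{-\bigOmg{d}}$ terms in \eqref{eq:prob}. Multiplying the two factors, the $2^{-(L-1)}$ produced by the ReLU halvings is absorbed into $\bigOmg{\cdot}$ since $L$ is constant, while $\prod_{j=1}^{k-1}n_j\cdot\prod_{j=k+1}^{L}n_{j-1}=\prod_{j=1}^{L-1}n_j$ and $\prod_{j\le k-1}\beta_j^2\cdot\prod_{j\ge k+1}\beta_j^2=\prod_{j\neq k}^{L}\beta_j^2$ reproduce exactly the scaling of \eqref{eq:Jacobian_lowerbound}.

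The upper bound \eqref{eq:Jacobian_upperbound} follows more easily from $\evmin{\bar K^{(L)}}\le \min_i\bar K^{(L)}_{ii}=\min_i\sum_{l=1}^{L}\norm{f_{l-1}(x_i)}_2^2\norm{b_l(x_i)}_2^2$ by plugging in the upper-tail counterparts of the same forward and backward concentration estimates; no Hermite factor arises because no eigenvalue lower bound is invoked. The main obstacle I anticipate is the clean probabilistic decoupling of the two Schur factors: although $F_{k-1}F_{k-1}^T$ depends only on $W_1,\ldots,W_{k-1}$ while $\norm{b_k(x_i)}_2^2$ depends on $W_k,\ldots,W_L$ and on activation patterns involving all the $W_j$'s, the conditioning must be organised so that Section \ref{sec:svmin_Fk}'s Hermite-based bound and the backward concentration bound act on disjoint blocks of randomness; the fact that the two factors ultimately draw their key randomness from disjoint weight sets is what makes a single union bound suffice to yield \eqref{eq:prob}.
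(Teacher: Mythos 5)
Your proposal follows essentially the same route as the paper: the identical decomposition $JJ^T=\sum_{l}(F_{l-1}F_{l-1}^T)\circ(B_lB_l^T)$, the same application of Schur's product theorem, the feature-matrix singular-value bound of Section \ref{sec:svmin_Fk} for the forward factors, and Hanson--Wright plus a symmetry/halving argument for the backward norms (the paper's Lemmas \ref{lem:norm_W_SIGMA_wL} and \ref{lem:frobnorm_W_SIGMA}), with the upper bound read off a diagonal entry of $JJ^T$. The only remark worth making is that no probabilistic decoupling of the two Schur factors is actually needed — Schur's inequality is deterministic, so one simply intersects the high-probability events for each factor, exactly as the paper does.
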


\begin{figure*}[t]
    \centering
    \subfloat{\includegraphics[width=.45\columnwidth]{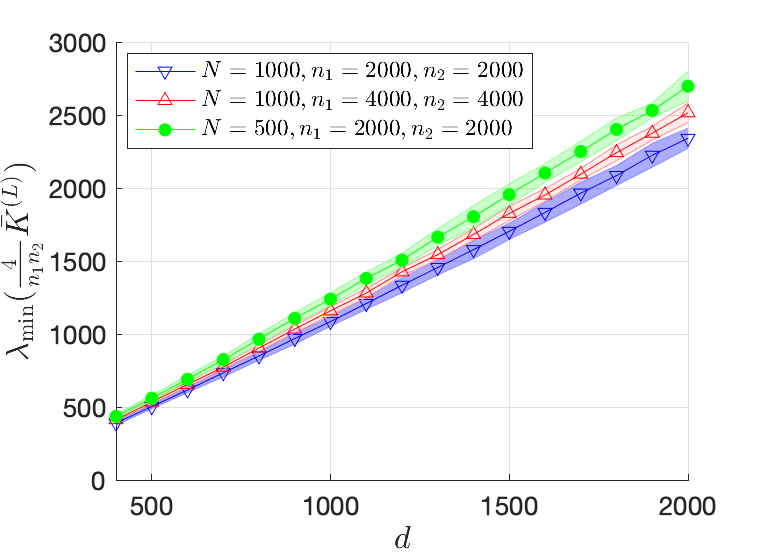}}\hspace{3em}
    \subfloat{\includegraphics[width=.45\columnwidth]{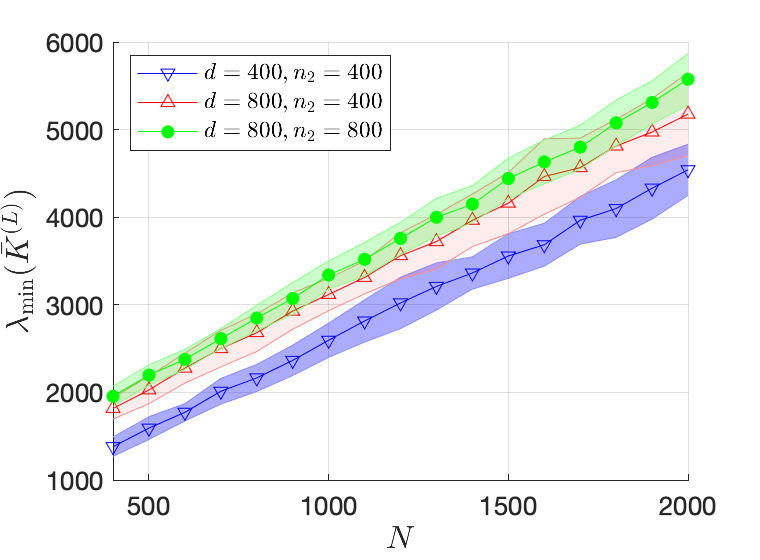}}
\caption{Scaling of the smallest eigenvalue of NTK matrices as a function of the input dimension $d$ (on the left) and of the number of samples $N$ (on the right). The theoretical results of Theorem \ref{thm:limiting_NTK} and \ref{thm:empirical_Jacobian} are in excellent agreement with the plot.}\label{fig:jac}

\end{figure*}

The two plots in Figure \ref{fig:jac} provide empirical evidence supporting our main results for $L=3$. We perform 50 Montecarlo trials, and report average and confidence interval at 1 standard deviation. On the left, we take $(W_l)_{i, j}\sim \mathcal N(0, 1)$, fix the parameters $(N, n_1, n_2)$, scale the NTK matrix by $\frac{4}{n_1 n_2}$ (see \eqref{eq:plim}), and plot $\lambda_{\rm min}(\frac{4}{n_1 n_2}\bar{K}^{(L)})$ as a function of $d$.  The three curves correspond to three different choices of $(N, n_1, n_2)$. As predicted by our Theorem \ref{thm:limiting_NTK}, the smallest eigenvalue of the NTK exhibits a linear dependence on $d$. On the right, we take $(W_l)_{i, j}\sim \mathcal N(0, 2/n_{l-1})$ (the popular He's initialization), fix $(d, n_2)$, set $n_1=8N$, and plot $\lambda_{\rm min}(\bar{K}^{(L)})$ as a function of $N$. The three curves correspond to three different choices of $(d, n_2)$. In this setting, there is a single wide layer and our Theorem \ref{thm:empirical_Jacobian} predicts that the smallest eigenvalue of the NTK scales linearly in the width of the wide layer (and hence linearly in $N$). This is in excellent agreement with the plot.

%We remark that the results of this paper concern NTK matrices \emph{at initialization}. Proving similar claims for NTK matrices during training is an exciting open direction.

The results of both Theorem \ref{thm:limiting_NTK} and \ref{thm:empirical_Jacobian} rely on considering a single term in the sum over layers and a fixed $r$. However, we expect the gap due to this fact to be rather small: \emph{(i)} the Hermite coefficients of the ReLU decay quite slowly (see \eqref{eq:HermiteReLU}), 
so the dependence of the bounds in $r$ is mild; \emph{(ii)} we are mainly interested in networks with a single wide layer, 
and in this setting the sum is well approximated by the leading term. 
Taking into account more terms of the sum or more $r$ is an interesting problem for future work.
Unlike Theorem \ref{thm:limiting_NTK}, we do not track the dependence on $L$ in Theorem \ref{thm:empirical_Jacobian}, and therefore the constants implicit in $\Omega$ and $\mathcal O$ may depend on $L$.
One can see that the lower bound \eqref{eq:Jacobian_lowerbound} and the upper bound \eqref{eq:Jacobian_upperbound} will have the same scaling, that is
            \vspace{-5pt}
\begin{align}
    \left(d\prod_{l=1}^{L-1}n_l\right)\left(\prod_{l=1}^{L}\beta_l^2\right) \left(\min_{l\in[L]}\beta_l\right)^{-2},
\end{align}
provided that there exists a layer $k\in[L-1]$ such that $\xi_k=1$
% $n_k = \bigOmg{N\log(N) \log\Big(\frac{N}{\delta}\Big)}$ 
and $\beta_{k+1}=\min_{l\in[L]}\beta_{l}.$
For instance, this holds if \emph{(i)} the network contains one wide hidden layer with $\tilde{\Omega}(N)$ neurons, 
and \emph{(ii)} it is initialized using the popular He's or LeCun's initialization 
(i.e., $\beta_{l}=c/\sqrt{n_{l-1}}$ for some constant $c$) \citep{XavierBengio2010,he2015delving,lecun2012efficient}.
In that case, the scaling of the lower bound \eqref{eq:Jacobian_lowerbound} is tight (up to a multiplicative constant).
Note also that the probability in \eqref{eq:prob} can be made arbitrarily close to $1$
provided that all the layers before the wide layer $k$ do not exhibit exponential bottlenecks in their widths.
% As a final remark, the log factors in Theorem \ref{thm:empirical_Jacobian}
% come from our upper bound on the Lipschitz constant of the feature maps which we derive in Section \ref{thm:lip_const_fk} 
% to prove the theorem.

% Theorem \ref{thm:empirical_Jacobian} implies that, in order to have an invertible NTK matrix at initialization
% (or equivalently, in order to get a non-vacuous lower bound from \eqref{eq:Jacobian_lowerbound}),
% a single wide layer is sufficient, i.e., $\exists\, k\in[L-1]$ such that $n_k=\bigOmg{N\textrm{polylog}(N)}.$ 
% An immediate consequence of this fact concerns the memorization capability of deep ReLU networks with a single wide layer. 
% A consequence of Theorem \ref{thm:empirical_Jacobian} concerns the memorization capacity of deep ReLU nets with a single wide layer. 
% It borrows some ideas from \cite{Andrea2020}, and should be seen as a proof of concept.

In a nutshell, Theorem \ref{thm:empirical_Jacobian} shows (in a quantitative way) that the spectrum of the NTK matrix is bounded away from zero. The requirements on the network architecture are mild:  \emph{(i)} existence of a wide layer with $\tilde{\Omega}(N)$ neurons, and \emph{(ii)} absence of exponential bottlenecks before the wide layer.
This last condition means that after the wide layer(s), the widths of the network need not have any relation with each other, thus can scale differently.
This is a more general setting than the one considered in \citep{QuynhICML2019,QuynhICML2017,QuynhMarco2020} 
where the network has a single wide layer, which is then followed by a pyramidal shape (i.e. the widths are non-increasing towards the output layer).
Here, the pyramidal constraint is not needed.

Let us make a few remarks about the case of shallow nets ($L=2$) as
tight lower bounds on $\evmin{\bar{K}^{(L)}}$ have been also obtained in several recent works,
albeit for a different setting than the one in Theorem \ref{thm:empirical_Jacobian}.
In particular, \citep{Andrea2020} consider the regime where $n_0=\Omega(n_1)$ and $n_0n_1=\Omega(N)$,
whereas we consider $n_1=\Omega(N)$ and have little restrictions on $n_0$. 
\citep{OymakMahdi2019} give bounds for a similar regime to ours, 
but a possible generalization of their proof to the case of multi-layer networks 
would require all the layers to be wide with at least $\tilde{\Omega}(N)$ neurons. 
In contrast, Theorem \ref{thm:empirical_Jacobian} essentially requires an {\em arbitrary} single wide layer of width $\Omega(N)$, while all the remaining layers
can have almost any widths (up to log factors).
To obtain this, the proof of Theorem \ref{thm:empirical_Jacobian} requires lower bounds on the smallest eigenvalue of the intermediate feature matrices $F_k$'s
for networks with a single wide layer, and the Lipschitz constant of the intermediate feature maps,
which are not studied in the previous works.

Our Theorem \ref{thm:empirical_Jacobian} immediately implies that such a class of networks
can fit $N$ distinct data points arbitrarily well, for any {\em real} labels. 
% In principle, any networks for which the lower bound in \eqref{eq:Jacobian_lowerbound} is positive will have this property.
The fact that the positive definiteness of the NTK implies a property on {\em memorization capacity} of neural nets 
has been already observed in \citep{Andrea2020}, albeit for a two-layer model. 
The following corollary provides a formal connection between the two for the case of deep nets, 
and it should be seen as a proof of concept. Its proof is given in Appendix \ref{app:zero_loss}. 
% For other results on memorization capacity which do not require the existence of a wide layer, we refer to \cite{bartlett2019nearly,bubeck2020network,ge2019mildly,vershynin2020memory,yun2019small}.
% (for the proof, see Appendix \ref{app:zero_loss}).
\begin{corollary}[Memorization capacity]\label{cor:zero_loss}
    Consider an $L$-layer ReLU network \eqref{eq:def_feature_map}.
    Let $\Set{x_i}_{i=1}^{N}$ be a set of i.i.d.\ data points from $P_X$, 
    where $P_X$ satisfies the Assumptions \ref{ass:data_dist}-\ref{ass:data_dist2}.
    Fix any $\delta, \delta'>0.$
    Assume that there exists a layer $k\in[L-1]$ such that
%     \begin{align*}
	$n_k = \bigOmg{N\log(N) \log\Big(\frac{N}{\delta}\Big)}$ and
	$\prod_{l=1}^{k-2} \log(n_l)=\littleO{\min_{l\in[0,k-1]} n_l}.$
%     \end{align*}
    Then, it holds
    \begin{align*}
	\forall\, Y,\; \forall\epsilon>0,\; \exists\,\theta: \quad\norm{F_L(\theta)-Y}_2\leq\epsilon
    \end{align*}
    w.p.\ at least
%     \begin{align}\label{eq:prob2}
	$ 1 - \delta - N^2 e^{ - \bigOmg{ \frac{\min_{l\in[0,k-1]}n_l}{N^{\delta'} \prod_{l=1}^{k-2}\log(n_l)} } } - N \sum_{l=1}^{L-1} e^{-\bigOmg{n_l}} - N e^{-\bigOmg{d}} $
%     \end{align}
    over the data.
\end{corollary}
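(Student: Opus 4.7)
The plan is to deduce the corollary from Theorem \ref{thm:empirical_Jacobian} via a standard gradient-flow argument: on the event that the smallest eigenvalue of $\bar K^{(L)}$ at initialization is bounded away from zero, the Polyak--\L{}ojasiewicz-type inequality \eqref{eq:GD} forces the squared loss to decay exponentially, and by running gradient flow long enough we reach an arbitrarily small loss, which supplies the desired parameter $\theta$.

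First, I would apply Theorem \ref{thm:empirical_Jacobian} to the wide layer $k$ appearing in the hypothesis of the corollary. Picking an even integer $r$ large enough so that $2/(r-0.1)\le \delta'$ (which is possible for any fixed $\delta'>0$), the theorem delivers a deterministic constant $\lambda_0>0$ such that, with the stated probability, $\evmin{\bar K^{(L)}(\theta_0)}\ge \lambda_0$. This is exactly the favorable event appearing in the corollary's probability bound: the union bound accounting for the $\xi_{k-1}$ term specializes to the single wide layer $k$, and the exponent $N^{2/(r-0.1)}$ is replaced by $N^{\delta'}$ thanks to the choice of $r$.

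Next, given an arbitrary target $Y\in\mathbb{R}^N$, define $\Phi(\theta)=\tfrac12\|F_L(\theta)-Y\|_2^2$ and consider the gradient flow $\dot\theta(t)=-\nabla\Phi(\theta(t))$ with $\theta(0)=\theta_0$. Equation \eqref{eq:GD} yields
\begin{equation*}
\dot\Phi(\theta(t))=-\|\nabla\Phi(\theta(t))\|_2^2\le -2\,\evmin{\bar K^{(L)}(\theta(t))}\,\Phi(\theta(t)).
\end{equation*}
If one can ensure that $\evmin{\bar K^{(L)}(\theta(t))}\ge \lambda_0/2$ along the trajectory, then $\Phi(\theta(t))\le \Phi(\theta_0)\exp(-\lambda_0 t)$; taking $t^\ast$ large enough so that $\Phi(\theta(t^\ast))\le \epsilon^2/2$ produces a parameter $\theta=\theta(t^\ast)$ with $\|F_L(\theta)-Y\|_2\le \epsilon$, proving the corollary on the favorable event.

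The remaining task, and the main obstacle, is the local stability of $\bar K^{(L)}$ along the trajectory. Concretely, I would invoke the Lipschitz bounds on the feature map $F_L$ and its Jacobian $J$ developed in Section \ref{sec:lip_const_fk} together with the singular value bounds of Section \ref{sec:svmin_Fk} to show that there is a quantitative radius $R>0$ such that $\evmin{\bar K^{(L)}(\theta)}\ge \lambda_0/2$ for every $\theta\in B(\theta_0,R)$. To certify that the trajectory does not leave this ball, I would use the standard bootstrap: while $\theta(t)\in B(\theta_0,R)$, the exponential decay of $\Phi$ combined with $\|\dot\theta(t)\|_2=\|\nabla\Phi(\theta(t))\|_2\le \sqrt{2\|\bar K^{(L)}(\theta(t))\|_{\op}\Phi(\theta(t))}$ yields a total length bound
\begin{equation*}
\int_0^\infty \|\dot\theta(s)\|_2\,ds \;\le\; \frac{C\sqrt{\Phi(\theta_0)}}{\lambda_0},
\end{equation*}
where $C$ depends only on an upper bound for $\|\bar K^{(L)}\|_{\op}$ in the ball (also controlled via Section \ref{sec:lip_const_fk}). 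Choosing $R$ larger than this length, which is finite because the probability event fixes $\theta_0$ and $\Phi(\theta_0)$ is a deterministic number depending on $Y$, closes the bootstrap. The technical heart is obtaining the Lipschitz modulus of $\theta\mapsto \bar K^{(L)}(\theta)$ with a constant that is mild enough for $R$ to exceed the length bound; this requires tracking how the activation patterns $\Sigma_l(x)$ and the backward-pass products change under perturbations of the weights, which is precisely what the feature-Lipschitz bounds of Section \ref{sec:lip_const_fk} are designed to handle.
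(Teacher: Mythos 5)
There is a genuine gap. Your route turns an existence statement into an optimization (gradient-flow) argument, and the step you yourself identify as ``the main obstacle'' --- a uniform lower bound $\evmin{\bar K^{(L)}(\theta)}\ge\lambda_0/2$ on a ball $B(\theta_0,R)$ --- is precisely what this paper does not (and says it cannot yet) provide. The discussion right after Theorem \ref{thm:empirical_Jacobian} states explicitly that the Jacobian of a ReLU network is \emph{not} Lipschitz with respect to the weights (activation patterns flip discontinuously as $\theta$ moves), and that tracking the NTK during training ``is not done in this paper'' and is left for future work. The Lipschitz bounds of Section \ref{sec:lip_const_fk} cannot fill this hole: they bound $\norm{g_k}_{\Lip}$ as a function of the \emph{input} $z\in\RR^d$ for fixed weights, not as a function of $\theta$. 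A further structural problem is the order of quantifiers: the corollary asserts the high-probability event first and then ``$\forall Y$''. In your bootstrap, the required radius $R$ must exceed $C\sqrt{\Phi(\theta_0)}/\lambda_0$, which grows without bound as $\norm{Y}$ grows; a uniform spectral lower bound on arbitrarily large balls around a fixed $\theta_0$ is false (e.g.\ the ball eventually contains $\theta=0$). So the proposal, as written, does not close.

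The paper's proof is much softer and purely local. Pick $r$ even with $r\ge 0.1+2/\delta'$ so that Theorem \ref{thm:empirical_Jacobian} gives $\evmin{JJ^T}>0$, hence $\rank(J(\theta))=N$, on a set $\Omega_1$ of weights of positive Lebesgue measure. One then shows that the weights for which some pre-activation vanishes on a data point form a null set, so one can choose $\theta_0\in\Omega_1$ at which the back-propagation Jacobian coincides with the true Jacobian and has full row rank $N$. Full row rank yields a direction $\theta'$ with $\big(\partial F_L/\partial\theta\big|_{\theta_0}\big)\theta'=Y$, and the directional derivative is the limit of difference quotients $h_\epsilon(x_i)=\epsilon^{-1}\big(f_L(\theta_0+\epsilon\theta',x_i)-f_L(\theta_0,x_i)\big)$, each of which is itself realizable by a ReLU network of the same depth (with doubled hidden widths). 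Taking $\epsilon$ small gives the approximate interpolant. No dynamics, and no control of the NTK away from $\theta_0$, is needed. If you want to pursue the gradient-flow route, you would need a genuinely new stability argument for $\evmin{\bar K^{(L)}}$ under weight perturbations, which is exactly the open problem the paper flags.
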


In words, Corollary \ref{cor:zero_loss} shows that if a deep ReLU network contains a wide layer of order $\tilde{\Omega}(N)$ neurons,
then regardless of the position of this wide layer, and regardless of the widths of the remaining layers (up to log factors),
the network can approximate $N$ data points (with real labels) within arbitrary precision.
Here, the network has $\tilde{\Omega}(N)$ total parameters, which is known to be (nearly) tight for memorization capacity.
However, we remark that this is not optimal in terms of layer widths.
In particular, several recent works \citep{bartlett2019nearly,ge2019mildly,vershynin2020memory,yun2019small} 
show that under some other mild conditions (without the existence of a wide layer as in Corollary \ref{cor:zero_loss}),
$\Omega(N)$ parameters suffice for the network to memorize $N$ data points.
Nevertheless, let us remark some differences in terms of the setting between these results and the one in Corollary \ref{cor:zero_loss}:
\emph{(i)} prior works consider networks with biases while Corollary \ref{cor:zero_loss} consider nets with no biases, and 
\emph{(ii)} prior works consider data with {\em bounded} labels while Corollary \ref{cor:zero_loss} applies to {\em arbitrary} real labels.
For shallow networks (i.e.\ $L=2$), stronger memorization results than Corollary \ref{cor:zero_loss} have been achieved.
For instance, \citep{bubeck2020network} show that width $\Omega(N/n_0)$ suffices for a two-layer ReLU net to memorize $N$ arbitrary data points.
\citep{Andrea2020} show a similar result under an additional assumption (i.e.\ $n_0=\Omega(n_1)$ and $n_0n_1=\Omega(N)$), albeit for more general class of activations.

\subsection{Proof of Theorem \ref{thm:empirical_Jacobian}.}\label{subsec:proofth}
% \ifpaper
% \begin{proof}
%     By chain rules and some standard manipulations (see Lemma 4.1 of \cite{QuynhMarco2020} for a closed-formed expression of the Jacobian of a network with multiple outputs), 
%     one obtains that
    By chain rules and some standard manipulations, we have
    \begin{align*}
	JJ^T = \sum_{k=0}^{L-1} F_{k} F_{k}^T \circ B_{k+1}B_{k+1}^T 
    \end{align*}
    where $B_k\in\RR^{N\times n_k}$ is a matrix whose $i$-th row is given by
    \begin{align*}
	(B_k)_{i:} = 
	\begin{cases}
	    \Sigma_{k}(x_i)\hspace{-.2em} \left(\prod_{l=k+1}^{L-1} W_l\Sigma_l(x_i)\right) \hspace{-.2em}W_L, &  \hspace{-.6em} k\in[L-2],\\
	    \Sigma_{L-1}(x_i)W_L, & \hspace{-.6em} k=L-1,\\
	    \frac{1}{\sqrt{N}}1_{N}, & \hspace{-.6em} k=L.
	\end{cases}
    \end{align*}

    For PSD matrices $P,Q\in\RR^{n\times n}$, it holds $\evmin{P\circ Q}\geq\evmin{P}\min_{i\in[n]}Q_{ii}$ \citep{schur1911bemerkungen}.
    Thus, 
    \begin{align}\label{eq:Jlb}
	\evmin{JJ^T} \geq
	\sum_{k=0}^{L-1} \evmin{F_k F_k^T}\; 
	\min\limits_{i\in[N]} \norm{(B_{k+1})_{i:}}_2^2.
    \end{align}
    We now bound every term on the RHS of \eqref{eq:Jlb}.
    Doing so requires a careful analysis of various quantities involving the hidden layers.
    This includes the smallest singular value of the feature matrices $F_k\in\RR^{N\times n_k}$,
    and the Lipschitz constant of the feature maps $f_k,g_k:\RR^d\to\RR^{n_k}.$ 
    As these results could be of independent interest, we put them separately in the following sections.
    In particular, our Theorem \ref{thm:bound_svmin_Fk} from the next section proves bounds for $\evmin{F_kF_k^T}.$
    To bound the norm of the rows of $B_{k+1}$, one can use the following lemma (for the proof, see Appendix \ref{app:norm_W_SIGMA_wL}).
    \begin{lemma}\label{lem:norm_W_SIGMA_wL}
	Fix any layer $k\in[L-2]$, and $x\distas{}P_X.$
	Then,
	\begin{align*}
	    \norm{\Sigma_{k+1}(x) \left(\prod_{l=k+2}^{L-1} W_l\Sigma_l(x)\right) W_L }_2^2\\
	    = \bigTheta{\beta_L^2\, n_{k+1} \prod_{l=k+2}^{L-1} n_l\beta_l^2},
	\end{align*}
	w.p.\ at least $1 - \sum_{l=1}^{L-1} \bigexp{-\bigOmg{n_l}} - \exp(-\bigOmg{d})$. %over $(W_l)_{l=1}^{L-1}$ and $x.$
	Here, we assume by convention that the product term $\prod_{l=k+2}^{L-1}(\cdot)$ is inactive for $k=L-2.$
    \end{lemma}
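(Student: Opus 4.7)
The plan is to analyze $\norm{v}_2^2$ for $v = \Sigma_{k+1}(x)\prod_{l=k+2}^{L-1} W_l \Sigma_l(x) W_L$ by propagating concentration estimates backwards through the network. Define $u_{L-1} = \Sigma_{L-1}(x) W_L \in \RR^{n_{L-1}}$ and, for $l = L-2, L-3, \ldots, k+1$, set $u_l = \Sigma_l(x) W_{l+1} u_{l+1} \in \RR^{n_l}$, so that $v = u_{k+1}$. I would prove by backward induction on $l$ that $\norm{u_l}_2^2 = \bigTheta{\beta_L^2\, n_l \prod_{m=l+1}^{L-1} n_m\beta_m^2}$ with high probability; the lemma corresponds to the case $l = k+1$. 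For the base case $l = L-1$, I would combine a $\chi^2$-concentration for $\norm{\Sigma_{L-1}(x) W_L}_2^2$ (conditional on $W_1, \ldots, W_{L-1}$) with a Chernoff bound showing that the active-neuron count $\tr(\Sigma_{L-1}(x))$ concentrates around $n_{L-1}/2$; the latter holds since each pre-activation $g_{L-1,j}(x)$ is a mean-zero Gaussian conditional on $f_{L-2}(x)$, so every neuron is active independently with probability $1/2$.

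For the inductive step at layer $l$, the main obstacle is that $u_{l+1}$ and $W_{l+1}$ are not independent: $u_{l+1}$ contains $\Sigma_{l+1}(x)$, which is determined by $g_{l+1}(x) = W_{l+1}^T f_l(x)$ and hence by $W_{l+1}$. I would decouple them via a Gaussian conditioning argument. Conditional on $f_l(x)$ and $g_{l+1}(x)$, one may write
\begin{align*}
W_{l+1} \;=\; \frac{f_l(x)\, g_{l+1}(x)^T}{\norm{f_l(x)}_2^2} \;+\; W_{l+1}^{\perp},
\end{align*}
where the columns of $W_{l+1}^{\perp}$ are Gaussians orthogonal to $f_l(x)$ and, conditional additionally on $W_{l+2}, \ldots, W_L$, are independent of $\Sigma_{l+1}(x)$ and hence of $u_{l+1}$. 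This yields
\begin{align*}
W_{l+1} u_{l+1} \;=\; \frac{\inner{g_{l+1}(x),\, u_{l+1}}}{\norm{f_l(x)}_2^2}\, f_l(x) \;+\; W_{l+1}^{\perp} u_{l+1}.
\end{align*}
The first rank-one term is a lower-order correction: combining $\norm{g_{l+1}(x)}_2^2 = \bigO{n_{l+1}\beta_{l+1}^2 \norm{f_l(x)}_2^2}$ with a decorrelation argument exploiting the fact that $u_{l+1}$ depends on $g_{l+1}$ only through the sign pattern $\Sigma_{l+1}$ gives $\inner{g_{l+1}(x), u_{l+1}}^2 / \norm{f_l(x)}_2^2 = \bigO{\beta_{l+1}^2 \norm{u_{l+1}}_2^2}$, which is a factor $n_l$ smaller than the main term. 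Meanwhile $\norm{W_{l+1}^{\perp} u_{l+1}}_2^2$ concentrates around $(n_l - 1)\beta_{l+1}^2 \norm{u_{l+1}}_2^2$ by conditional $\chi^2$-concentration. Applying $\Sigma_l(x)$, which is measurable with respect to $W_1, \ldots, W_l$ and therefore independent of $W_{l+1}^{\perp}$, zeros out about half of the coordinates of $W_{l+1} u_{l+1}$, giving $\norm{u_l}_2^2 = \bigTheta{n_l \beta_{l+1}^2 \norm{u_{l+1}}_2^2}$ after absorbing the constant $1/2$ into $\Theta(\cdot)$; this closes the induction.

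Chaining the per-layer estimates from $l = L-1$ down to $l = k+1$ and union-bounding across the $L-k-1$ layers yields the stated two-sided bound with aggregate failure probability $\sum_{l=1}^{L-1}\bigexp{-\bigOmg{n_l}} + \exp(-\bigOmg{d})$. The $d$-dependent term arises from Assumption \ref{ass:data_dist2}, which guarantees $\norm{x}_2^2 = \bigTheta{d}$ and then propagates to forward-pass concentration for $\norm{f_l(x)}_2^2$ at every layer; these forward-pass estimates are needed both to verify the Chernoff active-neuron bounds and to control the rank-one corrections at each inductive step. The hardest part will be making the Gaussian conditioning decoupling uniform across layers while carefully tracking how the rank-one correction terms at successive layers interact through the $u_{l+1}$ chain.
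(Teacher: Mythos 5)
Your route is genuinely different from the paper's. The paper never recurses backwards through the weights: it writes the vector as $BW_L$ with $B=\Sigma_{k+1}(x)\prod_{l=k+2}^{L-1}W_l\Sigma_l(x)$, observes that $B$ is independent of $W_L$, and applies Hanson--Wright to get $\norm{BW_L}_2^2\approx\beta_L^2\norm{B}_F^2$ with failure probability $e^{-\Omega(\norm{B}_F^2/\norm{B}_{\op}^2)}$. The work then splits into (i) a two-sided bound on $\norm{B}_F^2$ (Lemma \ref{lem:frobnorm_W_SIGMA}), proved by a \emph{forward} induction in which the new weight matrix enters as $\norm{S_p}_F^2=\sum_j\norm{S_{p-1}w_j}_2^2\,\sigma'(\inner{f_{p-1},w_j})$ --- the dependence between the two factors inside each summand is neutralized by the sign-flip symmetry $\sigma'(t)=1-\sigma'(-t)$, which gives the exact expectation, and Bernstein over the i.i.d.\ columns $w_j$ does the rest; and (ii) an $\epsilon$-net bound on $\norm{B}_{\op}$. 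This decomposition avoids precisely the coupling your backward recursion creates, where $W_{l+1}$ multiplies a vector $u_{l+1}$ that depends on \emph{all} of $W_{l+1}$ through $\Sigma_{l+1}(x)$.

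The gap in your proposal is the control of the rank-one correction. You assert $\inner{g_{l+1}(x),u_{l+1}}^2/\norm{f_l(x)}_2^2=\bigO{\beta_{l+1}^2\norm{u_{l+1}}_2^2}$ via an unspecified ``decorrelation argument,'' but Cauchy--Schwarz only yields $\bigO{n_{l+1}\beta_{l+1}^2\norm{u_{l+1}}_2^2}$, i.e.\ a term that is \emph{not} lower order than the main term $n_l\beta_{l+1}^2\norm{u_{l+1}}_2^2$ whenever $n_{l+1}\gg n_l$ --- exactly the bottleneck architectures the paper is designed to cover, so you cannot discard this regime. Moreover, writing $\inner{g_{l+1},u_{l+1}}=\inner{f_{l+1}(x),W_{l+2}u_{l+2}}$ shows that sharpening the bound requires concentration over $W_{l+2}$ against a vector $u_{l+2}$ that again depends on $W_{l+2}$: your recursion spawns a second, nested recursion for these inner products that the proposal never closes, and the ``interaction through the $u_{l+1}$ chain'' you flag at the end is exactly this unresolved obstacle. (The estimate is in fact true, and there is a clean way out you did not take: since $t\sigma'(t)=\sigma(t)$, the inner product telescopes, $\inner{g_{l+1},u_{l+1}}=\inner{f_{l+1},W_{l+2}u_{l+2}}=\inner{f_{l+2},W_{l+3}u_{l+3}}=\cdots=f_{L-1}(x)^TW_L=f_L(x)$, a single scalar whose square is $\bigO{\beta_L^2\,d\prod_{m=1}^{L-1}n_m\beta_m^2}$ by the forward-pass estimates, which is indeed a factor $n_l$ below the main term. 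Without this observation, or an equivalent substitute, the inductive step as written does not go through.)
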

   By plugging the bounds of Lemma \ref{lem:norm_W_SIGMA_wL} and  Theorem \ref{thm:bound_svmin_Fk} into \eqref{eq:Jlb}, the lower bound in \eqref{eq:Jacobian_lowerbound} immediately follows.
    For the upper bound, note that
    \begin{align}\label{eq:Jub}
	\evmin{JJ^T} 
	\leq (JJ^T)_{11}
	= \sum_{k=0}^{L-1} \norm{(F_k)_{1:}}_2^2 \norm{(B_{k+1})_{1:}}_2^2 .
    \end{align}
%     Here, we use the convention that if $p\geq L-2$, then the factor $\prod_{l=p+2}^{L-1}[\cdot]$ is inactive.
%     Similarly, $\Sigma_{p-1}(x_i)$ is inactive for $p=L-1.$
    The second term in the RHS of \eqref{eq:Jub} can be bounded by using Lemma \ref{lem:norm_W_SIGMA_wL} above. To bound the first term, we note that $(F_k)_{1:}=f_k(x_1)$ and that, for every $0\leq k\leq L-1$, 
    \begin{equation}\label{eq:Jub2}
    \norm{f_k(x_1)}_2^2 = \bigTheta{ d\prod_{l=1}^k n_l \beta_l^2 },
    \end{equation}
    w.p.\ at least $1-\sum_{l=1}^{k} \bigexp{-\bigOmg{n_l}} - \exp(-\bigOmg{d})$. This last statement follows from Lemma \ref{lem:norm_feature_map} in Appendix \ref{app:more_results}. By plugging \eqref{eq:Jub2} and the bound of Lemma \ref{lem:norm_W_SIGMA_wL} into \eqref{eq:Jub}, the upper bound in \eqref{eq:Jacobian_upperbound} immediately follows.

%    while the first term can be bounded via the following claim (for the proof, see Lemma \ref{lem:norm_feature_map} in Appendix \ref{app:more_results}).
%    Note $(F_k)_{1:}=f_k(x_1).$ 
%    \begin{claim}\label{claim:norm_feature_map}
%	For every $0\leq k\leq L-1$, $i\in[N]$, we have w.h.p. that $\norm{f_k(x_i)}_2^2 = \bigTheta{ d\prod_{l=1}^k n_l \beta_l^2 }.$
%    \end{claim}
%    Plugging these bounds into \eqref{eq:Jub} proves the upper bound in \eqref{eq:Jacobian_upperbound}.
% \end{proof}
% \fi
%%%%%%%%%%%%%%%%%%%%%%%%%%%%%%%%%%%%%%%%%%%%%%%%%%
\section{Smallest Singular Values of Feature Matrices}\label{sec:svmin_Fk}
As before, we assume throughout this section that $(W_l)_{ij}\distas{}\mathcal{N}(0,\beta_l^2)$ for $l\in[L],$ 
and the data points are i.i.d.\ from a distribution $P_X$ satisfying Assumption \ref{ass:data_dist} and \ref{ass:data_dist2}. 
Let us recall the definition of the feature matrix at some hidden layer $k$: $F_k=[f_k(x_1),\ldots,f_k(x_N)]^T\in\RR^{N\times n_k}.$
Our main result of this section is the following tight bound on the smallest singular values of these matrices.

\begin{theorem}[Smallest singular value of feature matrix]\label{thm:bound_svmin_Fk}
    Fix any $k\in[L-1]$ and any even integer constant $r\ge 2.$
    Let $\delta>0$ be given.
%     such that $N\leq (n_{k-1})^r.$
    Assume that
    \begin{align}
	&n_k = \bigOmg{N\log(N) \log\Big(\frac{N}{\delta}\Big)},\\
	&\prod_{l=1}^{k-2} \log(n_l)=\littleO{\min_{l\in[0,k-1]} n_l}.
    \end{align}
    Let $\mu_r(\sigma)$ be given by \eqref{eq:HermiteReLU}.
    Then, the smallest singular value of the feature matrix $F_k$ satisfies
    \begin{align*}
	\bigO{ \hspace{-.2em}d\prod_{l=1}^{k} n_l\beta_l^2 \hspace{-.2em}} 
	\ge \svmin{F_k}^2 
	\ge \mu_r(\sigma)^2\; \bigOmg{\hspace{-.2em} d\prod_{l=1}^{k} n_l\beta_l^2 \hspace{-.2em}}
    \end{align*}
    w.p.\ at least
    \begin{align*}
	&1 - \delta - N^2 \bigexp{ - \bigOmg{ \frac{\min_{l\in[0,k-1]}n_l}{N^{2/(\revision{r/2}-0.1)} \prod_{l=1}^{k-2}\log(n_l)} } }\\
	&- N \sum_{l=1}^{k-1} \bigexp{-\bigOmg{n_l}} - N \exp(-\bigOmg{d}) .
    \end{align*}
\end{theorem}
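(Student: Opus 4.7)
The plan is to prove the upper and lower bounds separately, with the upper bound being essentially immediate and the lower bound requiring a multi-step analysis that mirrors the proof sketch of Theorem \ref{thm:limiting_NTK}.

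For the upper bound, I would use the trivial inequality $\svmin{F_k}^2 \leq \min_{i\in[N]} \|(F_k)_{i:}\|_2^2 \leq \|f_k(x_1)\|_2^2$, and then apply the feature-norm concentration statement (Lemma \ref{lem:norm_feature_map}, already referenced in the excerpt) to conclude $\|f_k(x_1)\|_2^2 = \bigO{d \prod_{l=1}^{k} n_l \beta_l^2}$ with the claimed probability. The nontrivial content is entirely in the lower bound.

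For the lower bound, I would condition on $W_1, \ldots, W_{k-1}$ (hence on $F_{k-1}$) and view $F_k F_k^T = \sum_{l=1}^{n_k} \sigma(F_{k-1} w_l)\, \sigma(F_{k-1} w_l)^T$ as a sum of $n_k$ i.i.d.\ rank-one random matrices, where $w_l \distas{} \Nc(0,\beta_k^2 \Id_{n_{k-1}})$ is the $l$-th column of $W_k$. A matrix Chernoff/Bernstein inequality applied to this sum shows that, provided $n_k = \bigOmg{N\log(N)\log(N/\delta)}$, the random matrix concentrates around its conditional expectation:
\begin{equation*}
\evmin{F_k F_k^T} \;\geq\; \tfrac{1}{2}\, n_k\, \evmin{\E_{w\distas{}\Nc(0,\beta_k^2 \Id)}[\sigma(F_{k-1} w) \sigma(F_{k-1} w)^T \mid F_{k-1}]}.
\end{equation*}
For the expected kernel, I would expand ReLU in the Hermite basis and use ReLU's positive homogeneity to write $\sigma(\beta_k F_{k-1} w) = \beta_k D\, \sigma(\tilde{w})$-style expressions, where $D = \diag(\|f_{k-1}(x_i)\|_2)$ and $\tilde{w}$ is the standardized Gaussian with correlation matrix $R$. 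By the Schur product theorem all Hermite contributions are p.s.d., so one can isolate the $r$-th term:
\begin{equation*}
\E[\sigma(F_{k-1}w)\sigma(F_{k-1}w)^T\mid F_{k-1}] \;\succeq\; \mu_r(\sigma)^2\, \beta_k^2\, \tilde{D}^{-1} (F_{k-1}F_{k-1}^T)^{\circ r}\, \tilde{D}^{-1},
\end{equation*}
where $\tilde{D}_{ii} = \|f_{k-1}(x_i)\|_2^{r-1}$ and I used the Khatri–Rao identity $(F_{k-1}F_{k-1}^T)^{\circ r} = (F_{k-1}^{*r})(F_{k-1}^{*r})^T$. Combining with the already-established feature-norm bounds (Lemma \ref{lem:norm_feature_map}) to control $\tilde{D}$ from above reduces the problem to a lower bound on $\svmin{F_{k-1}^{*r}}^2$.

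The heart of the argument, and the main obstacle, is lower bounding $\svmin{F_{k-1}^{*r}}^2$ when the layers $1,\dots,k-1$ may be narrow. I would do this recursively: the Hadamard-$r$-th power $(F_{k-1}F_{k-1}^T)^{\circ r}$ has diagonal entries of order $(d\prod_{l=1}^{k-1} n_l\beta_l^2)^r$ (by the feature-norm lemma again), so by the Gershgorin circle theorem it suffices to show that for $i\neq j$ the off-diagonal entries $|\langle f_{k-1}(x_i), f_{k-1}(x_j)\rangle|^r$ are smaller than the diagonal by a factor $N$, i.e.\ the pairwise cosine similarities $|\rho_{ij}|$ at layer $k-1$ are $o(N^{-1/r})$. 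To obtain this, I would invoke the data-distribution Assumptions \ref{ass:data_dist}–\ref{ass:data_dist2} together with the Lipschitz-constant bounds on $f_{k-1}$ developed in Section \ref{sec:lip_const_fk}: Lipschitz concentration applied to $x\mapsto \langle f_{k-1}(x), f_{k-1}(x')\rangle$ (for a fixed $x'$) yields a sub-Gaussian tail with scale proportional to $\|f_{k-1}\|_{\Lip}\cdot\|f_{k-1}(x')\|_2$, and the product of these scales — after a union bound over the $N^2$ pairs — gives the required decay. This is precisely where the condition $\prod_{l=1}^{k-2}\log(n_l) = \littleO{\min_{l\in[0,k-1]} n_l}$ enters, controlling the Lipschitz constants layer by layer. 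Collecting the three sources of failure probability — the matrix Chernoff step at layer $k$ (contributing the $N^2\exp(-\Omega(\cdot))$ term), the feature-norm concentration at each intermediate layer (contributing $N\exp(-\Omega(n_l))$ and $N\exp(-\Omega(d))$), and the tail bound on $\rho_{ij}$ — yields the exact probability expression stated in the theorem.
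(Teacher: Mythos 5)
Your upper bound and the first two steps of your lower bound (the matrix Chernoff reduction to $\evmin{\E_w[\sigma(F_{k-1}w)\sigma(F_{k-1}w)^T]}$, and the Hermite expansion isolating the $r$-th coefficient to reduce to $\evmin{(F_{k-1}^{*r})(F_{k-1}^{*r})^T}$) match the paper's Lemmas \ref{lem:svmin_Fk_chernoff} and \ref{lem:bound_lambda_star_k}. The gap is in your final step: you apply Gershgorin to the \emph{uncentered} Gram matrix and claim that the pairwise similarities $|\langle f_{k-1}(x_i), f_{k-1}(x_j)\rangle|$ are $o(N^{-1/r})$ times the diagonal. This is false for ReLU features. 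Lipschitz concentration of $x_i \mapsto \langle f_{k-1}(x_i), f_{k-1}(x_j)\rangle$ gives concentration around its mean $\langle \E_x[f_{k-1}(x)], f_{k-1}(x_j)\rangle$, not around zero, and since ReLU outputs are entrywise non-negative this mean is large: by Lemma \ref{lem:norm_of_expected_feature_map}, $\norm{\E_x[f_{k-1}(x)]}_2^2 = \bigTheta{d\prod_{l=1}^{k-1} n_l\beta_l^2}$, i.e.\ the off-diagonal entries of $F_{k-1}F_{k-1}^T$ are of the \emph{same order} as the diagonal entries. (Contrast this with the input layer in Theorem \ref{thm:limiting_NTK}, where $P_X$ is assumed to have zero mean, so $\langle x_i, x_j\rangle$ genuinely concentrates around $0$; that property is destroyed after one ReLU.) Gershgorin applied to the uncentered $(F_{k-1}F_{k-1}^T)^{\circ r}$ therefore gives a vacuous bound.

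The paper repairs exactly this with the centering step, Lemma \ref{lem:Fk_vs_Fk_tilde}: writing $\tilde F_{k-1} = F_{k-1} - \E_X[F_{k-1}]$, one shows that w.h.p.\ $F_{k-1}F_{k-1}^T \succeq \tilde F_{k-1}\tilde F_{k-1}^T$ (the cross terms involving the mean can be absorbed because the diagonal fluctuations $\langle f_{k-1}(x_i),\mu\rangle - \norm{\mu}_2^2$ are, by Lipschitz concentration, smaller than $\norm{\mu}_2^2/2$), and hence $(F_{k-1}^{*r})(F_{k-1}^{*r})^T \succeq (\tilde F_{k-1}^{*r})(\tilde F_{k-1}^{*r})^T$ by the Schur product theorem. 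Only then does your Gershgorin-plus-Lipschitz-concentration argument go through (Lemma \ref{lem:svmin_Fk_tilde_Khatri_Rao}): the centered inner products $\langle \tilde f_{k-1}(x_i), \tilde f_{k-1}(x_j)\rangle$ have mean zero over $x_i$, so their tails are governed by $\norm{f_{k-1}}_{\Lip}\cdot\norm{\tilde f_{k-1}(x_j)}_2$, which is where the hypothesis $\prod_{l=1}^{k-2}\log(n_l) = \littleO{\min_{l\in[0,k-1]} n_l}$ and Theorem \ref{thm:lip_const_fk} enter, exactly as you anticipated. One also needs the lower bound $\norm{\tilde f_{k-1}(x_i)}_2^2 = \bigOmg{d\prod_{l=1}^{k-1}n_l\beta_l^2}$ on the \emph{centered} diagonal (Lemmas \ref{lem:square_fx} and \ref{lem:Ex_square_fx}), which is not an immediate consequence of Lemma \ref{lem:norm_feature_map} and requires its own induction. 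Without the centering step your argument fails; with it, the rest of your outline is sound.
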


\noindent {\bf Proof of Theorem \ref{thm:bound_svmin_Fk}.}
First of all, the conditions of Theorem \ref{thm:bound_svmin_Fk} imply that $n_k\geq N$,
which further implies $\svmin{F_k}^2=\evmin{F_kF_k^T}.$
To bound this quantity, we first relate it to the smallest eigenvalue of the expected Gram matrix, 
namely $\E[F_kF_k^T]$, where the expectation is taken over $W_k.$
Note that $\E[F_kF_k^T]=n_k\E[\sigma(F_{k-1}w)]\sigma(F_{k-1}w)^T]$, where $w$ has the same distribution as any column of $W_k.$ 
This is formalized in the following lemma, which is proved in Appendix \ref{app:lem:svmin_Fk_chernoff}. 
\begin{lemma}\label{lem:svmin_Fk_chernoff}
    Let us define
    \begin{align}
	\lambda=\evmin{ \E_{w\distas{}\mathcal{N}(0,\beta_k^2\,\Id_{n_{k-1}})} [\sigma(F_{k-1}w) \sigma(F_{k-1} w)^T] }.
    \end{align}
    Fix any $\delta>0.$
    Assume that
    \begin{align*}
	n_k\geq\max\left(N,\;\; c\, Q \max\Big(1, \log(4Q)\Big) \log\frac{N}{\delta} \right) ,
    \end{align*}
    where $c$ is an absolute constant, and $Q\bydef \frac{\beta_k^2\norm{F_{k-1}}_F^2}{\lambda}.$
    Then, we have w.p.\ at least $1-\delta$ over $W_k$ that
    \begin{align*}
	\svmin{F_k}^2\geq \frac{n_k\lambda}{4} .
    \end{align*}
\end{lemma}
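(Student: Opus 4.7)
The plan is to apply the matrix Chernoff bound for the minimum eigenvalue (\MatrixChernoff) to the column decomposition
\[
    F_k F_k^T \;=\; \sum_{j=1}^{n_k} X_j, \qquad X_j \;:=\; \sigma(F_{k-1} w_j)\sigma(F_{k-1} w_j)^T,
\]
where $w_j$ denotes the $j$-th column of $W_k$, so that the $X_j$ are i.i.d.\ PSD rank-one matrices with $\evmin{\E[X_j]} = \lambda$ and hence $\evmin{\sum_j \E[X_j]} = n_k\lambda$. A deterministic operator-norm bound on $X_j$ is not available, since $\norm{X_j}_{\op} = \norm{\sigma(F_{k-1}w_j)}_2^2 \leq \norm{F_{k-1}w_j}_2^2$ is a Gaussian quadratic form with mean $\beta_k^2\norm{F_{k-1}}_F^2$, hence only sub-exponential. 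I would handle this by truncation rather than by conditioning.

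Set
\[
    R \;:=\; C\,\beta_k^2\norm{F_{k-1}}_F^2 \,\max\!\Big(1,\,\log\tfrac{4\beta_k^2\norm{F_{k-1}}_F^2}{\lambda}\Big),
\]
for a large absolute constant $C$, and define $\tilde X_j := X_j\, \mathbf{1}\{\norm{X_j}_\op \leq R\}$; these are i.i.d., PSD, satisfy $\tilde X_j \preceq R\,\Id$ deterministically, and obey the domination $\tilde X_j \preceq X_j$. Consequently
\[
    \evmin{F_kF_k^T} \;=\; \evmin{\textstyle\sum_j X_j} \;\geq\; \evmin{\textstyle\sum_j \tilde X_j},
\]
so it suffices to lower bound the latter. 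For the expected matrix, using $X_j - \tilde X_j \preceq \norm{X_j}_\op\,\mathbf{1}\{\norm{X_j}_\op > R\}\cdot\Id$ together with the sub-exponential tail $\PP(\norm{X_j}_\op > \beta_k^2\norm{F_{k-1}}_F^2 + t) \lesssim \exp(-c'\, t/(\beta_k^2\norm{F_{k-1}}_F^2))$ (which follows from Hanson--Wright applied to the Gaussian quadratic $\norm{F_{k-1}w_j}_2^2$, after bounding $\norm{F_{k-1}^T F_{k-1}}_\op \leq \norm{F_{k-1}}_F^2$), integrating the tail from $R$ to $\infty$ gives $\norm{\E[X_j-\tilde X_j]}_\op \leq \lambda/2$ once $C$ is large enough. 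In particular, $\evmin{\sum_j \E[\tilde X_j]} \geq n_k\lambda/2$.

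Now matrix Chernoff, applied to the bounded i.i.d.\ summands $\tilde X_j \preceq R\,\Id$ with deviation factor $1/2$, yields
\[
    \PP\!\left(\evmin{\textstyle\sum_j \tilde X_j} \leq n_k\lambda/4\right) \;\leq\; N\,\exp\!\Big(-c\,\tfrac{n_k\lambda}{R}\Big).
\]
The assumed lower bound on $n_k$ is calibrated precisely so that $n_k\lambda/R \gtrsim \log(N/\delta)$, making this probability at most $\delta$; combining with $\svmin{F_k}^2 = \evmin{F_kF_k^T}$ (which uses $n_k \geq N$) then closes the argument.

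The main obstacle is the joint calibration of the truncation radius $R$: it must be large enough that the tail integral $\E[\norm{X_j}_\op \mathbf{1}\{\norm{X_j}_\op > R\}] \leq \lambda/2$ holds, which forces the $\log(4\beta_k^2\norm{F_{k-1}}_F^2/\lambda)$ factor; yet small enough that the Chernoff exponent $n_k\lambda/R$ still dominates $\log(N/\delta)$. The extra $\max(1,\log(\cdot))$ factor in the hypothesis on $n_k$ is exactly the price of this truncation step. A cruder bound $\norm{X_j}_\op \leq \norm{F_{k-1}}_\op^2 \norm{w_j}_2^2$ would instead pay a spurious factor of $n_{k-1}$ inside $R$ and spoil the (near) linear-in-$N$ width requirement that the lemma aims for.
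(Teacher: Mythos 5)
Your proposal is correct and follows essentially the same route as the paper: truncate the rank-one summands $\sigma(F_{k-1}w_j)\sigma(F_{k-1}w_j)^T$ at a radius calibrated with the $\log(4\beta_k^2\norm{F_{k-1}}_F^2/\lambda)$ factor, show the truncation shifts the expected Gram matrix by at most $\lambda/2$ in operator norm via the sub-exponential tail, and apply matrix Chernoff to the bounded summands. The only cosmetic difference is that you obtain the tail of $\norm{X_j}_{\op}$ from Hanson--Wright on the Gaussian quadratic form, whereas the paper uses the sub-gaussian norm of the Lipschitz map $w\mapsto\norm{\sigma(F_{k-1}w)}_2$; both yield the same bound.
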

From here, it suffices to upper bound $\norm{F_{k-1}}_F^2$ and lower bound $\lambda.$
The first quantity can be bounded by using a standard induction argument over $k$. In particular, from Lemma \ref{lem:norm_feature_map} in Appendix \ref{app:more_results}, it follows that $\norm{F_{k-1}}_F^2=\bigTheta{N d\prod_{l=1}^{k-1} n_l \beta_l^2 }$
    w.p.\ at least $1-\sum_{l=1}^{k-1} \bigexp{-\bigOmg{n_l}} - \exp(-\bigOmg{d})$.

In the remainder of this section, we show how to lower bound $\lambda$.
First, we relate $\lambda$ to the smallest eigenvalue of (row-wise) Khatri-Rao powers of $F_{k-1}$. This is obtained via the following lemma, which is proved in Appendix \ref{app:lem:bound_lambda_star_k}.
% Note that the next lemma is stated for arbitrary $k$, but $\lambda$ was defined in terms of $F_{k-1}$, where $k$ is specified in Theorem \ref{thm:bound_svmin_Fk}.
\begin{lemma}\label{lem:bound_lambda_star_k}
    Fix any $k\in[L-1]$ and any integer $r>0.$ 
%     such that $N\leq (n_{k})^r.$
%     Assume $\prod_{l=1}^{k-1} \log(n_l)=\littleO{\min_{l\in[0,k]} n_l}.$
    Then, we have
    \begin{align*}
	&\evmin{ \E_{w\distas{}\mathcal{N}(0, \beta_{k+1}^2\, \Id_{n_k})} [\sigma(F_{k}w)\sigma(F_{k}w)^T] } \geq\beta_{k+1}^2\, \mu_r(\sigma)^2 \frac{\evmin{(F_{k}^{*r})(F_{k}^{*r})^T}}{\max_{i\in [N]}\norm{(F_{k})_{i:}}_2^{2(r-1)}} .
    \end{align*}
%     w.p.\ at least
%     \begin{align*}
% 	&1 - N^2 \bigexp{ - \bigOmg{ \frac{\min_{l\in[0,k]}n_l}{N^{2/(r-0.1)} \prod_{l=1}^{k-1}\log(n_l)} } } - N \sum_{l=1}^{k} \bigexp{-\bigOmg{n_l}} - N \exp(-\bigOmg{d}) .
%     \end{align*}
%     Furthermore, we have
%     \begin{align*}
% 	\evmax{ \E [\sigma(F_{k}w)\sigma(F_{k}w)^T] } 
% 	= \bigO{ \beta_{k+1}^2 d \prod_{l=1}^{k} n_l \beta_l^2 },
%     \end{align*}
%     w.p.\ at least $1 - N\sum_{l=1}^{k} \bigexp{-\bigOmg{n_l}} - N\exp(-\bigOmg{d}).$
\end{lemma}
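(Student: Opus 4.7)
The plan is to expand each entry of the expected Gram matrix using the Hermite expansion of ReLU under Gaussian inputs, to recognize the resulting series as a sum of PSD matrices built from the row-wise Khatri-Rao powers of $F_k$ conjugated by a diagonal rescaling, and then to keep only the $r$-th term while applying the standard sandwich inequality for PSD matrices conjugated by positive diagonals.

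Write $M:=\E_{w\distas{}\mathcal{N}(0,\beta_{k+1}^2\Id_{n_k})}[\sigma(F_kw)\sigma(F_kw)^T]$, and for rows with $(F_k)_{i:}\neq 0$ set $u_i=(F_k)_{i:}/\norm{(F_k)_{i:}}_2$. Using the $1$-positive homogeneity of $\sigma$ together with the rescaling $w=\beta_{k+1}\tilde{w}$ for $\tilde{w}\distas{}\mathcal{N}(0,\Id_{n_k})$, I would first rewrite each entry as
\begin{equation*}
M_{ij}=\beta_{k+1}^2\norm{(F_k)_{i:}}_2\norm{(F_k)_{j:}}_2\,\E_{\tilde{w}}\!\left[\sigma(\inner{u_i,\tilde{w}})\sigma(\inner{u_j,\tilde{w}})\right].
\end{equation*}
Since $(\inner{u_i,\tilde{w}},\inner{u_j,\tilde{w}})$ is a centered Gaussian pair with unit variances and correlation $\inner{u_i,u_j}$, orthonormality of Hermite polynomials under the standard Gaussian measure yields
\begin{equation*}
\E_{\tilde{w}}\!\left[\sigma(\inner{u_i,\tilde{w}})\sigma(\inner{u_j,\tilde{w}})\right]=\sum_{s\geq 0}\mu_s(\sigma)^2\inner{u_i,u_j}^s.
\end{equation*}
Substituting back, using the identity $\inner{u_i,u_j}^s=\inner{(F_k)_{i:},(F_k)_{j:}}^s/(\norm{(F_k)_{i:}}_2^s\norm{(F_k)_{j:}}_2^s)$ together with the Khatri-Rao identity $[(F_k^{*s})(F_k^{*s})^T]_{ij}=\inner{(F_k)_{i:},(F_k)_{j:}}^s$, I obtain the matrix identity
\begin{equation*}
M=\beta_{k+1}^2\sum_{s\geq 0}\mu_s(\sigma)^2\,D^{-(s-1)}(F_k^{*s})(F_k^{*s})^T D^{-(s-1)},
\end{equation*}
where $D=\diag(\norm{(F_k)_{1:}}_2,\ldots,\norm{(F_k)_{N:}}_2)$, with the convention $D^0=\Id_N$.

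Each summand is positive semidefinite, so discarding all terms but the $s=r$ one gives $\evmin{M}\geq\beta_{k+1}^2\mu_r(\sigma)^2\evmin{D^{-(r-1)}(F_k^{*r})(F_k^{*r})^T D^{-(r-1)}}$. Applying the standard sandwich inequality $\evmin{CAC}\geq\evmin{A}(\min_i C_{ii})^2$ for PSD $A$ and positive diagonal $C$, with $C=D^{-(r-1)}$ and $\min_i C_{ii}=1/\max_i\norm{(F_k)_{i:}}_2^{r-1}$, yields the claimed bound. The edge case of a zero row forces both sides to $0$ and is vacuous. The only conceptually non-obvious step is recognizing that the Hermite expansion repackages cleanly into the stated sum of PSD matrices indexed by the Khatri-Rao powers; once that identification is made, the remainder is routine.
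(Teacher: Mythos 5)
Your proposal is correct and follows essentially the same route as the paper's proof: pull out the row norms via homogeneity of $\sigma$ and the rescaling of $w$, expand the resulting unit-variance Gaussian correlation kernel in Hermite polynomials to obtain the sum $\beta_{k+1}^2\sum_{s}\mu_s(\sigma)^2 D^{-(s-1)}(F_k^{*s})(F_k^{*s})^T D^{-(s-1)}$, drop all PSD summands except $s=r$, and bound the diagonal conjugation by $(\min_i D_{ii}^{-(r-1)})^2$. The only cosmetic difference is that the paper writes the expansion in terms of $\hat{F}_k=D^{-1}F_k$ and cites an external lemma for the Hermite identity, whereas you derive it entry-wise; the substance is identical.
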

\revision{Next, we relate the Khatri-Rao powers of $F_k$ to a certain matrix involving the centered features $\tilde{F}_k=F_k-\E_X[F_k].$} This is formalized in the following lemma, which is proved in Appendix \ref{app:lem:Fk_vs_Fk_tilde}.

\begin{lemma}[Centering features]\label{lem:Fk_vs_Fk_tilde}
    Fix any $k\in[L-1]$, and any integer $r>0$. \revision{Let $\mu=\E_x[f_k(x)]\in\RR^{n_{k}}$ and $\Lambda = \diag(F_k\mu-\|\mu\|_2^2 1_N),$ where $1_N\in\RR^N$ is the all-one vector. 
    Then, we have 
    \begin{equation}
	(F_{k}^{*r}) (F_{k}^{*r})^T=(F_kF_k^T)^{\circ r} \succeq \left(\tilde F_k\tilde F_k^T-\frac{\Lambda 1_N 1_N^T\Lambda}{\|\mu\|_2^2}\right)^{\circ r},
    \end{equation}
    where $M^{\circ r}$ denotes the $r$-th Hadamard power of the matrix $M$.}
\end{lemma}
The last step is to bound the smallest eigenvalue of the \revision{matrix  $\left(\tilde F_k\tilde F_k^T-\frac{\Lambda 1_N 1_N^T\Lambda}{\|\mu\|_2^2}\right)^{\circ r},$} as done in the following lemma which is proved in Appendix \ref{app:lem:svmin_Fk_tilde_Khatri_Rao}. 
\begin{lemma}[\revision{Hadamard} powers of centered features]\label{lem:svmin_Fk_tilde_Khatri_Rao}
    Fix any $k\in[L-1]$ and any \revision{even integer $r\ge 2.$ }
%     such that $N\leq (n_{k})^r.$
    Assume $\prod_{l=1}^{k-1} \log(n_l)=\littleO{\min_{l\in[0,k]} n_l}.$
    Then, we have
    \begin{equation}\label{eq:dresnew}
	\evmin{\left(\tilde F_k\tilde F_k^T-\frac{\Lambda 1_N 1_N^T\Lambda}{\|\mu\|_2^2}\right)^{\circ r}}
	= \bigTheta{ \left( d \prod_{l=1}^{k} n_l\beta_l^2 \right)^r } 
    \end{equation}
    w.p.\ at least
    \begin{align}\label{eq:problemma}
	 &1 - N^2 \bigexp{ - \bigOmg{ \frac{\min_{l\in[0,k]}n_l}{N^{2/(\revision{r/2}-0.1)} \prod_{l=1}^{k-1}\log(n_l)} } } - N \sum_{l=1}^{k} \bigexp{-\bigOmg{n_l}} .
    \end{align}
\end{lemma}
    Combining these lemmas, one gets the desired lower bound of $\svmin{F_k}^2$.
    For the upper bound: $\evmin{F_kF_k^T}\leq \min_{i\in[N]}\norm{(F_k)_{i:}}_2^2 = \bigO{d\prod_{l=1}^{k} n_l\beta_l^2},$
    where we use Lemma \ref{lem:norm_feature_map} in Appendix \ref{app:more_results}.

%%%%%%%%%%%%%%%%%%%%%%%%%%%%%%%%%%%%%%%%%%%%%%%%%%%
\section{Lipschitz Constant of Feature Maps}\label{sec:lip_const_fk}
The Lipschitz constants of the feature maps $g_k:\RR^d\to\RR^{n_k}$ are critical to several proofs of this paper,
including Lemma \ref{lem:Fk_vs_Fk_tilde} and Lemma \ref{lem:svmin_Fk_tilde_Khatri_Rao}.
A simple upper bound is given by
% \begin{align*}
    $\norm{g_k}_{\Lip}
    \leq\prod_{l=1}^{k} \norm{W_l}_{\op}$. 
    From standard bounds on the operator norm of Gaussian matrices (see \GaussOpNorm),
    one obtains that 
$\prod_{l=1}^{k} \norm{W_l}_{\op}$
% \end{align*}
 scales as $\prod_{l=1}^{k}\beta_l\max(\sqrt{n_{l-1}},\sqrt{n_l}).$
However, this simple estimate leads to restrictions on the network architectures 
for which our Theorem \ref{thm:empirical_Jacobian} holds. 
The product of many large random matrices is also studied in \citep{hanin2019products}, where it is shown that the logarithm of the $\ell_2$ norm between the Jacobian of deep networks and any fixed vector is asymptotically Gaussian. However, the findings of \citep{hanin2019products} are not applicable to our setting, which would require bounds that hold with probability exponentially close to $1$.

%Similarly, a related work by \cite{hanin2019products} studies the product of many large random matrices in the regime in which the depth $L$ grows with the width. Our bounds capture the correct dependence of the Lipschitz constant with respect to the layer widths (as required by Theorem \ref{thm:empirical_Jacobian}), but we regard $L$ as a constant.

%this is not bound is often pessimistic for deep nets.
% For instance, for a two-layer network with $n_1\gg n_0,n_2$, this bound reads $\norm{f_k}_{\Lip}=\bigO{n_1\beta_1\beta_2}$
% while a strictly better bound is $\norm{f_k}_{\Lip}=\bigO{\sqrt{n_1\max(n_0,n_2)}\beta_1\beta_2}.$

%In this section, we provide a tighter bound for such Lipschitz constants.
As usual, let $(W_l)_{ij}\distas{}\mathcal{N}(0,\beta_l^2)$ for $l\in[L].$
% Our starting point is the following lemma.
For every $z\in\RR^d$, denote its activation pattern up to layer $k$ by
$$\mathcal{A}_{1\to k}(z)=\left[\sign(g_{lj}(z))\right]_{l\in[k],j\in[n_l]}\in\Set{-1,0,1}^{\sum_{l=1}^{k}n_l},$$
where $\sign(g_{lj}(z))=1$ if $g_{lj}(z)>0$, $-1$ if $g_{lj}(z)<0$ and $0$ otherwise.
% As $g_k$ is Lipschitz continuous, Rademacher theorem shows that the set of inputs where $g_k$ is non-differentiable has measure zero.
For every differentiable point of $g_k$, we denote by $J(g_k)(z)\in\RR^{n_k\times d}$ the corresponding Jacobian matrix.

Our starting point is to relate the Lipschitz constant of $g_k$ with the operator norm of its Jacobian.
First, we have via the Rademacher theorem that $\norm{g_k}_{\Lip}=\sup_{z\in\RR^d\setminus\Omega_{g_k}}\norm{J(g_k)(z)}_{\op}$,
where $\Omega_{g_k}$ is the set of non-differentiable points of $g_k$ which has measure zero.
The issue here is that even if we restrict ourself to the ``good'' set $\RR^d\setminus\Omega_{g_k}$,
the formula of the Jacobian matrix as computed by the standard back-propagation algorithm\footnote{using a convention that $\sigma'(0)=0$} (which is also the object that we know how to handle analytically) may not represent the true Jacobian of $g_k.$
This happens, for example, when the input to any of the ReLU activations is $0$. 
The following lemma circumvents this problem by restricting the supremum to the set of inputs where the two Jacobian matrices agree. 
Its proof is deferred to Appendix \ref{app:lem:lip_const_fk}.
\begin{lemma}\label{lem:lip_const_fk}
    Fix any $k\in[L].$ 
    Then w.p.\ 1 over $(W_l)_{l=1}^{k-1}$, the following holds for all choices of $W_k$:
    \begin{align}\label{eq:lip_prop}
	\norm{g_k}_{\Lip} 
	= \max\limits_{z\in\RR^d:\; \mathcal{A}_{1\to k-1}(z)\in\Set{-1,+1}^{\sum_{l=1}^{k-1}n_l}}\; \norm{J(g_k)(z)}_{\op}.
    \end{align}
\end{lemma}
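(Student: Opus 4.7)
The plan is to exploit the continuous piecewise-linear structure of $g_k$. I would fix a realization of $(W_l)_{l=1}^{k-1}$ outside a Gaussian null set (one on which, for every pattern of earlier layers, each pre-activation $g_{l,j}$ is a non-constant affine function on the corresponding polyhedral cell). The complement $\mathcal{N} \subset \mathbb{R}^d$ of the set $\{z : \mathcal{A}_{1\to k-1}(z) \in \{-1,+1\}^{\sum_{l=1}^{k-1} n_l}\}$ is then a finite union of pieces of hyperplanes, and in particular has Lebesgue measure zero. On each connected component $R$ of $\mathbb{R}^d \setminus \mathcal{N}$, the activation pattern is constant, $g_k$ is affine (say $g_k(z) = A_R z + b_R$), and the back-propagation Jacobian $J(g_k)(z)$ equals the constant matrix $A_R$ for every $z \in R$ --- here the $\sigma'(0)=0$ convention is harmless because no pre-activation $g_{l,j}$ vanishes on $R$.

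Next I would prove the two inequalities in \eqref{eq:lip_prop}. For the direction $\max_R \norm{A_R}_{\op} \le \norm{g_k}_{\Lip}$, since $R$ is open, I pick $z_0 \in R$ and a unit vector $v$ attaining $\norm{A_R v} = \norm{A_R}_{\op}$; the segment $z_0 + tv$ then stays in $R$ for all small $t > 0$, and $g_k(z_0 + tv) - g_k(z_0) = t A_R v$ gives a difference quotient of norm $\norm{A_R}_{\op}$. For the reverse inequality I would use a path decomposition: for generic endpoints $x, y$, the segment $[x,y]$ is transverse to each of the finitely many hyperplanes that carve out the activation regions, hence meets $\mathcal{N}$ at finitely many points and splits into sub-segments $[p_j, p_{j+1}]$ each contained in some region $R_j$; affinity of $g_k$ on $R_j$ together with the triangle inequality yields $\norm{g_k(y) - g_k(x)} \le \sum_j \norm{A_{R_j}(p_{j+1} - p_j)} \le \bigl(\max_R \norm{A_R}_{\op}\bigr) \norm{y - x}$. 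Non-generic pairs $(x,y)$ are then handled by an arbitrarily small perturbation together with continuity of $g_k$.

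The main technical subtlety is ensuring that only finitely many activation patterns are realized, so that the supremum in \eqref{eq:lip_prop} is in fact a maximum and the path-decomposition argument is well posed; this follows from the standard finiteness bounds on the number of activation regions of deep ReLU networks, as in \cite{SerraEtal2018}, which the paper is about to invoke. Combining the two inequalities gives $\norm{g_k}_{\Lip} = \max_R \norm{A_R}_{\op}$, and since the components $R$ are exactly the connected pieces of $\{z : \mathcal{A}_{1\to k-1}(z) \in \{-1,+1\}^{\sum_{l=1}^{k-1} n_l}\}$ with $J(g_k)$ constantly equal to $A_R$ on each, the identity \eqref{eq:lip_prop} follows. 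Note that the whole argument is uniform in $W_k$: once the null set for $(W_l)_{l=1}^{k-1}$ is fixed, varying $W_k$ only changes the matrices $A_R$, not the regions $R$ themselves, which matches the statement of the lemma.
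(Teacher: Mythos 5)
Your overall architecture (constancy of the Jacobian on activation cells, a difference-quotient argument for the lower bound, and a segment-decomposition argument for the upper bound) matches the paper's proof. However, there is a genuine gap at the very first step: you assume one can discard a Gaussian null set of weights so that every pre-activation $g_{l,j}$ is a \emph{non-constant} affine function on each cell, and you conclude that the complement $\mathcal{N}$ of the good set $\Set{z:\mathcal{A}_{1\to k-1}(z)\in\Set{-1,+1}^{\sum_{l=1}^{k-1}n_l}}$ is a finite union of hyperplane pieces and hence has measure zero. This fails on an event of positive probability. If the dead cell $\Setbar{x\in\RR^d}{W_1^Tx<0 \text{ componentwise}}$ has nonempty interior --- which happens with positive probability for Gaussian $W_1$, e.g.\ whenever the columns of $W_1$ all lie in an open half-space --- then $f_1\equiv 0$ there, hence $g_{2,j}\equiv 0$ on an open set, so $\mathcal{N}$ contains an open set; no exclusion of a null set of weights can remove this degeneracy. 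The same phenomenon occurs at any layer $l$ where $f_l$ vanishes on a set of positive measure.

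The paper's proof is built precisely around this issue: it partitions $\RR^d$ into $G\cup B\cup\partial S\cup\interior(S)$ with $S=\Setbar{x\in\RR^d}{f_{k-1}(x)=0}$, proves that only $B\cup\partial S$ has measure zero (w.p.\ $1$ over the weights, using that every $x\in B$ has at least one \emph{active} neuron in each layer, so the polynomials defining $B$ are not identically zero), and then observes that on $\interior(S)$ the Jacobian vanishes, so the dead cells contribute $A_i=0$ and cannot affect $\max_i\norm{A_i}_{\op}$; the maximum is therefore still attained at a point whose pattern lies in $\Set{-1,+1}^{\sum_{l=1}^{k-1}n_l}$. To repair your argument you need this case distinction: for each cell with nonempty interior, either it meets the good set (and your reasoning applies), or its interior lies in $S$, where $A_R=0$ and the cell is irrelevant for the maximum. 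Your two inequalities and the observation that the argument is uniform in $W_k$ are otherwise correct and coincide with the paper's.
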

In words, Lemma \ref{lem:lip_const_fk} shows that the Lipschitz constant of $g_k$
is given by the maximum operator norm of its Jacobian over all the inputs $z$'s which fulfill $g_{lj}(z)\neq 0$ for all $l\in[k-1],j\in[n_l].$
This has two implications. First, $g_k$ is differentiable at every such input, 
and chain rules can be applied through all the layers to compute the true Jacobian.
In particular, we have for all such $z$'s that:
\begin{align}\label{eq:Jab_form}
    J(g_k)(z)=W_k^T\prod_{l=1}^{k-1}\Sigma_{k-l}(z) W_{k-l}^T.%, \quad\textrm{ where } \Sigma_p(z)=\diag([\sigma'(g_{pj})]_{j=1}^{n_p}), \quad\forall\,p\in[k-1] .
\end{align}
Second, one observes that $J(g_k)(z)=J(g_k)(z')$ for all $z,z'$ with $\mathcal{A}_{1\to k-1}(z)=\mathcal{A}_{1\to k-1}(z').$
Thus, the number of Jacobian matrices that one needs to bound in \eqref{eq:lip_prop}
is at most the number of activation patterns, which has been studied in \citep{HaninRolnick2019,MontufarEtal2014,SerraEtal2018}.
By exploiting these facts via a careful induction argument, we obtain the following result.
\begin{theorem}[Lipschitz constant of feature maps]\label{thm:lip_const_fk}
    Fix any $k\in[L-1].$
    Then, we have w.p.\ at least $1-\sum_{l=1}^{k}\bigexp{-\bigOmg{n_l}}$ %over $(W_l)_{l=1}^{k}$
    that
   \begin{align}\label{eq:bound_lip_gk}
	\norm{g_k}_{\Lip}^2
	=\bigO{ \frac{\prod_{l=0}^k n_l}{\min_{l\in[0,k]}n_l}\, \prod_{l=1}^{k-1}\log(n_l)\, \prod_{l=1}^{k}\beta_l^2} .
    \end{align}
\end{theorem}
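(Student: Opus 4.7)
The plan is to combine Lemma \ref{lem:lip_const_fk} with an induction on $k$ that controls both the Frobenius and operator norms of the Jacobian uniformly over realized activation patterns. By Lemma \ref{lem:lip_const_fk}, $\norm{g_k}_{\Lip}^2 = \max_{\mathcal{A}}\norm{J_{k,\mathcal{A}}}_{\op}^2$, where $\mathcal{A}$ ranges over realized activation patterns of layers $1,\ldots,k-1$ and $J_{k,\mathcal{A}}$ is the closed-form Jacobian \eqref{eq:Jab_form} at any $z$ realizing $\mathcal{A}$. By the Zaslavsky-type bounds on ReLU linear regions invoked in \cite{SerraEtal2018}, the number of such $\mathcal{A}$ is at most $\prod_{l=1}^{k-1}(en_l/d)^d$, which is polynomial in the widths.

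The induction tracks $\Phi_k := \max_{\mathcal{A}}\norm{J_{k,\mathcal{A}}}_F^2$ and $\Psi_k := \max_{\mathcal{A}}\norm{J_{k,\mathcal{A}}}_{\op}^2$, with the goal of bounding $\Psi_k$ by the target quantity $T_k := \bigl(\prod_{l=0}^k n_l / \min_{l\in[0,k]} n_l\bigr)\cdot \prod_{l=1}^{k-1}\log n_l \cdot \prod_{l=1}^k \beta_l^2$ and $\Phi_k$ by $\prod_{l=0}^k n_l \prod_{l=1}^k \beta_l^2$. At the base $k=1$, $J_1=W_1^T$ and standard Gaussian tail bounds give $\Phi_1\lesssim n_0 n_1 \beta_1^2$ and $\Psi_1\lesssim \beta_1^2 \max(n_0,n_1) = T_1$ with high probability.

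For the inductive step, fix $\mathcal{A}$ and write $J_{k,\mathcal{A}} = W_k^T \hat J$ with $\hat J = \Sigma_{k-1,\mathcal{A}} J_{k-1,\mathcal{A}}$. Since $W_k$ is independent of all earlier layers, I would condition on $\hat J$ and apply the following estimate: for a Gaussian matrix $G\in\RR^{n_k\times n_{k-1}}$ with i.i.d.\ $\mathcal{N}(0,\beta_k^2)$ entries and a fixed $A$,
\[
\norm{GA}_{\op}^2 \lesssim \beta_k^2\bigl(\norm{A}_F^2 + n_k \log n_k\cdot \norm{A}_{\op}^2\bigr),\qquad \norm{GA}_F^2 \lesssim n_k\beta_k^2\norm{A}_F^2,
\]
with high probability. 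The first bound follows by diagonalising $AA^T=U\Lambda U^T$, using rotational invariance of $G$ to reduce $GAA^TG^T$ to a sum of independent rank-one matrices $\sum_i \lambda_i g_i g_i^T$ with i.i.d.\ $g_i\sim \mathcal{N}(0,\beta_k^2 I_{n_k})$, and then applying matrix Bernstein. Substituting $A=\hat J$, using $\norm{\hat J}_F\leq\norm{J_{k-1,\mathcal{A}}}_F$ and $\norm{\hat J}_{\op}\leq\norm{J_{k-1,\mathcal{A}}}_{\op}$, and union-bounding over the polynomially many activation patterns, yields the recursions $\Phi_k \lesssim n_k\beta_k^2 \Phi_{k-1}$ and $\Psi_k \lesssim \beta_k^2 \Phi_{k-1} + n_k\log n_k\cdot \beta_k^2 \Psi_{k-1}$. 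The Frobenius recursion unrolls to the claimed bound on $\Phi_k$, and a short calculation shows that the $\Psi_k$ recursion preserves $T_k$: the $\beta_k^2\Phi_{k-1}$ term is absorbed since $T_k \geq \prod_{l=0}^{k-1} n_l\cdot \prod_{l=1}^k \beta_l^2$, and the $n_k\log n_k\cdot\beta_k^2 \Psi_{k-1}$ term inherits the target from $T_{k-1}$ after enlarging the layer-min from $[0,k-1]$ to $[0,k]$.

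I expect the main difficulty to be proving the sharp conditional operator norm estimate: replacing it with the naive $\norm{G}_{\op}^2\norm{A}_{\op}^2$ only recovers the simple product-of-operator-norms bound and loses the crucial factor of $\min_l n_l$, which is the whole point of the theorem (as emphasized in the discussion around $\prod_{l=1}^k\norm{W_l}_{\op}$ at the start of the section). A secondary challenge is the probability bookkeeping: the union bound over activation patterns costs $\log R = O(d\sum_l\log n_l)$, which must either be absorbed into the $e^{-\Omega(n_l)}$ tails in the failure probability of \eqref{eq:bound_lip_gk} (implicitly requiring each $n_l$ to dominate $d\log n_l$) or avoided via a sharper concentration argument that does not pay $\log R$ at every inductive step.
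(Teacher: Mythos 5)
Your core idea is a genuinely different route from the paper's: instead of the paper's composition-based induction, you propose to replace the crude $\norm{GA}_{\op} \leq \norm{G}_{\op}\norm{A}_{\op}$ by a sharp conditional estimate $\norm{GA}_{\op}^2 \lesssim \beta_k^2\bigl(\norm{A}_F^2 + n_k\norm{A}_{\op}^2\bigr)$ (Chevet's inequality or matrix Bernstein), and then track $\Phi_k$ and $\Psi_k$ jointly by induction on depth. That estimate is correct and would indeed recover the key improvement over the naive product-of-operator-norms bound. However, the issue you dismiss as ``secondary'' — the cost of union bounding over activation patterns — is in fact fatal to this route as written.

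The difficulty is that the union bound at step $k$ is over all activation patterns of layers $1,\ldots,k-1$, whose number satisfies $\log R = \Theta(d\sum_{l<k}\log n_l)$. The Gaussian deviation of your conditional operator-norm estimate is $\beta_k\norm{A}_{\op}\sqrt{s}$ with failure probability $e^{-\Omega(s)}$, so after the union bound you are forced to take $s=\Omega(\log R + n_k)$, turning your recursion into $\Psi_k \lesssim \beta_k^2\Phi_{k-1} + (n_k + d\sum_{l<k}\log n_l)\beta_k^2\Psi_{k-1}$. The theorem has no hypothesis controlling $d$ relative to the hidden widths, and this second coefficient can be polynomially larger than $n_k\log n_k$. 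Concretely take $k=2$, $n_0 = d = n^2$, $n_1 = n_2 = n$, all $\beta_l = \beta$. Then $T_2 = \frac{n_0 n_1 n_2}{\min_{l}n_l}\log n_1\,\beta^4 = n^3\log n\,\beta^4$, and even the naive bound $\norm{W_1}_{\op}^2\norm{W_2}_{\op}^2 \approx n^3\beta^4$ already hits this. But your recursion gives $\Psi_2 \lesssim \beta^2\Phi_1 + (n + n^2\log n)\beta^2\Psi_1 \approx n^4\log n\,\beta^4$, which overshoots $T_2$ by a factor of $n$ and is strictly worse than the naive bound. The union bound has turned your sharper estimate into a weaker one.

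The paper avoids this by inducting over all sub-networks $h_{p\to q}$ and choosing where to split adaptively: if the smallest width among $n_p,\ldots,n_q$ lies in the interior (or, if it sits at $q$, the second smallest lies in the interior), one simply splits there and composes Lipschitz constants — no net argument, no pattern count. The $\epsilon$-net plus pattern-union-bound machinery is invoked only in the remaining case, where the minimum (or the second minimum) sits at $p$. Then the sub-network's input dimension $n_p$ is (close to) $\min_l n_l$, so the $\epsilon$-net is over $\mathcal{S}^{n_p-1}$ and the number of activation patterns of that sub-network is $R = O(n_{\max}^{n_p})$, giving $\log R = O(n_p\log n_{\max})$ — independent of $d$, and folding into a single $\log n_{\max}$ factor in the resulting operator-norm bound. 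To repair your argument you would essentially need to rediscover this reorganization: pick the level at which to decompose so that both the sphere being netted and the pattern family being union-bounded have size controlled by $\min_l n_l$ rather than by $d$.
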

The idea of the proof is to bound the operator norm of the Jacobian matrix from \eqref{eq:Jab_form} for all inputs having a given activation pattern 
(via an $\epsilon$-net argument and concentration inequalities),
and then to do a union bound over all the possible patterns. The details are deferred to Appendix \ref{app:thm:lip_const_fk}. 

% % \Quynh{Discuss further:
% % \begin{enumerate}
% %     \item How is the bound better than the standard one?
% % \end{enumerate}
% }

% \section{Conclusions}\label{sec:concl}
% 
% We studied bounds on the smallest eigenvalues of the NTK matrices for deep ReLU networks.
% For infinitely wide networks, our bound on the limiting NTK scales as $\Theta(d)$, where $d$ reflects the scaling of the data.
% For finite-width networks, the smallest eigenvalue of the NTK scales as the product of all the layer widths,
% all the variances of layer weights and the sum of variance reciprocals.

% Our work paves the way towards several exciting research problems.
% There are two problems that we think can be interesting for future work:
% one is to remove the condition on the existence of a wide layer 
% from Theorem \ref{thm:empirical_Jacobian}, 
% and another is to study the convergence of SGD when only one layer, regardless of its position, is wide.
% The first question has been recently addressed in \cite{Andrea2020} for a two-layer model,
% and it would be interesting to see if such a result can be extended to multi-layer networks.

\section{Further Related Work}\label{sec:related_work}
The spectrum of various random matrices arising from deep learning models has been the subject of recent investigations.
Most of the existing results focus on the linear-width asymptotic regime,
where the widths of the various layers are linearly proportional. 
In particular, the spectrum of the conjugate kernel (CK) is studied in the single-layer case for Gaussian i.i.d.\ data \citep{pennington2017nonlinear}, 
for Gaussian mixtures \citep{liao2018spectrum}, for general training data \citep{louart2018random}, 
and for a model with an additive bias \citep{adlam2019random}.
The multi-layer case is tackled in \citep{benigni2019eigenvalue}. 
The Hessian matrix of a two-layer network can be decomposed into two pieces, 
one coming from the second derivatives and the other of the form $J^TJ$ (a.k.a. the Fisher information matrix).
This second term is studied in \citep{pennington2017geometry,pennington2018spectrum}. 
Note that this is different from the NTK matrix, given by $JJ^T$, as analyzed in this paper.
Typically, for an over-parameterized model, the Fisher information matrix is rank-deficient, whereas the NTK one is full-rank.
The work \citep{pennington2018emergence} uses tools from free probability to study the spectrum of the input-output Jacobian of the network.
Again, this is different from the parameter-output Jacobian considered in this paper.
% and in \cite{fan2020spectra}, the full spectrum of the NTK and the CK for deep networks is characterized via an iterated Marchenko-Pastur map. 
Generalization error has been also studied via the spectrum of suitable random matrices: for linear regression \citep{hastie2019surprises}, 
random feature models \citep{mei2019generalization}, random Fourier features \citep{liao2020random}, 
and most recently for a two-layer network \citep{Andrea2020}.
    
Generally speaking, the line of literature reviewed above has studied the spectrum of various random matrices 
related to neural networks. Our work is complementary
in the sense that it concerns the smallest eigenvalue of the NTK and the feature maps. 
We remark that obtaining an almost-sure convergence of the empirical spectral distribution of a random matrix in general
does not have any implications on the limit of its individual eigenvalues. 
% As mentioned in the introduction, the smallest eigenvalue of the matrices we consider is connected to memorization, training and generalization of neural networks.
The closest existing work is \citep{Andrea2020}, 
which focuses on a two-layer model and gives a lower bound on the smallest eigenvalue of the NTK matrix when the number of parameters of the network exceeds the number of training samples.

\section{Conclusions and Open Problems}

This paper provides tight bounds on the smallest eigenvalues of NTK matrices for deep ReLU networks. In the finite-width setting, our result holds for networks
with a single wide layer, regardless of its position, as long as the wide layer has roughly order of $N$ neurons. This gives hope that gradient descent methods will be successful in optimizing such architectures.
% The fact that the spectrum of the NTK matrix is bounded away from 0 also suggests that GD methods will be successful 
% under a suitable initialization. 
% Thus, Theorem \ref{thm:empirical_Jacobian} provides evidence that deep ReLU networks with a single wide layer can be optimized by gradient descent. 
However, we note that it is not possible to directly apply existing results in the literature such as \citep{ChizatEtc2019},
since the Jacobian matrix is not Lipschitz with respect to the weights. Furthermore, to get optimization guarantees, one often has to track the movement of the NTK-related quantities during the course of training,
which is not done in this paper. Providing rigorous convergence guarantees for deep ReLU networks 
with an {\em arbitrary} single wide layer of linear width is an exciting open problem. Other interesting extensions include the study of networks with biases and non-Gaussian initializations.

% While our approach employing the Hermite expansion of the activation could be extended to networks with biases, the generalization to non-Gaussian is likely to require new techniques. 

 \section*{Acknowledgements}
The authors would like to thank the anonymous reviewers for their helpful comments. MM was partially supported by the 2019 Lopez-Loreta Prize.
 QN and GM acknowledge support from the European Research Council (ERC)
 under the European Union’s Horizon 2020 research and innovation
 programme (grant agreement no 757983). \revision{MM would like to thank Simone Bombari, Adel Javanmard and Mahdi Soltanolkotabi for helpful discussions concerning the edit of Lemmas \ref{lem:Fk_vs_Fk_tilde} and \ref{lem:svmin_Fk_tilde_Khatri_Rao}.}
 
 \newpage

%%%%%%%%%%%%%%%%%%%%%%%%%%%%%%%%%%%%%%%%%%%%%%
\bibliography{regul}
\bibliographystyle{icml2021}
 
\onecolumn 
\appendix 
\section{Additional Notations}\label{sec:app}
Given a sub-exponential random variable $X$, let
% \begin{align*}
    $\|X\|_{\psi_1} = \inf \{ t>0 \,\,: \,\,\mathbb E[\exp(|X|/t)] \le 2 \}.$
% \end{align*} 
Similarly, for a sub-gaussian random variable, 
% \begin{align*}
    $\|X\|_{\psi_2} = \inf \{ t>0 \,\,: \,\,\mathbb E[\exp(X^2/t^2)] \le 2 \}.$

\section{Proof of Theorem \ref{thm:limiting_NTK}}\label{app:prooflim}
% \ifpaper
% \begin{proof}
    Let us first get some useful estimates from the data.
    By Assumptions \ref{ass:data_dist} and \ref{ass:data_dist2}, 
    we have $\norm{x_i}_2^{2}=\Theta(d)$ for all $i\in[N]$ w.p.\ $\geq 1-Ne^{-\bigOmg{d}}.$
    For a given pair $i\neq j$, let $x_j$ be fixed and $x_i$ be random, then $\inner{x_i,x_j}$ is Lipschitz continuous w.r.t. $x_i$, 
    where the Lipschitz constant is given by 
    $\norm{x_j}_2=\mathcal{O}(\sqrt{d}).$
    Thus, it follows from Assumption \ref{ass:data_dist2} that $\PP(\abs{\inner{x_i,x_j}} > t)\leq 2e^{-t^2/\bigO{d}}.$
    By picking $t=d N^{-1/(r-0.5)}$ and doing a union bound over all data pairs, we get
    $\max_{i\neq j}\abs{\inner{x_i,x_j}}^r\leq d N^{-1/(r-0.5)}$ w.p.\ at least $1-N^2e^{-\bigOmg{dN^{-2/(r-0.5)}}}.$
    Combining these two events, we obtain that the following hold
    \begin{align}\label{eq:data_estimates}
	&\norm{x_i}_2^2=\Theta(d),\ \forall\,i\in[N],\nonumber \\
	&\abs{\inner{x_i,x_j}}^r\leq d N^{-1/(r-0.5)},\ \forall\,i\neq j 
    \end{align}
    with the same probability as stated in the theorem.

    We have from Lemma \ref{lem:limNTK_matform} that
    \begin{align*}
	K^{(L)} = \sum_{l=1}^{L} G^{(l)} \circ \dot{G}^{(l+1)}\circ\dot{G}^{(l+2)}\circ\ldots\circ\dot{G}^{(L)} .
    \end{align*}
    One also observes that all the matrices $G^{(l)},\dot{G}^{(l)},G^{(l)}$ are positive semidefinite. 
    Recall that, for two p.s.d.\ matrices $P,Q\in\RR^{n\times n}$, 
    one has $\evmin{P\circ Q}\geq\evmin{P}\min_{i\in[n]}Q_{ii}$ \citep{schur1911bemerkungen}.
    Thus, it holds
    \begin{align*}	
	\evmin{K^{(L)}} 
	\geq \sum_{l=1}^{L} \evmin{G^{(l)}} \min_{i\in[N]} \prod_{p=l+1}^{L} (\dot{G}^p)_{ii} 
	= \sum_{l=1}^{L} \evmin{G^{(l)}},
    \end{align*}
    where the last equality follows from the fact that $(\dot{G}^{(p)})_{ii}=1$ for all $p\in[2,L],i\in[N].$
    From here, it suffices to bound $\evmin{G^{(2)}}.$
    Let $D=\diag([\norm{x_i}_2]_{i=1}^{N})$ and $\hat{X}=D^{-1}X.$ 
    Then, by the homogeneity of $\sigma$, we have
    $\sigma(Xw)=\sigma(D\hat{X}w)=D\sigma(\hat{X}w)$, and thus 
    \begin{align}\label{eq:G2_bound}
	\evmin{ G^{(2)} } 
	&=\evmin{ D \E\left[\sigma(\hat{X}w)\sigma(\hat{X}w)^T\right] D } \nonumber\\
	&=\evmin{D \left[\mu_0(\sigma)^2 1_N 1_N^T + \sum_{s=1}^{\infty}\mu_s(\sigma)^2 (\hat{X}^{*s}) (\hat{X}^{*s})^T \right] D} \nonumber\\
	&\geq\mu_r(\sigma)^2 \evmin{D (\hat{X}^{*r}) (\hat{X}^{*r})^T D} \nonumber\\
	&=\mu_r(\sigma)^2 \evmin{D^{-(r-1)} (X^{*r}) (X^{*r})^T D^{-(r-1)}} \nonumber\\
	&\geq\mu_r(\sigma)^2 \frac{\evmin{(X^{*r})(X^{*r})^T}}{\max_{i\in [N]}\norm{x_i}_2^{2(r-1)}} ,
    \end{align}
    where the second step uses the Hermite expansion of $\sigma$ (for the proof see Lemma D.3 of \citep{QuynhMarco2020}).
    By Gershgorin circle theorem, one has
    \begin{align*}
	\evmin{(X^{*r})(X^{*r})^T} 
	\geq \min_{i\in[N]} \norm{x_i}_2^{2r} - (N-1) \max_{i\neq j} \abs{\inner{x_i,x_j}}^r
	\geq\Omega(d),
    \end{align*}    
    where the last estimate follows from \eqref{eq:data_estimates}.
    Plugging this and the estimate of \eqref{eq:data_estimates} into the inequality \eqref{eq:G2_bound} proves the lower bound on the smallest eigenvalue of the NTK.
    For the upper bound, note that
    \begin{align*}
	\evmin{K^{(L)}} 
	\leq \frac{\tr(K^{(L)})}{N}
	=\frac{1}{N}\sum_{i=1}^{N}\sum_{l=1}^{L} (G^{(l)})_{ii} \prod_{p=l+1}^{L} (\dot{G}^p)_{ii} .
    \end{align*}
    One observes that $(G^{(l)})_{ii}=2\E_{g\distas{}\mathcal{N}(0,(G_{l-1})_{ii})} [\sigma(g)^2]=(G^{(l-1)})_{ii}.$
    Iterating this argument gives $(G^{(l)})_{ii}=(G^{(1)})_{ii}=\norm{x_i}_2^2.$
    Thus, it follows that
    \begin{align*}
	\evmin{K^{(L)}} 
	\leq \frac{L}{N}\tr(G^{(1)})
	= \frac{L}{N}\sum_{i=1}^{N} \norm{x_i}_2^2 = L\,\mathcal{O}(d),
    \end{align*}
    where we used again \eqref{eq:data_estimates} in the last estimate.

\section{Some Useful Estimates}\label{app:more_results}
\begin{lemma}\label{lem:norm_feature_map}
    Fix any $0\leq k\leq L-1$ and $x\distas{}P_X.$
    Then, we have
    \begin{align*}
% 	\begin{cases}
	    \norm{f_k(x)}_2^2 = \bigTheta{ d\prod_{l=1}^k n_l \beta_l^2 } 
% 	\end{cases}
    \end{align*}
    w.p.\ at least $1-\sum_{l=1}^{k} \bigexp{-\bigOmg{n_l}} - \exp(-\bigOmg{d})$ over $(W_l)_{l=1}^{k}$ and $x.$
    Moreover, 
    \begin{align*}
	\E_x \norm{f_k(x)}_2^2 = \bigTheta{ d\prod_{l=1}^k n_l \beta_l^2 }
    \end{align*}
    w.p.\ $1-\sum_{l=1}^{k} \bigexp{-\bigOmg{n_l}}$ over $(W_l)_{l=1}^{k}.$
\end{lemma}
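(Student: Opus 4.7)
The plan is to prove both assertions by a joint induction on $k \in [0, L-1]$. The base case $k = 0$ is immediate from the assumptions: $f_0(x) = x$, so the expectation claim is Assumption \ref{ass:data_dist}(ii), while the high-probability claim follows by applying Assumption \ref{ass:data_dist2} to the $1$-Lipschitz map $x \mapsto \norm{x}_2$, combining with the scaling $\E\norm{x}_2 = \bigTheta{\sqrt{d}}$ from Assumption \ref{ass:data_dist}(i), and choosing the deviation $t = \bigTheta{\sqrt{d}}$ to obtain $\norm{x}_2^2 = \bigTheta{d}$ with probability at least $1 - e^{-\bigOmg{d}}$.

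For the inductive step of the first assertion, I condition on $x$ and $W_1,\ldots,W_{k-1}$ restricted to the favorable event where $\norm{f_{k-1}(x)}_2^2 = \bigTheta{d\prod_{l=1}^{k-1} n_l \beta_l^2}$. Under this conditioning, each coordinate of $g_k(x) = W_k^T f_{k-1}(x)$ is a fixed linear combination of the iid Gaussian entries of $W_k$, and different coordinates involve disjoint columns of $W_k$, so the coordinates are iid $\mathcal{N}(0, \beta_k^2 \norm{f_{k-1}(x)}_2^2)$. Thus $\norm{f_k(x)}_2^2 = \sum_{j=1}^{n_k} \sigma(g_{k,j}(x))^2$ is a sum of $n_k$ iid nonnegative sub-exponential random variables, each with mean $\tfrac12 \beta_k^2 \norm{f_{k-1}(x)}_2^2$ and sub-exponential norm of the same order. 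Bernstein's inequality yields $\norm{f_k(x)}_2^2 = \bigTheta{n_k \beta_k^2 \norm{f_{k-1}(x)}_2^2}$ with probability at least $1 - e^{-\bigOmg{n_k}}$ over $W_k$; chaining with the inductive estimate and taking a union bound completes the step.

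For the inductive step of the second assertion, I condition on $W_{<k}$ in its favorable event and analyze $\Phi(W_k) \bydef \sqrt{\E_x \norm{f_k(x)}_2^2}$ as a function of $W_k \in \RR^{n_{k-1}\times n_k}$. Using $1$-Lipschitzness of $\sigma$ together with the triangle inequality for the $L^2(P_X)$-seminorm, $\Phi$ is Lipschitz in $W_k$ (w.r.t.\ the Frobenius norm) with constant $L \bydef \sqrt{\E_x \norm{f_{k-1}(x)}_2^2}$. Gaussian Lipschitz concentration then gives $\abs{\Phi(W_k) - \E_{W_k}\Phi(W_k)} = \bigO{\beta_k L \sqrt{n_k}}$ with probability at least $1 - 2e^{-\bigOmg{n_k}}$, and the Gaussian Poincaré inequality gives $\var_{W_k}(\Phi) \leq \beta_k^2 L^2$. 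Swapping the order of expectations and using $\E_g \sigma(g)^2 = \tfrac12\var(g)$ for centered Gaussian $g$ yields the exact identity $\E_{W_k}\Phi(W_k)^2 = \tfrac12 n_k \beta_k^2 L^2$, so $(\E_{W_k}\Phi)^2 = \E_{W_k}\Phi^2 - \var(\Phi) = \bigTheta{n_k \beta_k^2 L^2}$ once $n_k$ is moderately large. Combining these gives $\Phi(W_k)^2 = \E_x \norm{f_k(x)}_2^2 = \bigTheta{n_k \beta_k^2 \E_x \norm{f_{k-1}(x)}_2^2} = \bigTheta{d\prod_{l=1}^k n_l \beta_l^2}$ on the stated event, closing the induction.

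The main obstacle I anticipate is precisely the second assertion: the target $\Phi^2$ is quadratic in a scalar that concentrates only sub-Gaussianly, so one must separately control $\E\Phi$ and the deviation $\Phi - \E\Phi$. The Poincaré-inequality step above is what forces $(\E\Phi)^2$ to match $\E\Phi^2$ up to constants. A direct Bernstein alternative on $\sum_{j=1}^{n_k} h((W_k)_{:j})$ with $h(w) = \E_x \sigma(\inner{w, f_{k-1}(x)})^2$ would work as well, but requires verifying the sub-exponential norm of each $h((W_k)_{:j})$ through the pointwise bound $h(w) \le w^T \E_x[f_{k-1}f_{k-1}^T] w$ and a Hanson-Wright estimate, which is more involved.
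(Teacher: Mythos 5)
Your proof of the first assertion is essentially identical to the paper's: induction on $k$, conditioning on the favorable event for $\norm{f_{k-1}(x)}_2^2$, observing that the coordinates of $g_k(x)$ are i.i.d.\ $\mathcal{N}(0,\beta_k^2\norm{f_{k-1}(x)}_2^2)$, computing the mean $\tfrac12 n_k\beta_k^2\norm{f_{k-1}(x)}_2^2$, bounding $\norm{f_{k,j}(x)^2}_{\psi_1}=\norm{f_{k,j}(x)}_{\psi_2}^2=\bigO{\beta_k^2\norm{f_{k-1}(x)}_2^2}$, and applying Bernstein. No comments there.

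For the second assertion you take a genuinely different route. The paper keeps the same Bernstein template: it writes $\E_x\norm{f_k(x)}_2^2=\sum_{j=1}^{n_k}\E_x[f_{k,j}(x)^2]$ as a sum of $n_k$ i.i.d.\ (over the columns of $W_k$) sub-exponential variables and controls their Orlicz norms via the one-line Jensen-type bound $\norm{\E_x[f_{k,j}(x)^2]}_{\psi_1}\le\E_x\norm{f_{k,j}(x)^2}_{\psi_1}$ — so it is exactly the ``direct Bernstein alternative'' you mention at the end, except that no Hanson--Wright estimate is needed; you somewhat overestimate the difficulty of that path. Your route instead treats $\Phi(W_k)=\sqrt{\E_x\norm{f_k(x)}_2^2}$ as an $L$-Lipschitz function of the Gaussian matrix $W_k$ with $L=\sqrt{\E_x\norm{f_{k-1}(x)}_2^2}$, uses Gaussian concentration for the deviation, the exact identity $\E_{W_k}\Phi^2=\tfrac12 n_k\beta_k^2L^2$, and the Poincar\'e inequality to force $(\E_{W_k}\Phi)^2=\bigTheta{\E_{W_k}\Phi^2}$. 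This is correct and self-contained (the Lipschitz constant of $\Phi$ follows from the reverse triangle inequality in $L^2(P_X)$ and the $1$-Lipschitzness of $\sigma$, and the $\var(\Phi)\le\beta_k^2L^2$ bound closes the gap between first and second moments once $n_k$ exceeds a constant). What your approach buys is that it never manipulates Orlicz norms of the $x$-averaged coordinates and makes the mechanism (``a Lipschitz statistic of $W_k$ whose second moment is explicitly computable'') transparent; what the paper's approach buys is uniformity — one Bernstein argument covers both assertions with the same constants and the same probability bookkeeping. Either way the inductive structure and the final probability $1-\sum_{l\le k}e^{-\bigOmg{n_l}}$ come out the same.
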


\begin{lemma}\label{lem:norm_of_expected_feature_map}
    Fix any $k\in[L-1].$
    Then, we have
    \begin{align*}
% 	\begin{cases}
	    \norm{\E_x[f_k(x)]}_2^2 = \bigTheta{ d\prod_{l=1}^{k} n_l \beta_l^2 } 
% 	    \norm{W_l}^2 = \bigO{\beta_l^2\max(n_l,n_{l-1})}, \quad\forall\,l\in[k]
% 	\end{cases}
    \end{align*}
    w.p.\ at least $1 - \sum_{l=1}^k\bigexp{-\bigOmg{n_l}}$ over $(W_l)_{l=1}^k.$
\end{lemma}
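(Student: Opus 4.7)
The plan is to prove the upper and lower bounds separately. The upper bound is immediate from Jensen's inequality, $\norm{\E_x[f_k(x)]}_2^2 \leq \E_x\norm{f_k(x)}_2^2$, combined with Lemma \ref{lem:norm_feature_map}, which gives the matching $\bigTheta{d\prod_{l=1}^k n_l\beta_l^2}$ on the stated probability event. The rest of the argument focuses on the matching lower bound, which I would prove by induction on $k \in [L-1]$.

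Write $\eta_k := \norm{\E_x[f_k(x)]}_2^2 = \sum_{j=1}^{n_k} h_j^2$ with $h_j := \E_x\sigma((W_k)_{:j}^T f_{k-1}(x))$; conditional on $W_l$ for $l < k$, the $h_j$'s are i.i.d.\ as functions of the independent Gaussian columns $(W_k)_{:j} \sim \mathcal{N}(0, \beta_k^2 \Id)$. The inductive step consists of (i) lower bounding $\E_{W_k}[h_j^2]$ in terms of $\eta_{k-1}$, then (ii) using a Bernstein-type concentration over $W_k$ to transfer this to a high-probability lower bound on $\sum_j h_j^2$.

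For the base case $k=1$, Jensen's inequality gives $\E_{W_1}[h_j^2] \geq (\E_{W_1} h_j)^2 = \beta_1^2(\E_x\norm{x}_2)^2/(2\pi) = \bigOmg{\beta_1^2 d}$ by Assumption \ref{ass:data_dist}. For the inductive step $k \geq 2$, I would exploit that $f_{k-1}(x)$ has non-negative coordinates (being the output of ReLU) and invoke the Hermite expansion of ReLU: for any non-negative $y, y' \in \RR^{n_{k-1}}$ and $w \sim \mathcal{N}(0, \beta_k^2 \Id)$,
\[
\E_w[\sigma(w^T y)\sigma(w^T y')] = \beta_k^2 \norm{y}_2 \norm{y'}_2 \sum_{r \geq 0} \mu_r(\sigma)^2 \rho^r, \quad \rho := \inner{y,y'}/(\norm{y}_2 \norm{y'}_2) \geq 0.
\]
Every summand is non-negative, so keeping only $r=1$ yields the lower bound $\beta_k^2 \mu_1(\sigma)^2 \inner{y, y'}$. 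Applying this to $y = f_{k-1}(x),\ y'=f_{k-1}(x')$ with independent copies $x,x'$ of the data and using Fubini to interchange $\E_{W_k}$ with $\E_{x,x'}$ gives $\E_{W_k}[h_j^2] \geq \beta_k^2 \mu_1^2 \norm{\E_x f_{k-1}(x)}_2^2 = \beta_k^2 \mu_1^2 \eta_{k-1}$, which is $\bigOmg{\beta_k^2 d \prod_{l=1}^{k-1} n_l\beta_l^2}$ by the induction hypothesis.

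For substep (ii), by Gaussian Lipschitz concentration each $h_j - \E h_j$ is sub-Gaussian with parameter $\bigO{\beta_k L}$, where $L := \E_x \norm{f_{k-1}(x)}_2$; hence $h_j^2 - \E h_j^2$ has $\psi_1$-norm $\bigO{\beta_k^2 L^2}$. Applying Bernstein to the sum of $n_k$ i.i.d.\ centered sub-exponentials with deviation target $t = n_k\E_{W_k}[h_j^2]/2$ yields $\eta_k \geq n_k \E_{W_k}[h_j^2]/2 = \bigOmg{d\prod_{l=1}^k n_l\beta_l^2}$ with probability $\geq 1 - \exp(-\bigOmg{n_k})$; union-bounding across layers produces the stated cumulative probability. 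The main obstacle is precisely this concentration step: Bernstein delivers an $\exp(-\bigOmg{n_k})$ tail only when $t$ is comparable to $n_k$ times the sub-exponential scale, i.e.\ when $\E_{W_k}[h_j^2]$ and $\beta_k^2 L^2$ are of the same order. This holds because $L^2 \leq \E_x\norm{f_{k-1}(x)}_2^2 = \bigO{d\prod_{l=1}^{k-1} n_l\beta_l^2}$ by Jensen and Lemma \ref{lem:norm_feature_map}, while the induction ensures $\eta_{k-1}$ has the matching lower bound; so $L^2$ and $\eta_{k-1}$ sit at the same order up to constants, which is exactly what the Bernstein threshold demands.
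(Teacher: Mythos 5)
Your proof is correct and follows the paper's argument at the structural level: upper bound by Jensen plus Lemma~\ref{lem:norm_feature_map}; lower bound by induction on $k$, decomposing into i.i.d.\ terms $h_j^2$ over the columns of $W_k$ conditional on earlier layers, lower-bounding $\E_{W_k}[h_j^2]$ in terms of $\eta_{k-1}$, and finishing with a Bernstein concentration. The one methodological variation is the intermediate lower bound on $\E_{W_k}[h_j^2]$: you pass to the two-replica form $\E_{x,x'}\E_w[\sigma(w^T f_{k-1}(x))\sigma(w^T f_{k-1}(x'))]$, expand in Hermite coefficients, and keep only the $r=1$ term, exploiting non-negativity of ReLU outputs to guarantee $\rho\ge 0$ so the discarded terms are non-negative. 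The paper instead applies Jensen twice --- $\E_w Z^2\ge(\E_w Z)^2$ and then, after evaluating $\E_w\sigma(w^T y)=\beta_k\norm{y}_2/\sqrt{2\pi}$, the vector Jensen $(\E_x\norm{f_{k-1}(x)}_2)^2\ge\norm{\E_x f_{k-1}(x)}_2^2$ --- which is a bit more elementary and makes no use of the sign of the features. The two routes differ only in the leading constant ($\mu_1(\sigma)^2=1/4$ versus $\mu_0(\sigma)^2=1/(2\pi)$), which is irrelevant for the stated $\Theta$-scaling; similarly, your Gaussian-Lipschitz route to the $\psi_1$-norm of $h_j^2$ gives the same order as the paper's $\norm{\E_x f_{k,j}}_{\psi_2}^2\le\E_x\norm{f_{k,j}}_{\psi_2}^2$ estimate. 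One small improvement in your write-up: you spell out the base case $k=1$ using $(\E_x\norm{x}_2)^2=\Omega(d)$ from Assumption~\ref{ass:data_dist}. This is the right anchor for the induction, since starting the hypothesis at $k-1=0$ would naively demand $\norm{\E_x[x]}_2^2=\Omega(d)$, which is false for centered data; the paper's proof leaves this distinction implicit.
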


\begin{lemma}\label{lem:square_fx}
    Fix any $k\in[L-1].$ 
    Assume $\prod_{l=1}^{k-1} \log(n_l)=\littleO{\min_{l\in[0,k]} n_l}.$ 
    Then, we have
    \begin{align}
% 	\begin{cases}
	    \norm{f_{k}(x_i) - \E_{x}[f_{k}(x)]}_2^2 = \bigTheta{ d \prod_{l=1}^{k} n_l\beta_l^2} ,\quad\forall\,i\in[N] 
% 	    \svmax{W_l}^2 = \bigO{\beta_l^2\max(n_l,n_{l-1})},\quad \forall\,l\in[k]
% 	\end{cases}
    \end{align}
    w.p.\ at least
    \begin{align*}
	1-N\bigexp{ -\bigOmg{ \frac{\min_{l\in[0,k]} n_l}{\prod_{l=1}^{k-1}\log(n_l)} } }
	  - \sum_{l=1}^{k}\exp(-\bigOmg{n_l}) .
    \end{align*}
\end{lemma}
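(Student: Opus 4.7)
Plan: Let $S_k := d\prod_{l=1}^{k} n_l \beta_l^2$ denote the target scale and $\mu := \E_x[f_k(x)]$, which depends on the weights. I bound $\norm{f_k(x_i) - \mu}_2^2$ above and below for every $i \in [N]$. The upper bound is an immediate application of the triangle inequality: $\norm{f_k(x_i) - \mu}_2^2 \le 2\norm{f_k(x_i)}_2^2 + 2\norm{\mu}_2^2 = \bigO{S_k}$ by Lemma \ref{lem:norm_feature_map} (union-bounded over $i \in [N]$) together with Lemma \ref{lem:norm_of_expected_feature_map}.

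For the lower bound, I would view $h(x) := \norm{f_k(x) - \mu}_2$ as a function of $x$ with the weights frozen. Since $\sigma$ is $1$-Lipschitz, $\norm{f_k}_\Lip \le \norm{g_k}_\Lip$, and Theorem \ref{thm:lip_const_fk} gives $\norm{f_k}_\Lip^2 = \bigO{S_k \cdot \prod_{l=1}^{k-1}\log(n_l) / \min_{l\in[0,k]} n_l} = \littleO{S_k}$ under our width assumption. Applying Assumption \ref{ass:data_dist2} to $h$ and union-bounding over $i$ yields $|h(x_i) - \bar h| \le \bar h / 2$ for all $i$ with probability at least $1 - 2N\exp(-\bigOmg{\bar h^2 / \norm{f_k}_\Lip^2})$, where $\bar h := \E_x[h(x)]$. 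Squaring then gives $\norm{f_k(x_i) - \mu}_2^2 = \bigTheta{\bar h^2}$ for every $i$, so it remains to show $\bar h^2 = \bigOmg{S_k}$ with high probability over the weights. The matching exponent $\bigOmg{\min_l n_l / \prod_l \log(n_l)}$ appearing in the lemma will then follow from $\bar h^2 / \norm{f_k}_\Lip^2 = \bigOmg{S_k / \norm{f_k}_\Lip^2} = \bigOmg{\min_l n_l / \prod_l \log(n_l)}$.

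To prove $\bar h^2 = \bigOmg{S_k}$, I note that $h$ is sub-Gaussian around $\bar h$ with proxy $\bigO{\norm{f_k}_\Lip}$, so $\mathrm{Var}(h) = \bigO{\norm{f_k}_\Lip^2} = \littleO{S_k}$; the identity $\bar h^2 = \E h^2 - \mathrm{Var}(h)$ then reduces the task to showing $V_k := \tr(\mathrm{Cov}_x(f_k(x))) = \bigOmg{S_k}$ with high probability over the weights. I would prove this by induction on $k$. The base case $k = 0$ is Assumption \ref{ass:data_dist}(3). For the induction step, I write $2V_k = \E_{x,x'}\norm{f_k(x) - f_k(x')}_2^2$ for iid $x, x'$; conditioning on $W_{1:k-1}$ and taking expectation over $W_k$,
\begin{equation*}
\E_{W_k}[2 V_k] = n_k\,\E_{x,x'}\,\E_{w \sim \mathcal{N}(0, \beta_k^2 I)} \bigl(\sigma(w^\top f_{k-1}(x)) - \sigma(w^\top f_{k-1}(x'))\bigr)^2.
\end{equation*}
The closed-form arc-cosine kernel identity, together with AM-GM applied to $\frac{\beta_k^2}{2}(\norm{a}^2+\norm{b}^2) - \frac{\beta_k^2 \norm{a}\norm{b}}{\pi}(\sin\theta + (\pi-\theta)\cos\theta)$, yields the universal pointwise bound $\E_w (\sigma(w^\top a) - \sigma(w^\top b))^2 \ge \tfrac{\beta_k^2}{4}\norm{a - b}_2^2$ for all $a, b$; hence $\E_{W_k} V_k \ge \tfrac{n_k \beta_k^2}{4} V_{k-1} = \bigOmg{S_k}$ by the inductive hypothesis. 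The main obstacle is the final concentration of $V_k$ around its $W_k$-conditional expectation: since $V_k$ is a sum over the $n_k$ iid columns of $W_k$ of terms that are Lipschitz in the column through the $1$-Lipschitz nonlinearity $\sigma$, a standard Gaussian-Lipschitz (or bounded-variance) concentration argument upgrades the expectation bound to $V_k = \bigOmg{S_k}$ with high probability over $W_k$, completing the induction and hence the lower bound.
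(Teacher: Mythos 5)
Your proof is correct and its outer skeleton coincides with the paper's: define $Z(x)=\norm{f_k(x)-\E_x[f_k(x)]}_2$, control $\norm{Z}_{\Lip}$ via Theorem \ref{thm:lip_const_fk}, use the identity $(\E Z)^2=\E[Z^2]-\mathrm{Var}(Z)$ together with Assumption \ref{ass:data_dist2} to show $\mathrm{Var}(Z)=\littleO{d\prod_{l=1}^k n_l\beta_l^2}$, and then concentrate $Z(x_i)$ around $\E Z$ with a union bound over $i\in[N]$ (this also yields the matching exponent). The one place you genuinely diverge is the lower bound on $\E[Z^2]=\tr(\mathrm{Cov}_x(f_k(x)))$: the paper outsources this to Lemma \ref{lem:Ex_square_fx}, whose proof runs the induction through the Hermite expansion of $\sigma$ (isolating $\mu_1(\sigma)^2=1/4$ and bounding the remaining coefficients by $1/4$), whereas you run the induction through the closed-form arc-cosine kernel and the pointwise inequality $\E_w\bigl(\sigma(w^\top a)-\sigma(w^\top b)\bigr)^2\ge\frac{\beta_k^2}{4}\norm{a-b}_2^2$, which indeed holds (the reduction via AM--GM leaves the elementary fact that $\frac{1}{\pi}(\sin\theta+(\pi-\theta)\cos\theta)-\frac12\cos\theta\le\frac12$ on $[0,\pi]$). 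Your route is arguably more self-contained and gives a cleaner contraction constant; the paper's Hermite route is the one reused elsewhere (Lemmas \ref{lem:bound_lambda_star_k} and \ref{lem:svmin_Fk_tilde_Khatri_Rao}), so it unifies the machinery. One small imprecision: in the final concentration of $V_k$ over $W_k$, the per-column summands $\mathrm{Var}_x(\sigma(w_j^\top f_{k-1}(x)))$ are not Lipschitz in $w_j$ but quadratic, hence sub-exponential with $\psi_1$-norm $\bigO{\beta_k^2\,\E_x\norm{f_{k-1}(x)}_2^2}$; Bernstein's inequality (as in the paper's proof of Lemma \ref{lem:Ex_square_fx}) is the right tool and delivers the $1-e^{-\bigOmg{n_k}}$ failure probability you need, so this is a matter of wording rather than a gap.
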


\begin{lemma}\label{lem:Ex_square_fx}
    Fix any $k\in[L-1].$ Then, we have
    \begin{align*}
% 	\begin{cases}
	    \E_x \norm{f_k(x) - \E_{x}[f_k(x)]}_2^2 = \bigTheta{ d \prod_{l=1}^k n_l\beta_l^2 }
% 	    \svmax{W_l}^2 = \bigO{\beta_l^2\max(n_l,n_{l-1})},\quad \forall\,l\in[k]
% 	\end{cases}
    \end{align*}
    w.p.\ at least $1-\sum_{l=1}^k \bigexp{-\bigOmg{n_l}}$ over $(W_l)_{l=1}^k.$
\end{lemma}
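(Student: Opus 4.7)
The plan is to start from the variance decomposition
$\E_x \norm{f_k(x) - \E_x[f_k(x)]}_2^2 = \E_x \norm{f_k(x)}_2^2 - \norm{\E_x f_k(x)}_2^2$
and bound the two summands separately. The upper bound is immediate, since $\E_x \norm{f_k(x) - \E_x[f_k(x)]}_2^2 \leq \E_x\norm{f_k(x)}_2^2$, and Lemma \ref{lem:norm_feature_map} yields $\E_x\norm{f_k(x)}_2^2 = \bigO{d\prod_{l=1}^k n_l \beta_l^2}$ with probability at least $1 - \sum_{l=1}^k \bigexp{-\bigOmg{n_l}}$ over the weights.

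For the matching lower bound, one cannot merely combine Lemma \ref{lem:norm_feature_map} with Lemma \ref{lem:norm_of_expected_feature_map}, since both $\E_x\norm{f_k(x)}_2^2$ and $\norm{\E_x f_k(x)}_2^2$ already scale as $\Theta(d\prod n_l \beta_l^2)$, so plain subtraction is inconclusive. I would proceed by induction on $k$, tracking a strictly positive constant-factor gap between these two quantities through the layers. Let $V_k := \E_x\norm{f_k(x) - \E_x f_k(x)}_2^2$ and $C_{k-1} := \E_x[(f_{k-1}(x) - \E_x f_{k-1}(x))(f_{k-1}(x) - \E_x f_{k-1}(x))^T]$, so that $\tr(C_{k-1}) = V_{k-1}$. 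Rewriting $V_k = \sum_{j=1}^{n_k}\mathrm{Var}_x[\sigma(g_{k,j}(x))]$ with pre-activations $g_{k,j}(x) = W_{k,:,j}^T f_{k-1}(x)$, the induction step splits into two parts: (i) Gaussian concentration of $W_k W_k^T$ around $n_k\beta_k^2\Id_{n_{k-1}}$ yields $\sum_j \mathrm{Var}_x[g_{k,j}] = \tr(W_k W_k^T C_{k-1}) \geq c\, n_k\beta_k^2\, V_{k-1}$ with probability at least $1 - \bigexp{-\bigOmg{n_k}}$; (ii) a constant fraction of this pre-activation variance must survive the nonlinearity, i.e.\ $V_k \geq c'\sum_j\mathrm{Var}_x[g_{k,j}]$. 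The base case is $V_0 = \E_x\norm{x - \E x}_2^2 = \bigOmg{d}$, which is condition 3 of Assumption \ref{ass:data_dist}.

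Sub-step (ii) is the main obstacle. The $1$-Lipschitz property of ReLU immediately gives the opposite (upper) inequality, but a matching lower bound is delicate because $\sigma$ kills the entire negative half-line. The plan is to exploit the identity $\mu_1(\sigma) = 1/2$ for the first Hermite coefficient of ReLU: conditional on the good events for the earlier layers, the randomness of $W_{k,:,j}$ ensures that each pre-activation $g_{k,j}(x)$ is approximately Gaussian as $x \sim P_X$, so expanding $\sigma$ in the associated Hermite basis and retaining only the first-order term gives $\mathrm{Var}_x[\sigma(g_{k,j})] \gtrsim \mu_1(\sigma)^2\,\mathrm{Var}_x[g_{k,j}]$ on a typical coordinate $j$. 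Summing over $j$ and paying the usual $\bigexp{-\bigOmg{n_k}}$ concentration cost closes the induction, and iterating from the base case produces $V_k = \bigOmg{d\prod_{l=1}^k n_l\beta_l^2}$, with the understanding that $L$ is treated as a constant throughout.
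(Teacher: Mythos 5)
Your skeleton (induction on $k$, the decomposition $V_k=\E_x\norm{f_k}_2^2-\norm{\E_x f_k}_2^2$, base case from condition 3 of Assumption \ref{ass:data_dist}, an $e^{-\bigOmg{n_k}}$ concentration cost per layer via Bernstein) matches the paper's proof, and your upper bound and step (i) are fine. The gap is in step (ii), which you correctly identify as the main obstacle but do not close. The inequality $\mathrm{Var}_x[\sigma(g_{k,j})]\gtrsim \mu_1(\sigma)^2\,\mathrm{Var}_x[g_{k,j}]$ cannot hold coordinate-wise: since $f_{k-1}(x)\geq 0$ entrywise, a column $w_j$ of $W_k$ with all-negative entries gives $g_{k,j}(x)\leq 0$ for every $x$, hence $\mathrm{Var}_x[\sigma(g_{k,j})]=0$ while $\mathrm{Var}_x[g_{k,j}]>0$. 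Your appeal to $g_{k,j}(x)$ being ``approximately Gaussian as $x\sim P_X$'' points the Hermite expansion at the wrong source of randomness: $f_{k-1}(x)$ is a nonlinear image of $x$, the law of $\langle w_j,f_{k-1}(x)\rangle$ under $P_X$ is not Gaussian, and no quantitative approximation is available. The qualifier ``on a typical coordinate $j$'' implicitly averages over $w_j$ --- but making that average precise is exactly the missing argument.

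The paper closes this step by expanding in the Hermite basis with respect to the \emph{weight} randomness, where the identity is exact: it computes $\E_{W_k}\E_x\norm{f_k-\E_x f_k}_2^2$ in closed form using
$\E_{w}[\sigma(\langle w,a\rangle)\sigma(\langle w,b\rangle)]=\beta_k^2\norm{a}_2\norm{b}_2\sum_{r\geq 0}\mu_r(\sigma)^2\langle \hat a,\hat b\rangle^r$,
isolates the $r=1$ term (which reproduces exactly $\tfrac14\norm{\E_x f_{k-1}}_2^2$ since $\mu_1(\sigma)^2=1/4$), bounds the remaining terms by $(\E_x\norm{f_{k-1}}_2)^2\leq\E_x\norm{f_{k-1}}_2^2$ using $\sum_{r\neq 1}\mu_r(\sigma)^2=1/4$, and arrives at $\E_{W_k}[V_k]\geq\tfrac{n_k\beta_k^2}{4}V_{k-1}$; Bernstein over $W_k$ then finishes the induction step. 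To repair your proof you would need to replace your step (ii) by this expectation-over-$W_k$ computation (or an equivalent one); as written, the central inequality is asserted rather than proved.
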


\begin{lemma}\label{lem:frobnorm_SIGMA}
    Fix any $k\in[L-1]$, and $x\distas{}P_X.$ 
    Then, we have that $\norm{\Sigma_k(x)}_F^2=\bigTheta{n_k}$ w.p.\ at least $1-\sum_{l=1}^{k} \bigexp{-\bigOmg{n_l}} - \exp(-\bigOmg{d})$ over $(W_l)_{l=1}^{k}$ and $x.$
\end{lemma}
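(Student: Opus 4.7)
The plan is to reduce $\norm{\Sigma_k(x)}_F^2$ to a sum of i.i.d.\ Bernoulli random variables and apply a standard concentration inequality. Since $\sigma$ is ReLU and we use the convention $\sigma'(0)=0$, the diagonal entries of $\Sigma_k(x)$ lie in $\Set{0,1}$, so
$$
\norm{\Sigma_k(x)}_F^2 = \sum_{j=1}^{n_k}\mathbbm{1}\!\left[g_{k,j}(x)>0\right].
$$
The key observation is that, conditionally on $(W_l)_{l=1}^{k-1}$ and $x$, the vector $f_{k-1}(x)$ is fixed, and $g_{k,j}(x) = \inner{(W_k)_{:,j}, f_{k-1}(x)}$ is a centered Gaussian with variance $\beta_k^2\norm{f_{k-1}(x)}_2^2$; independence across $j$ follows from the independence of the columns of $W_k$.

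First, I would isolate the ``good'' event on which $\norm{f_{k-1}(x)}_2>0$. For $k=1$ this is immediate from Assumptions \ref{ass:data_dist}-\ref{ass:data_dist2}, which give $\norm{x}_2=\bigTheta{\sqrt{d}}$ with probability at least $1-\exp(-\bigOmg{d})$. For $k\ge 2$, Lemma \ref{lem:norm_feature_map} yields $\norm{f_{k-1}(x)}_2^2 = \bigTheta{d\prod_{l=1}^{k-1}n_l\beta_l^2}>0$ on an event of probability at least $1-\sum_{l=1}^{k-1}\bigexp{-\bigOmg{n_l}}-\exp(-\bigOmg{d})$.

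Conditional on this event, each $g_{k,j}(x)$ is a non-degenerate centered Gaussian, so by symmetry $\mathbbm{1}[g_{k,j}(x)>0]$ is Bernoulli with mean exactly $1/2$, and the $n_k$ indicators are i.i.d. Hoeffding's inequality (\Hoeff) applied to the bounded variables $\mathbbm{1}[g_{k,j}(x)>0]-\tfrac{1}{2}$ then gives
$$
\PP\!\left(\,\abs{\norm{\Sigma_k(x)}_F^2 - n_k/2} > n_k/4 \,\Bigbar{} (W_l)_{l=1}^{k-1},x\right) \le 2\bigexp{-\bigOmg{n_k}},
$$
so that $\norm{\Sigma_k(x)}_F^2 \in [n_k/4,\,3n_k/4]$, i.e.\ $\bigTheta{n_k}$. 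A union bound over this Bernoulli concentration event and the ``good'' event from the previous paragraph yields the advertised total failure probability $\sum_{l=1}^{k}\bigexp{-\bigOmg{n_l}}+\exp(-\bigOmg{d})$.

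There is essentially no serious obstacle here. The only mild subtlety is that one must exclude the degenerate situation $f_{k-1}(x)=0$ (which would force $\Sigma_k(x)=0$ and break the Bernoulli argument), and this is precisely why Lemma \ref{lem:norm_feature_map} is invoked before conditioning on $W_k$. All other steps are textbook applications of Gaussian conditioning and Hoeffding's inequality.
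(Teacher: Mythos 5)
Your proof is correct and follows essentially the same route as the paper's: both reduce $\norm{\Sigma_k(x)}_F^2$ to a sum of $n_k$ conditionally i.i.d.\ Bernoulli$(1/2)$ indicators (using the symmetry of the Gaussian columns of $W_k$ and the non-degeneracy of $f_{k-1}(x)$ guaranteed by Lemma \ref{lem:norm_feature_map}), and both conclude via Hoeffding's inequality with deviation $n_k/4$. The only cosmetic difference is that the paper computes the conditional mean via the identity $\sigma'(t)=1-\sigma'(-t)$ rather than directly citing the symmetry of the Gaussian, which is the same observation.
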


\begin{lemma}\label{lem:frobnorm_W_SIGMA}
    Fix any $k\in[L-1], k\leq p\leq L-1$, and $x\distas{}P_X.$
%     Let $\Sigma_k(x)=\diag([\sigma'(g_{k,j}(x))]_{j=1}^{n_k}).$
    Then, we have that
    \begin{align*}
% 	\begin{cases}
	    \norm{\Sigma_{k}(x)\prod_{l=k+1}^{p} W_l\Sigma_l(x) }_F^2 
	    = \bigTheta{n_{k} \prod_{l=k+1}^{p} n_l\beta_l^2}
% 	    \svmax{W_l}^2 = \bigO{\beta_l^2 \max(n_l,n_{l-1})}, \quad\forall\,l\in[k+1,p]
% 	\end{cases}
    \end{align*}
    w.p.\ at least
%     \begin{align*}
	$1 - \sum_{l=1}^{p} \bigexp{-\bigOmg{n_l}} - \exp(-\bigOmg{d})$
%     \end{align*}
    over $(W_l)_{l=1}^{p}$ and $x.$
\end{lemma}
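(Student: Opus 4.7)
The plan is to prove the bound inductively on $p$, starting from $p=k$ (where the statement is just Lemma \ref{lem:frobnorm_SIGMA}) and incrementing one layer at a time. Set $M_{p,k} := \Sigma_k(x) \prod_{l=k+1}^{p} W_l\Sigma_l(x)\in\RR^{n_k\times n_p}$, so that $M_{k,k}=\Sigma_k(x)$ and $M_{p,k}=M_{p-1,k}\,W_p\,\Sigma_p(x)$. The inductive hypothesis is that $\norm{M_{p-1,k}}_F^2 = \bigTheta{n_k\prod_{l=k+1}^{p-1}n_l\beta_l^2}$ with the claimed probability. For the inductive step I would condition on $W_1,\ldots,W_{p-1}$ and $x$, so that $M_{p-1,k}$ and $f_{p-1}(x)$ become deterministic and the only remaining randomness is in the iid columns $(W_p)_{:j}\distas{}\mathcal{N}(0,\beta_p^2 \Id_{n_{p-1}})$, $j\in[n_p]$.

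The key observation is the conditional decomposition
\begin{align*}
    \norm{M_{p,k}}_F^2 = \sum_{j=1}^{n_p} Z_j, \qquad Z_j := \norm{M_{p-1,k}(W_p)_{:j}}_2^2 \cdot \mathbf{1}\{\inner{(W_p)_{:j},f_{p-1}(x)}>0\},
\end{align*}
which writes the Frobenius norm as a sum of conditionally iid nonnegative terms. A one-line symmetry argument ($w\mapsto -w$ leaves $\norm{M_{p-1,k}w}_2^2$ invariant while flipping the sign of $\inner{w,f_{p-1}(x)}$) yields $\E[Z_j]=\tfrac{1}{2}\beta_p^2\norm{M_{p-1,k}}_F^2$, so $\E\norm{M_{p,k}}_F^2=\tfrac{n_p\beta_p^2}{2}\norm{M_{p-1,k}}_F^2$, which is exactly the inductive target up to a constant.

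For concentration I would apply the sub-exponential Bernstein inequality (\Bernstein) to $\sum_j Z_j$: each $Z_j$ is bounded by a quadratic form in a Gaussian vector, hence sub-exponential with $\subEnorm{Z_j} = \bigO{\beta_p^2\norm{M_{p-1,k}}_{\op}^2}$. Bernstein gives deviation at rate $\bigexp{-\bigOmg{n_p \cdot (\norm{M_{p-1,k}}_F^2/\norm{M_{p-1,k}}_{\op}^2)^2}}$, so the concentration at rate $\bigexp{-\bigOmg{n_p}}$ requires the stable rank $\norm{M_{p-1,k}}_F^2/\norm{M_{p-1,k}}_{\op}^2$ to be at least a constant.

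The main obstacle is therefore to establish a tight enough operator-norm bound on $M_{p-1,k}$ to guarantee this stable rank. Naive operator-norm bounds (e.g.\ \GaussOpNorm\ layer-by-layer) give $\norm{M_{p-1,k}}_{\op}^2=\bigO{\prod_{l=k+1}^{p-1}(\sqrt{n_l}+\sqrt{n_{l-1}})^2\beta_l^2}$, which is too loose when widths vary non-monotonically. Instead I would mirror the proof technique of Theorem \ref{thm:lip_const_fk}: write the operator norm as a maximum over activation patterns of the Jacobian of a feature-to-feature map, bound each pattern's operator norm via an $\epsilon$-net and Gaussian concentration, and then union bound over the $\exp(\bigO{\sum \log n_l})$ patterns. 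This yields $\norm{M_{p-1,k}}_{\op}^2 = \tilde{\mathcal{O}}(\prod_{l=k}^{p-1}n_l\beta_l^2/\min_l n_l)$, which combined with the inductive Frobenius bound gives stable rank $\tilde{\Omega}(n_k\min_l n_l)=\bigOmg{1}$, closing the Bernstein step. A final union bound over the $p-k$ inductive layers combines the $\bigexp{-\bigOmg{n_l}}$ error terms from each Bernstein application with the $\exp(-\bigOmg{d})$ term from Lemma \ref{lem:frobnorm_SIGMA} in the base case, producing the probability in the statement. The upper and lower bounds are both obtained in a single step from the two-sided Bernstein concentration, so no separate treatment is needed.
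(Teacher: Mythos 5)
Your overall induction structure mirrors the paper's: induct on $p$, write $\norm{S_p}_F^2=\sum_{j} Z_j$ with $Z_j=\norm{S_{p-1}(W_p)_{:j}}_2^2\,\sigma'(\inner{(W_p)_{:j},f_{p-1}(x)})$, use the symmetry $w\mapsto -w$ to compute $\E_{W_p}\norm{S_p}_F^2=\tfrac{n_p}{2}\beta_p^2\norm{S_{p-1}}_F^2$, and close via Bernstein. However, your concentration step rests on an incorrect sub-exponential norm estimate.

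You claim $\subEnorm{Z_j}=\bigO{\beta_p^2\norm{S_{p-1}}_{\op}^2}$. This cannot be right: for a nonnegative variable, $\subEnorm{Z_j}\geq c\,\E Z_j=\tfrac{c}{2}\beta_p^2\norm{S_{p-1}}_F^2$, which dominates $\beta_p^2\norm{S_{p-1}}_\op^2$ precisely when $S_{p-1}$ has large stable rank. The correct (and simpler) bound, which is what the paper uses, is
\begin{equation*}
\subEnorm{Z_j}\leq\norm{\,\norm{S_{p-1}w_j}_2^2\,}_{\psi_1}=\norm{\,\norm{S_{p-1}w_j}_2\,}_{\psi_2}^2\leq c\,\beta_p^2\norm{S_{p-1}}_F^2,
\end{equation*}
using that $w\mapsto\norm{S_{p-1}w}_2$ is $\norm{S_{p-1}}_\op$-Lipschitz with mean $\leq\beta_p\norm{S_{p-1}}_F$ and $\norm{\cdot}_\op\le\norm{\cdot}_F$. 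With this Frobenius-based $\psi_1$ bound, Bernstein applied at $t\asymp\E\sum_j Z_j=\tfrac{n_p\beta_p^2}{2}\norm{S_{p-1}}_F^2$ directly gives deviation probability $\bigexp{-\bigOmg{n_p}}$, with no stable-rank condition and no operator-norm lemma. Your proposed detour of mirroring Theorem \ref{thm:lip_const_fk} to get a tight $\norm{S_{p-1}}_\op$ bound via activation-pattern counting is therefore both unnecessary and, if carried out, would also contaminate the failure probability with $\min_{l\in[k,p-1]}n_l$-type terms (as it does in Theorem \ref{thm:lip_const_fk}) that do not appear in and would not reproduce the lemma's stated probability $1-\sum_{l\le p}\bigexp{-\bigOmg{n_l}}-\exp(-\bigOmg{d})$.

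As a minor additional point, your argument also quietly needs $f_{p-1}(x)\neq 0$ so that the half-space indicator has probability exactly $1/2$; the paper handles this by noting that the induction hypothesis $\norm{S_{p-1}}_F^2>0$ forces $\Sigma_{p-1}(x)\neq 0$, hence $f_{p-1}(x)\neq 0$. This is worth making explicit.
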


\subsection{Proof of Lemma \ref{lem:norm_feature_map}}
% \ifpaper
% \begin{proof}
    The proof works by induction over $k.$
    Note that the statement holds for $k=0$ due to Assumptions \ref{ass:data_dist} and \ref{ass:data_dist2}.
    Assume that the lemma holds for some $k-1$,
    i.e. $\norm{f_{k-1}(x)}_2^2=\bigTheta{d\prod_{l=1}^{k-1}n_l\beta_l^2}$ w.p.\ at least $ 1-\sum_{l=1}^{k-1} N\bigexp{-\bigOmg{n_l}}-N\bigexp{-\bigOmg{d}}.$
    Let us condition on this event of $(W_l)_{l=1}^{k-1}$ and study probability bounds over $W_k.$
    Let $W_k=[w_1,\ldots,w_{n_k}]^T$ where $w_j\distas{}\mathcal{N}(0, \beta_k^2\,\Id_{n_{k-1}}).$
    Note that
    \begin{align}\label{eq:lempf1}
	\norm{f_k(x)}_2^2=\sum_{j=1}^{n_k} f_{k,j}(x)^2,
    \end{align}
and that
    \begin{align*}
	\E_{W_k} \norm{f_k(x)}_2^2
	= \sum_{j=1}^{n_k} \E_{w_j}[f_{k,j}(x)^2]
	= \frac{n_k\beta_k^2}{2}\norm{f_{k-1}(x)}_2^2
	= \bigTheta{d\prod_{l=1}^{k}n_l\beta_l^2} ,
    \end{align*}
where the last equality follows from the induction assumption. Furthermore,
    \begin{align*}
	\norm{f_{k,j}(x)^2}_{\psi_1}
% 	&= \norm{f_{k,j}(x)}_{\psi_2}^2\\
% 	&\lesssim \norm{f_{k,j}(x)-\E_{w_j}[f_{k,j}(x)]}_{\psi_2}^2 + |\E_{w_j}[f_{k,j}(x)]|^2\\
	= \norm{f_{k,j}(x)}_{\psi_2}^2 
	\le c \beta_k^2\, \norm{f_{k-1}(x)}_2^2 
	=\bigO{\beta_k^2 d\prod_{l=1}^{k-1}n_l\beta_l^2} ,
    \end{align*}
    where $c$ is an absolute constant.
    Thus, by applying Bernstein's inequality (see \Bernstein) to the sum of i.i.d.\ random variables in \eqref{eq:lempf1}, we have
    \begin{align*}
	\frac{1}{2} \E_{W_k} \norm{f_k(x)}_2^2 \leq \norm{f_k(x)}_2^2 \leq \frac{3}{2} \E_{W_k} \norm{f_k(x)}_2^2
    \end{align*}
    w.p.\ at least $1-\bigexp{-\bigOmg{n_k}}.$ 
    Taking the intersection of the two events finishes the proof for $\norm{f_k(x)}_2^2.$
    The proof for $\E_x\norm{f_k(x)}_2^2$ can be done by following similar passages and using that $\norm{\E_x[f_{k,j}(x)^2]}_{\psi_1}\leq\E_x\norm{f_{k,j}(x)^2}_{\psi_1}$.
% \end{proof}
% \fi

\subsection{Proof of Lemma \ref{lem:norm_of_expected_feature_map}}
% \ifpaper
% \begin{proof}
    The upper bound follows from Lemma \ref{lem:norm_feature_map} via Jensen's inequality.
    The proof for the lower bound works by induction on $k.$
    Assume it holds for $k-1$ that $\norm{\E_x[f_{k-1}(x)]}_2^2 = \bigOmg{ d\prod_{l=1}^{k-1} n_l \beta_l^2 } $
    w.p.\ at least $1 - \sum_{l=1}^{k-1}\bigexp{-\bigOmg{n_l}}$ over $(W_l)_{l=1}^{k-1}.$
    Let us condition on the intersection of this event and that of Lemma \ref{lem:norm_feature_map} for $(W_l)_{l=1}^{k-1}.$
    Let $W_k=[w_1,\ldots,w_{n_k}]$ where $w_j\distas{}\mathcal{N}(0, \beta_k^2\,\Id_{n_{k-1}}).$
    For every $j\in[n_k],$
    \begin{align*}
	\norm{(\E_x[f_{k,j}(x)])^2}_{\psi_1}
	= \norm{\E_x[f_{k,j}(x)]}_{\psi_2}^2
	\leq \E_x\norm{[f_{k,j}(x)]}_{\psi_2}^2
% 	&\lesssim \beta_k^2\, \prod_{l=1}^{k-1} \svmax{W_l}^2 (\E\norm{x})^2 
	\le c \beta_k^2\, \E_x\norm{f_{k-1}(x)}_2^2 
	= \bigO{ d \beta_k^2\, \prod_{l=1}^{k-1} n_l \beta_l^2 },
    \end{align*}
    where $c$ is an absolute constant and the last equality follows from the above conditional event from Lemma \ref{lem:norm_feature_map}.
    Moreover,
    \begin{align*}
	\E_{W_k} \norm{\E_x[f_k(x)]}_2^2
	&=\sum_{j=1}^{n_k} \E_{w_j} (\E_x[f_{k,j}(x)])^2 
	\geq \sum_{j=1}^{n_k} (\E_x\E_{w_j}[f_{k,j}(x)])^2 
	= \frac{n_k\beta_k^2}{2\pi}\, (\E_x\norm{f_{k-1}(x)})^2\\ 
	&\geq \frac{n_k\beta_k^2}{2\pi}\, \norm{\E_x[f_{k-1}(x)]}_2^2 
	= \bigOmg{ d\prod_{l=1}^k n_l \beta_l^2 } ,
    \end{align*}
    where the last estimate follows from our induction assumption.
    By Bernstein's inequality (see \Bernstein), we have
    \begin{align*}
	\norm{\E_x[f_k(x)]}_2^2 \geq \frac{1}{2} \E_{W_k} \norm{\E_x[f_k(x)]}_2^2 = \bigOmg{ d\prod_{l=1}^k n_l \beta_l^2 }
    \end{align*}
    w.p.\ at least $1-\bigexp{-n_k}$ over $W_k.$
%     Also, we have $\svmax{W_k}^2=\bigO{\beta_k^2\max(n_k,n_{k-1})}$ w.p.\ $\geq 1-\bigexp{-n_k}$ over $W_k.$
    Taking the intersection of all these events finishes the proof.
% \end{proof}
% \fi

\subsection{Proof of Lemma \ref{lem:square_fx}}
% \ifpaper
% \begin{proof}
    Let $Z:\RR^d\to\RR$ be a random function over $x_i$ defined as $Z(x_i)=\norm{f_{k}(x_i) - \E_{x}[f_{k}(x)]}_2.$
    It follows from Theorem \ref{thm:lip_const_fk} that w.p.\ at least $1-\sum_{l=1}^{k}\bigexp{-\bigOmg{n_l}}$ over $(W_l)_{l=1}^{k},$
\begin{equation}\label{eq:l1}
    \begin{split}
	\norm{Z}_{\Lip}^2
	=\bigO{ \frac{\prod_{l=0}^k n_l}{\min_{l\in[0,k]}n_l}\, \prod_{l=1}^{k-1}\log(n_l)\, \prod_{l=1}^{k}\beta_l^2 } 
	=\littleO{d\prod_{l=1}^{k}n_l\beta_l^2}.
    \end{split}
\end{equation}  
    Below, let us denote the shorthand 
    \begin{align*}
	\E[Z]=\E_{x_i}[Z(x_i)]=\int_{\RR^d} Z(x_i) dP_X(x_i) .
    \end{align*}
    It holds
\begin{equation}\label{eq:l2}
    \begin{split}
	\E [Z]^2 
	&= \E [Z^2] - \E[\abs{Z-\E Z}^2] \\
	&\geq \E [Z^2] - \int_0^\infty \PP(\abs{Z-\E Z}>\sqrt{t}) dt \\
	&\geq \E [Z^2] - \int_0^\infty 2\bigexp{-\frac{c\,t}{\norm{Z}_{\Lip}^2}}  dt \\
	&= \E [Z^2] - \frac{2}{c}\norm{Z}_{\Lip}^2,
    \end{split}
\end{equation}  
    where the 2nd inequality follows from Assumption \ref{ass:data_dist2}. 
    By Lemma \ref{lem:Ex_square_fx}, we have w.p.\ at least $1-\sum_{l=1}^k \bigexp{-\bigOmg{n_l}}$ over $(W_l)_{l=1}^{k}$ that
\begin{equation}\label{eq:l3}
    \begin{split}
% 	\begin{cases}
	    \E[Z^2]=\bigTheta{d\prod_{l=1}^{k}n_l\beta_l^2} .
% 	    \svmax{W_l}^2 = \bigO{\beta_l^2\max(n_l,n_{l-1})},\quad \forall\,l\in[k]
% 	\end{cases}
    \end{split}
\end{equation}  
 By combining \eqref{eq:l1}, \eqref{eq:l2} and \eqref{eq:l3}, we obtain that $\E[Z]=\bigOmg{\sqrt{d\prod_{l=1}^{k}n_l\beta_l^2}}.$
    Moreover, 
%     \begin{align*}
	$\E[Z]\leq\sqrt{\E[Z^2]}=\bigO{\sqrt{d\prod_{l=1}^{k}n_l\beta_l^2}} .$
%     \end{align*}
    As a result, we have that $\E[Z]=\bigTheta{\sqrt{d\prod_{l=1}^{k}n_l\beta_l^2}}$ w.p.\ at least $1 - \sum_{l=1}^{k}\exp(-\bigOmg{n_l})$ over $(W_l)_{l=1}^{k}.$
    Let us condition on this event and study probability bounds over the samples.
    Using Assumption \ref{ass:data_dist2},
    we have $\frac{1}{2}\E[Z]\leq Z\leq\frac{3}{2}\E[Z],$ hence $Z=\bigTheta{\sqrt{d\prod_{l=1}^{k}n_l\beta_l^2}},$
    w.p.\ at least
    \begin{align*}
	1-\bigexp{ -\bigOmg{ \frac{\min_{l\in[0,k]} n_l}{\prod_{l=1}^{k-1}\log(n_l)} } } .
    \end{align*}
    Taking the union bound over $N$ samples, followed by an intersection with the above event over the weights, finishes the proof.
% \end{proof}
% \fi

\subsection{Proof of Lemma \ref{lem:Ex_square_fx}}
% \ifpaper
% \begin{proof}
    The proof works by induction on $k.$
    Note that the statement holds for $k=0$ due to Assumption \ref{ass:data_dist}.
    Let us assume for now that the result holds for the first $k$ layers.
    To prove it for layer $k$, we condition on the intersection of this event and the event of Lemma \ref{lem:norm_feature_map} for $(W_l)_{l=1}^{k-1}$,
    and study probability bounds over $W_k$.
    Define $W_k=[w_1,\ldots,w_{n_k}]\in\RR^{n_{k-1}\times n_k}$ where $w_j\distas{}\mathcal{N}(0,\beta_k^2\Id_{n_{k-1}}).$
    Recall that by definition, $f_{k,j}(x)=\sigma(\inner{w_j, f_{k-1}(x)})$ for $j\in[n_k].$
    We have that
    \begin{align*}
	\E_x \norm{f_k(x) - \E_{x}[f_k(x)]}_2^2 = \sum_{j=1}^{n_k} \E_x \Big(f_{k,j}(x) - \E_{x}[f_{k,j}(x)]\Big)_2^2 .
    \end{align*}
    Taking the expectation over $W_k$, we have
    \begin{align*}
	&\E_{W_k} \E_x \norm{f_k(x) - \E_{x}[f_k(x)]}_2^2\\
% 	&\geq \sum_{j=1}^{n_k} \E_x \Big(\E_{w_j}[f_{k,j}(x)] - \E_{x}\E_{w_j}[f_{k,j}(x)]\Big)^2\\
% 	&= n_k \frac{\beta_k^2}{2\pi} \E_x \norm{f_{k-1}(x) - \E_{x} f_{k-1}(x)}_2^2 
	&= \E_{W_k}\E_x\norm{f_k(x)}_2^2 - \E_{W_k}\norm{\E_x[f_k(x)]}_2^2 \\
	&= \frac{n_k\beta_k^2}{2}\, \E_x\norm{f_{k-1}(x)}_2^2 - \E_x\E_y \sum_{j=1}^{n_k} \E_{w_j} \sigma\left(\inner{w_j,f_{k-1}(x)}\right) \sigma\left(\inner{w_j,f_{k-1}(y)}\right) \\
	&= \frac{n_k\beta_k^2}{2}\, \E_x\norm{f_{k-1}(x)}_2^2 - n_k\beta_k^2\, \E_x\E_y\norm{f_{k-1}(x)}_2 \norm{f_{k-1}(y)}_2  \sum_{r=0}^{\infty} \mu_r(\sigma)^2 \inner{\frac{f_{k-1}(x)}{\norm{f_{k-1}(x)}_2},\frac{f_{k-1}(y)}{\norm{f_{k-1}(y)}_2}}^r  \\
	&\geq \frac{n_k\beta_k^2}{2}\, \E_x\norm{f_{k-1}(x)}_2^2 - \mu_1(\sigma)^2\,n_k\beta_k^2\, \norm{\E_x[f_{k-1}(x)]}_2^2 - n_k\beta_k^2\, \sum_{\substack{r=0\\r\neq 1}}^{\infty} \mu_r(\sigma)^2 (\E_x\norm{f_{k-1}(x)})^2 \\
	&= \frac{n_k\beta_k^2}{2}\, \E_x\norm{f_{k-1}(x)}_2^2 - \frac{n_k\beta_k^2}{4}\, \norm{\E_x[f_{k-1}(x)]}_2^2 - \frac{n_k\beta_k^2}{4}\, (\E_x\norm{f_{k-1}(x)})^2,
    \end{align*}
where in the last step we use that $\mu_1(\sigma)^2=1/4$ and that $\sum\limits_{\substack{r=0\\r\neq 1}}^{\infty}	\mu_r(\sigma)^2=1/4$. 
Furthermore, the RHS of the last expression can be lower bounded by
    \begin{align*}
	 \frac{n_k\beta_k^2}{4}\, \left( \E_x\norm{f_{k-1}(x)}_2^2 - \norm{\E_x[f_{k-1}(x)]}_2^2 \right) 
	= \frac{n_k\beta_k^2}{4}\, \E_x\norm{f_{k-1}(x)-\E_x[f_{k-1}(x)]}_2^2 
	=\bigOmg{ d \prod_{l=1}^k n_l\beta_l^2 } ,
    \end{align*}
    where the last step follows by induction assumption.
    Moreover, it follows from above that
    \begin{align*}
	\E_{W_k} \E_x \norm{f_k(x) - \E_{x}[f_k(x)]}_2^2
% 	\leq \E_{W_k}\E_x\norm{f_k(x)}_2^2 
	\leq \frac{n_k\beta_k^2}{2}\, \E_x\norm{f_{k-1}(x)}_2^2
	= \bigO{ d \prod_{l=1}^k n_l\beta_l^2 } ,
    \end{align*}
    where the last estimate follows from Lemma \ref{lem:norm_feature_map}.
%     This together implies that
%     \begin{align*}
% 	\E_{W_k} \E_x \norm{f_k(x) - \E_{x}[f_k(x)]}_2^2
% 	= \bigTheta{ d \prod_{l=1}^k n_l\beta_l^2 } .
%     \end{align*}
    For every $j\in[n_k],$
    \begin{align*}
	\norm{\E_x \Big(f_{k,j}(x) - \E_{x}[f_{k,j}(x)]\Big)^2}_{\psi_1}
	&\leq \E_x \norm{\Big(f_{k,j}(x) - \E_{x}[f_{k,j}(x)]\Big)^2}_{\psi_1} \\
	&= \E_x \norm{f_{k,j}(x) - \E_{x}[f_{k,j}(x)]}_{\psi_2}^2 \\
	&\le c\, \E_x \norm{f_{k,j}(x)}_{\psi_2}^2 \\
	&\le c \,\E_x \left( \norm{f_{k,j}(x) - \E_{w_j}[f_{k,j}(x)]}_{\psi_2}^2 + \abs{\E_{w_j}[f_{k,j}(x)]}^2 \right)\\
	&\le c \,\E_x \left( \beta_k^2 \norm{f_{k,j}(x)}_{\Lip}^2 + \frac{\beta_k^2}{2\pi} \norm{f_{k-1}(x)}_2^2 \right)\\
	&\le c \beta_k^2\, \E_x \norm{f_{k-1}(x)}_2^2\\
% 	&\lesssim \beta_k^2 \E_x \norm{x}^2 \prod_{l=1}^{k-1} \svmax{W_l}^2 \\
	&= \bigO{ \beta_k^2 d \prod_{l=1}^{k-1} \beta_l^2 n_l },
    \end{align*}
    where $c$ is an absolute constant (which is allowed to change from line to line) and the last step uses Lemma \ref{lem:norm_feature_map}.
    By Bernstein's inequality (see \Bernstein), 
    \begin{align*}
	\frac{1}{2} \E_{W_k} \E_x \norm{f_k(x) - \E_{x}[f_k(x)]}_2^2\le \E_x \norm{f_k(x) - \E_{x}[f_k(x)]}_2^2\le \frac{3}{2} \E_{W_k} \E_x \norm{f_k(x) - \E_{x}[f_k(x)]}_2^2,
    \end{align*}
w.p.\ at least $1-\bigexp{-\bigOmg{n_k}}$ over $W_k.$ Thus, with that probability, we have that     \begin{align*}
	\E_x \norm{f_k(x) - \E_{x}[f_k(x)]}_2^2 
	=\bigTheta{ d \prod_{l=1}^k n_l\beta_l^2 }. 
    \end{align*}
    Taking the intersection of all the events finishes the proof.
% \end{proof}
% \fi

\subsection{Proof of Lemma \ref{lem:frobnorm_SIGMA}}
% \ifpaper
\begin{proof}
    By Lemma \ref{lem:norm_feature_map}, 
    we have $f_{k-1}(x)\neq 0$ w.p.\ at least $1-\sum_{l=1}^{k-1} \bigexp{-\bigOmg{n_l}} - \exp(-\bigOmg{d})$ over $(W_l)_{l=1}^{k-1}$ and $x.$
    Let us condition on this event and derive probability bounds over $W_k.$
    Let $W_k=[w_1,\ldots,w_{n_k}].$
    Then, $\norm{\Sigma_k(x)}_F^2=\sum_{j=1}^{n_k} \sigma'(\inner{f_{k-1}(x), w_j}).$
    Thus,
    \begin{align*}
	\E_{W_k} \norm{\Sigma_k(x)}_F^2 
	=n_k \E_{w_1} [\sigma'(-\inner{f_{k-1}(x), w_1}))]
	=n_k \E_{w_1} [(1-\sigma'(\inner{f_{k-1}(x), w_1}))]
	=n_k-\E_{W_k} \norm{\Sigma_k(x)}_F^2, 
    \end{align*}
    where we used the fact that $w_j$ has a symmetric distribution, $\sigma'(t)=1-\sigma'(-t)$ for $t\neq 0$, 
    and the set of $w_1\in\RR^{n_{k-1}}$ for which $\inner{f_{k-1}(x), w_j}=0$ has measure zero.
    This implies that $\E_{W_k}\norm{\Sigma_k(x)}_F^2=n_k/2.$
    By Hoeffding's inequality on bounded random variables (see \Hoeff), we have  
    \begin{align*}
	\PP\left( \abs{\norm{\Sigma_k(x)}_F^2 - \E_{W_k}\norm{\Sigma_k(x)}_F^2 }>t \right) \leq 2\bigexp{ -\frac{2t^2}{n_k} } . 
    \end{align*}
    Picking $t=n_k/4$ finishes the proof.
\end{proof}
% \fi

\subsection{Proof of Lemma \ref{lem:frobnorm_W_SIGMA}}
% \ifpaper
% \begin{proof}
    The proof works by induction on $p.$
    First, Lemma \ref{lem:frobnorm_SIGMA} implies that the statement holds for $p=k.$
    Suppose it holds for some $p-1.$
    Note that this implies $f_{p-1}(x)\neq 0$ because otherwise $\Sigma_{p-1}(x)=0$, which contradicts the induction assumption.
    Let $S_p=\Sigma_{k}(x)\prod_{l=k+1}^{p} W_l\Sigma_l(x).$
    Then, $S_p=S_{p-1} W_p\Sigma_p(x).$
%     By induction assumption, $\norm{S_{p-1}}_F^2 = \bigOmg{n_{k} \prod_{l=k+1}^{p-1} n_l\beta_l^2}.$
    Let $W_p=[w_1,\ldots,w_{n_p}].$
    Then,
    \begin{align*}
	\norm{S_p}_F^2 
	= \sum_{j=1}^{n_p} \norm{S_{p-1}w_j}_2^2 \sigma'(g_{p,j}(x))
	= \sum_{j=1}^{n_p} \norm{S_{p-1}w_j}_2^2 \sigma'(\inner{f_{p-1}(x),w_j}).
    \end{align*}
    We have 
    \begin{align*}
	\E_{W_p} \norm{S_p}_F^2
	&= n_p \E_{w_1} \norm{S_{p-1}w_1}_2^2 \sigma'(\inner{f_{p-1}(x),w_1}) \\
	&= n_p \E_{w_1} \norm{S_{p-1}(-w_1)}_2^2 \sigma'(\inner{f_{p-1}(x),(-w_1)}) \\
	&= n_p \E_{w_1} \norm{S_{p-1}w_1}_2^2 (1-\sigma'(\inner{f_{p-1}(x),w_1})) \\
	&= n_p \E_{w_1} \norm{S_{p-1}w_1}_2^2 - \E_{W_p} \norm{S_p}_F^2 \\
	&= n_p \beta_p^2 \norm{S_{p-1}}_F^2 - \E_{W_p} \norm{S_p}_F^2 ,
    \end{align*}
    where the second step uses that $w_1$ has a symmetric distribution, the third step uses the fact that $\sigma'(t)=1-\sigma'(-t)$ for $t\neq 0$
    and the set of $w_1$ for which $\inner{f_{p-1}(x),w_1})=0$ has measure zero.
    Thus,
    \begin{align*}
	\E_{W_p} \norm{S_p}_F^2 = \frac{n_p}{2} \beta_p^2 \norm{S_{p-1}}_F^2 
	= \bigTheta{n_{k} \prod_{l=k+1}^{p} n_l\beta_l^2},
    \end{align*}
    where the last equality holds by induction assumption.
    Moreover,
    \begin{align*}
	\norm{ \norm{S_{p-1}w_j}_2^2 \sigma'(\inner{f_{p-1}(x),w_j}) }_{\psi_1}
	\le c \norm{ \norm{S_{p-1}w_j}_2 }_{\psi_2}^2 
	\le c \beta_p^2 \norm{S_{p-1}}_F^2 ,
    \end{align*}
    where $c$ is an absolute constant (which is allowed to change from passage to passage).
    By Bernstein's inequality (see \Bernstein), we have 
     \begin{align*}
	\frac{1}{2} \E_{W_p}\norm{S_p}_F^2 \leq \norm{S_p}_F^2 \leq \frac{3}{2} \E_{W_p}\norm{S_p}_F^2
     \end{align*}
    w.p.\ at least $1-e^{-\bigOmg{n_p}}.$
    Taking the intersection of all the events finishes the proof.
% \end{proof}
% \fi

\section{Missing Proofs from Section \ref{sec:NTK_finite}}
\subsection{Proof of Corollary \ref{cor:zero_loss}}\label{app:zero_loss}
% \begin{proof}	
    Let $p=\sum_{l=1}^{L}n_ln_{l-1}$.
    Let $\frac{\partial F_L}{\partial\theta}\in\RR^{N\times p}$ denote the true Jacobian of $F_L$ (without the convention that $\sigma'(0)=0$) at a differentiable point $\theta.$
    Note that, by Lemma B.2 of \citep{QuynhMarco2020}, $F_L(\theta)$ is locally Lipschitz, thus a.e. differentiable.
    Let $J(\theta)\in\RR^{N\times p}$ be the Jacobian matrix defined in \eqref{eq:Jac} (with the convention that $\sigma'(0)=0$).
    Let $$\Omega_1=\Setbar{\theta\in\RR^p}{\rank(J(\theta))=N}$$ and $$\Omega_0=\Setbar{\theta\in\RR^p}{\exists l\in[L-1],j\in[n_l],i\in[N]: g_{lj}(x_i)=0} .$$ 
    Let $\lambda_p$ denote the Lebesgue measure in $\RR^p.$
    Pick an even integer $r$ s.t. $r\ge 0.1+2/\delta'$. Then, Theorem \ref{thm:empirical_Jacobian} implies that,
    with high probability (as stated in the corollary) over the training data, we have $\lambda_p(\Omega_1)>0.$ 
    For every $\theta\in\Omega_1$, it holds that $f_l(\theta,x_i)\neq 0$ for all $0\leq l\leq L-2,i\in[N]$,
    because otherwise $J(\theta)_{i:}=0$ (which leads to a contradiction).
    Thus, every $\theta\in\Omega_1\cap\Omega_0$ must satisfy $0=g_{lj}(\theta,x_i)=\inner{f_{l-1}(\theta,x_i), (W_l)_{:j}}$
    for some $l\in[L-1],j\in[n_l],i\in[N].$ 
    The set of $W_l$ which satisfies this equation has measure zero,
    and thus it holds $\lambda_p(\Omega_1\cap\Omega_0)=0.$
    Combining these facts, we get $\lambda_p(\Omega_1\setminus\Omega_0)>0.$
    Pick some $\theta_0\in\Omega_1\setminus\Omega_0.$
    Then clearly, we have the following:
    \emph{(i)} $J(\theta_0)=\frac{\partial F_L}{\partial\theta}\Big\vert_{\theta=\theta_0}$
    and \emph{(ii)} $\rank(J(\theta_0))=N.$ 
    This implies that there exists $\theta'\in\RR^p$ such that $\left(\frac{\partial F_L}{\partial\theta}\Big\vert_{\theta=\theta_0}\right)\theta'=Y$
    and thus,
    \begin{align*}
	y_i=
	\left(\left(\frac{\partial F_L}{\partial\theta}\Big\vert_{\theta=\theta_0}\right)\theta'\right)_{i}
	=\inner{\frac{\partial f_L(\theta,x_i)}{\partial\theta}\Big\vert_{\theta_0}, \theta'}
	=\lim_{\epsilon\to 0} \underbrace{\frac{f_L(\theta_0+\epsilon\theta',x_i)-f_L(\theta_0,x_i)}{\epsilon}}_{=:h_\epsilon(x_i)},\quad\forall\,i\in[N] .
    \end{align*}
    The result follows by noting that $h_\epsilon(x_i)$ can be implemented by a network of the same depth with twice more neurons at every hidden layer.
% \end{proof}

\subsection{Proof of Lemma \ref{lem:norm_W_SIGMA_wL}}\label{app:norm_W_SIGMA_wL}
% \ifpaper
% \begin{proof}
    By a change of index $k+1\to k$, it is equivalent to prove the following:
    \begin{align*}
	\norm{\Sigma_{k}(x) \left(\prod_{l=k+1}^{L-1} W_l\Sigma_l(x)\right) W_L }_2^2
	= \bigTheta{\beta_L^2\, n_{k} \prod_{l=k+1}^{L-1} n_l\beta_l^2}.
    \end{align*}
    Let $B=\Sigma_{k}(x) \left(\prod_{l=k+1}^{L-1} W_l\Sigma_l(x)\right).$
    By Lemma \ref{lem:frobnorm_W_SIGMA}, $\norm{B}_F^2  = \bigTheta{n_k \prod_{l=k+1}^{L-1} n_l\beta_l^2}$
    w.p.\ at least $1-\sum_{l=1}^{L-1}\bigexp{-\bigOmg{n_l}}-\bigexp{-\bigOmg{d}}.$
    Moreover, one can also show that with a similar probability,
    \begin{align*}
	\norm{B}_{\op}^2=\bigO{\frac{n_k}{\min_{l\in[k,L-1]}n_l} \prod_{l=k+1}^{L-1} n_l\beta_l^2} .
    \end{align*}
    The proof of this is postponed below.
    Let us condition on the intersection of these two events of $(W_l)_{l=1}^{L-1}.$
    Then, by Hanson-Wright inequality (see \HWineq), we have
    \begin{align*}
	\frac{1}{2}\E_{W_L}\norm{BW_L}_2^2\leq \norm{BW_L}_2^2 \leq \frac{3}{2}\E_{W_L}\norm{BW_L}_2^2 .
    \end{align*}
    w.p.\ at least $1-e^{-\bigOmg{ \norm{B}_F^2/\norm{B}_{\op}^2 } }$ over $W_L.$ 
    Plugging the above bounds leads to the desired result. 
%     In particular, we have
%     $\E_{W_L}\norm{BW_L}_2^2=\beta_L^2\norm{B}_F^2=\bigTheta{\beta_L^2 n_{k} \prod_{l=k+1}^{L-1} n_l\beta_l^2}.$
    
    In the remainder of this proof, we verify the above bound of $\norm{B}_{\op}^2$. 
    Concretely, we want to show that for every $p,q\in[L-1]$, 
    the following holds w.p.\ at least $1-\sum_{l=p-1}^{q}\bigexp{-\bigOmg{n_l}}$
    \begin{align}\label{eq:boundinduction}
	\norm{\prod_{l=p}^{q} W_l\Sigma_l(x)}_{\op}^2
	=\bigO{ \frac{\prod_{l=p-1}^q n_l}{\min_{l\in[p-1,q]}n_l}\, \prod_{l=p}^{q}\beta_l^2} .
    \end{align}
    Given that, the bound of $\norm{B}_{\op}^2$ follows immediately by letting $p=k+1, q=L-1$, and noting $\norm{\Sigma_k(x)}_{\op}\leq 1.$
    The proof of \eqref{eq:boundinduction} is by induction over the length $s=q-p$.
    First, \eqref{eq:boundinduction} holds for $s=0$ 
    since $\norm{W_p\Sigma_p(x)}_{\op}^2\leq \norm{W_p}_{\op}^2 = \bigO{\beta_p^2\max(n_p,n_{p-1})}$ where the last estimate
    follows from the standard bounds on the operator norm of Gaussian matrices (see \GaussOpNorm).
    Suppose that \eqref{eq:boundinduction} holds for $p, q$ such that $q-p\le s-1$, 
    and we want to prove it for all pairs $p,q$ with $q-p=s$. 
    It suffices to provide bound for one pair of $(p,q)$ and then do a union bound over all possible pairs.
    In the following, let
    \begin{align*}
	j=\argmin_{l\in[p-1,q]} n_l, \quad t=\argmin_{l\in[p-1,q]\setminus\Set{j}} n_l.
    \end{align*}
    We analyze three cases below. In the first case,
    namely $j\in[p,q-1]$, then
    \begin{align*}
	\norm{\prod_{l=p}^{q} W_l\Sigma_l(x)}_{\op}^2
	&\leq\norm{\prod_{l=p}^{j} W_l\Sigma_l(x)}_{\op}^2\, \norm{\prod_{l=j+1}^{q} W_l\Sigma_l(x)}_{\op}^2
	=\bigO{ \frac{\prod_{l=p-1}^{j} n_l}{\min_{l\in[p-1,j]}n_l}\, \frac{\prod_{l=j}^q n_l}{\min_{l\in[j,q]}n_l}\, \prod_{l=p}^{q}\beta_l^2}\\
	&=\bigO{ \frac{\prod_{l=p-1}^q n_l}{n_j}\, \prod_{l=p}^{q}\beta_l^2 }
	=\bigO{ \frac{\prod_{l=p-1}^q n_l}{\min_{l\in[p-1,q]}n_l}\, \prod_{l=p}^{q}\beta_l^2 },
    \end{align*}
    where the first equality follows from our induction assumption, the second equality follows from the current choice of $j.$
    In the second case, if $j=q$ and $t\in[p,q-1]$, then similarly one has
    \begin{align*}
	\norm{\prod_{l=p}^{q} W_l\Sigma_l(x)}_{\op}^2
	&\leq\norm{\prod_{l=p}^{t} W_l\Sigma_l(x)}_{\op}^2\, \norm{\prod_{l=t+1}^{q} W_l\Sigma_l(x)}_{\op}^2
	=\bigO{ \frac{\prod_{l=p-1}^{t} n_l}{\min_{l\in[p-1,t]}n_l}\, \frac{\prod_{l=t}^q n_l}{\min_{l\in[t,q]}n_l}\, \prod_{l=p}^{q}\beta_l^2 }\\
	&=\bigO{ \frac{\prod_{l=p-1}^{t} n_l}{n_t}\, \frac{\prod_{l=t}^q n_l}{n_q}\, \prod_{l=p}^{q}\beta_l^2 }
	=\bigO{ \frac{\prod_{l=p-1}^q n_l}{\min_{l\in[p-1,q]}n_l}\, \prod_{l=p}^{q}\beta_l^2 } .
    \end{align*}
    It remains to handle the case in which either $(j=p-1)$ or $(j=q \textrm{ and } t=p-1)$. 
    To do so, we use an $\epsilon$-net argument.  
    Since $\norm{\Sigma_q(x)}_{\op}\leq 1$, it holds that
    \begin{align}\label{eq:scase}
	\norm{\prod_{l=p}^{q} W_l\Sigma_l(x)}_{\op}^2 
	\leq\norm{\left(\prod_{l=p}^{q-1} W_l\Sigma_l(x)\right) W_q}_{\op}^2.
    \end{align}
    Furthermore, by using Lemma 4.4.1 of \citep{vershynin2018high},
    \begin{align}\label{eq:scase2}
	\norm{\left(\prod_{l=p}^{q-1} W_l\Sigma_l(x)\right) W_q}_{\op}^2
	\leq 4 \sup_{y\in{\sf N}_{1/2}^{p-1}} \norm{\underbrace{y^T \left(\prod_{l=p}^{q-1} W_l\Sigma_l(x)\right)}_{=:z^T} W_q}_2^2,
    \end{align}
    where ${\sf N}_{1/2}^{p-1}$ is a $\frac{1}{2}$-net of the unit sphere in $\RR^{n_{p-1}}.$ 
    Fix $y\in {\sf N}_{1/2}^{p-1}$, and let $z$ be defined as above, then clearly $z$ is independent of $W_q$, and it holds by induction assumption
    \begin{align}\label{eq:sz}
	\norm{z}_2^2=\bigO{ \frac{\prod_{l=p-1}^{q-1} n_l}{\min_{l\in[p-1,q-1]}n_l} \prod_{l=p}^{q-1}\beta_l^2 }
    \end{align}
    w.p.\ at least $1-\sum_{l=p}^{q-1} \bigexp{-\bigOmg{n_l}}$ over $(W_l)_{l=1}^{q-1}.$
    Conditioned on this event of the first $q-1$ layers, 
    let us study concentration bound for $\norm{z^TW_q}_2^2$ where the only randomness is over $W_q.$
    Note that $\norm{z^T W_q}_2^2=\sum_{j=1}^{n_q} \inner{z,(W_q)_{:j}}^2$ 
    and $\norm{\inner{z,(W_q)_{:j}}^2}_{\psi_1}\leq c_1\beta_q^2\norm{z}_2^2.$
    Thus by Bernstein's inequality (see \Bernstein),
    \begin{align*}
	\PP\left(\abs{\norm{z^T W_q}_2^2-\E_{W_q}\norm{z^T W_q}_2^2}>t\right)
	\leq\bigexp{-c_2\min\left(\frac{t}{c_1\beta_q^2\norm{z}_2^2}, \frac{t^2}{n_q c_1^2\beta_q^4\norm{z}_2^4}\right)},
    \end{align*}
    for some constant $c_2$. 
    By plugging $t=C c_1 \max(n_q, n_{p-1})\beta_q^2\norm{z}_2^2/c_2$ for some $C>\max(c_2,\log 5)$, 
    and $\E_{W_q}\norm{z^T W_q}_2^2=n_q\beta_q^2\norm{z}_2^2$, 
    one obtains $\norm{z^T W_q}_2^2=\bigO{ \max(n_q,n_{p-1})\beta_q^2\norm{z}_2^2}$
    w.p.\ at least $1-e^{-C\max(n_q,n_{p-1})}.$
    Taking the union bound over $y\in{\sf N}_{1/2}^{p-1}$, 
    we get 
    \begin{align*}
	\sup_{y\in{\sf N}_{1/2}^{p-1}} \norm{z^T W_q}_2^2
	=\sup_{y\in{\sf N}_{1/2}^{p-1}} \norm{y^T \left(\prod_{l=p}^{q-1} W_l\Sigma_l(x)\right) W_q}_2^2 
	=\bigO{ \max(n_q,n_{p-1})\beta_q^2\norm{z}^2}
    \end{align*}
    w.p.\ at least
%     \begin{align*}
% 	1-\abs{{\sf N}_{1/2}^{p-1}}e^{-C\max(n_q,n_{p-1})}
	$1-\abs{{\sf N}_{1/2}^{p-1}} e^{-C\max(n_q,n_{p-1})} = 1-e^{-\bigOmg{\max(n_q,n_{p-1})}}$, 
%     \end{align*}
%     where the inquality follows from our choice of $C.$
    where we used the fact that $\abs{{\sf N}_{1/2}^{p-1}}\leq 5^{n_{p-1}}$ and $C>\log 5.$
    This combined with \eqref{eq:scase},\eqref{eq:scase2} and \eqref{eq:sz} implies
    \begin{align*}
	\norm{\prod_{l=p}^{q} W_l\Sigma_l(x)}_{\op}^2
	=\bigO{ \max(n_q,n_{p-1}) \beta_q^2 \, \frac{\prod_{l=p-1}^{q-1} n_l}{\min_{l\in[p-1,q-1]}n_l} \prod_{l=p}^{q-1}\beta_l^2 }
	=\bigO{ \frac{\prod_{l=p-1}^{q} n_l}{\min_{l\in[p-1,q]}n_l} \prod_{l=p}^{q}\beta_l^2 },
    \end{align*}
    where the last estimate follows from the current conditions on $(j,t).$
    To summarize, we have shown that \eqref{eq:boundinduction} holds for every given pair $(p,q)$ such that $q-p=s.$
    Taking the union bound over all these pairs finishes the proof. 
    Finally, note that doing the union bound above does not affect the probability of the final result
    since the number of all possible pairs is only a constant.
%     \begin{equation}\label{eq:fb1}
% 	\frac{\prod_{l=p-1}^{q-1} n_l\max(n_q, n_{p-1})}{\min_{l\in\{p-1, \ldots, q-1\}}n_l}\le \frac{\prod_{l=p-1}^q n_l}{\min_{l\in\{p-1, \ldots, q\}}n_l},
%     \end{equation}
% \end{proof}
% \fi

\section{Missing Proofs from Section \ref{sec:svmin_Fk}}
\subsection{Proof of Lemma \ref{lem:svmin_Fk_chernoff}}\label{app:lem:svmin_Fk_chernoff}
% \ifpaper
% \begin{proof}
    For a subgaussian random variable $Z$, recall that $\PP(Z>t)\leq\exp(-c\, t^2 / \norm{Z}_{\psi_2}^2),$ 
    where $c$ is an absolute constant.
    In the following, let $t=\frac{4\beta_k\norm{F_{k-1}}_F}{c} \sqrt{\max\Big(1, \log\frac{8\beta_k^2\norm{F_{k-1}}_F^2}{c\,\lambda}\Big) }.$
    Let us denote the shorthand $W_k=[w_1,\ldots,w_{n_k}]\in\RR^{n_{k-1}\times n_k}$,
    and denote by $A\in\RR^{N\times n_k}$  a matrix such that
    $A_{:j}=\sigma(F_{k-1}w_j)\, \mathbbm{1}_{\norm{\sigma(F_{k-1}w_j)}_2\leq t}$ for all $j\in[n_k].$
    Let
    \begin{align*}
	&G = \E_{w\distas{}\mathcal{N} (0,\beta_k^2\,\Id_{n_{k-1}})} \left[ \sigma(F_{k-1}w)\sigma(F_{k-1}w)^T \right],\\
	&\hat{G} = \E_{w\distas{}\mathcal{N}(0,\beta_k^2\,\Id_{n_{k-1}})} \left[ \sigma(F_{k-1}w)\sigma(F_{k-1}w)^T \, \mathbbm{1}_{\norm{\sigma(F_{k-1}w)}_2\leq t} \right].
    \end{align*}
    Note $\lambda=\evmin{G}$, $\evmin{F_kF_k^T} \geq\evmin{AA^T}$ and
%     \begin{align*}
	$\evmax{A_{:j}A_{:j}^T} \leq t^2 .$
%     \end{align*}
    By Matrix Chernoff inequality (see \MatrixChernoff), 
    it holds for every $\epsilon\in[0,1)$
    \begin{align*}
	\PP\Big( \evmin{AA^T}\leq (1-\epsilon)\evmin{\E AA^T} \Big) 
	\leq N\left[\frac{e^{-\epsilon}}{(1-\epsilon)^{1-\epsilon}}\right]^{\evmin{\E AA^T}/t^2}.
    \end{align*}
    Pick $\epsilon=1/2$. Then,
    \begin{align*}
	\PP\left( \evmin{AA^T} \leq n_k\evmin{\hat{G}}/2 \right)
	\leq \bigexp{-c_1\, n_k\evmin{\hat{G}}/t^2+\log N}.
    \end{align*}
    Thus, for $n_k\geq\frac{t^2}{c_1\evmin{\hat{G}}} \log\frac{N}{\delta}$
    we have $\evmin{AA^T}\geq \frac{n_k\evmin{\hat{G}}}{2}$ w.p.\ $\geq 1-\delta.$
    Moreover, 
    \begin{align*}
	\norm{\hat{G}-G}_2
	&\leq \E\norm{\sigma(F_{k-1}w)\sigma(F_{k-1}w)^T\, \mathbbm{1}_{\norm{\sigma(F_{k-1}w)}_2\leq t} - \sigma(F_{k-1}w)\sigma(F_{k-1}w)^T}_2 \\
% 	&= \E\norm{\sigma(F_{k-1}w)\sigma(F_{k-1}w)^T\, \mathbbm{1}_{\norm{\sigma(F_{k-1}w)}_2> t} }_2\\
	&= \E\left[ \norm{\sigma(F_{k-1}w)}_2^2 \, \mathbbm{1}_{\norm{\sigma(F_{k-1}w)}_2>t} \right]\\
	&= \int_{s=0}^{\infty} \PP\left( \norm{\sigma(F_{k-1}w)}_2 \, \mathbbm{1}_{\norm{\sigma(F_{k-1}w)}_2>t} > \sqrt{s} \right) ds\\
	&= \int_{s=0}^{\infty} \PP\left( \norm{\sigma(F_{k-1}w)}_2>t \right) \PP\left( \norm{\sigma(F_{k-1}w)}_2 > \sqrt{s} \right) ds\\
	&\leq \int_{s=0}^{\infty} \bigexp{-c\frac{t^2 + s}{4\beta_k^2\norm{F_{k-1}}_F^2}} ds\\
	&\leq \lambda/2 ,
    \end{align*}
    where the second inequality uses the fact that $\norm{\norm{\sigma(F_{k-1}w)}_2}_{\psi_2}\leq 2\beta_k\norm{F_{k-1}}_F.$
    It follows that $\evmin{\hat{G}}\geq\lambda/2.$
    In total, for $n_k\geq \frac{2 t^2}{c_1\lambda} \log\frac{N}{\delta}$, it holds w.p.\ at least $1-\delta$ that
    \begin{align*}
	\svmin{F_k}^2
	=\evmin{F_kF_k^T}
	\geq \evmin{AA^T}
	\geq n_k \evmin{\hat{G}}/2
	\geq n_k \lambda/4,
    \end{align*}
    where we used the condition $n_k\geq N$ in the above equality.
% \end{proof}
% \fi

\subsection{Proof of Lemma \ref{lem:bound_lambda_star_k}}\label{app:lem:bound_lambda_star_k}
% \ifpaper
% \begin{proof}
    Let $D=\diag(\norm{(F_{k})_{1:}}_2, \ldots, \norm{(F_{k})_{N:}}_2)$ and $\hat{F}_{k}= D^{-1} F_{k}.$
    Then, by the homogeneity of $\sigma$, we have
    \begin{align*}
	\evmin{ \E [\sigma(F_{k}w)\sigma(F_{k}w)^T] } 
	&=\evmin{ D \E\left[\sigma(\hat{F}_{k}w)\sigma(\hat{F}_{k}w)^T\right] D } \\
	&=\beta_{k+1}^2 \evmin{D \left[\mu_0(\sigma)^2 1_N 1_N^T+\sum_{s=1}^{\infty}\mu_s(\sigma)^2 (\hat{F}_{k}^{*s}) (\hat{F}_{k}^{*s})^T \right] D}\\
	&\geq\beta_{k+1}^2\, \mu_r(\sigma)^2 \evmin{D (\hat{F}_{k}^{*r}) (\hat{F}_{k}^{*r})^T D}\\
	&=\beta_{k+1}^2\, \mu_r(\sigma)^2 \evmin{D^{-(r-1)} (F_{k}^{*r}) (F_{k}^{*r})^T D^{-(r-1)}}\\
% 	&\ge\mu_r(\sigma)^2 (\evmin{D^{-1}})^{2(r-1)}\left(\svmin{F_{k}^{*r}}\right)^2\\
	&\geq\beta_{k+1}^2\, \mu_r(\sigma)^2 \frac{\evmin{(F_{k}^{*r})(F_{k}^{*r})^T}}{\max_{i\in [N]}\norm{(F_{k})_{i:}}_2^{2(r-1)}} ,
    \end{align*}
% \end{proof}
% \fi
where the second equality uses the Hermite expansion of $\sigma$ (for the proof see Lemma D.3 of \citep{QuynhMarco2020}).
\subsection{Proof of Lemma \ref{lem:Fk_vs_Fk_tilde}}\label{app:lem:Fk_vs_Fk_tilde}
% \ifpaper
% \begin{proof}
\revision{    We have that
    \begin{equation}\label{eq:Hprod}
	\begin{split}
	    (F_{k}^{*r}) (F_{k}^{*r})^T 
	    &= \left(F_{k} F_{k}^T\right)^{\circ r}.
	\end{split}
    \end{equation}    
After some manipulations, we obtain
    \begin{equation}\label{eq:manipnew}
    \begin{split}
	F_{k}F_{k}^T 
	&= \tilde F_{k} \tilde F_{k}^T + \norm{\mu}_2^2 1_N1_N^T + \Lambda 1_N 1_N^T + 1_N 1_N^T\Lambda \\
	&= \tilde F_{k} \tilde F_{k}^T + \left(\norm{\mu}_2 1_N+\frac{\Lambda 1_N}{\norm{\mu}_2}\right)\left(\norm{\mu}_2 1_N+\frac{\Lambda 1_N}{\norm{\mu}_2}\right)^T -\frac{\Lambda 1_N 1_N^T\Lambda}{\norm{\mu}_2^2} \\
	&\succeq \tilde F_{k} \tilde F_{k}^T -\frac{\Lambda 1_N 1_N^T\Lambda}{\norm{\mu}_2^2},
    \end{split} 
    \end{equation}   
    where in the last passage we use that $\left(\norm{\mu}_2 1_N+\frac{\Lambda 1_N}{\norm{\mu}_2}\right)\left(\norm{\mu}_2 1_N+\frac{\Lambda 1_N}{\norm{\mu}_2}\right)^T$ is PSD. Note that the RHS of \eqref{eq:manipnew} is also PSD since 
    \begin{equation*}
   \tilde F_{k} \tilde F_{k}^T \succeq \frac{\tilde F_{k} \mu\mu^T\tilde F_{k}^T}{\norm{\mu}_2^2}=\frac{\Lambda 1_N 1_N^T\Lambda}{\norm{\mu}_2^2}.
    \end{equation*}
    Hence, the $r$-th Hadamard power of the LHS of \eqref{eq:manipnew} is an upper bound (in the PSD sense) of the $r$-th Hadamard power of the RHS of \eqref{eq:manipnew}, which concludes the proof.}
% \end{proof}
% \fi

\subsection{Proof of Lemma \ref{lem:svmin_Fk_tilde_Khatri_Rao}}\label{app:lem:svmin_Fk_tilde_Khatri_Rao}
% \ifpaper
% \begin{proof}
\revision{Note that 
\begin{equation}
\left(\tilde F_{k} \tilde F_{k}^T -\frac{\Lambda 1_N 1_N^T\Lambda}{\norm{\mu}_2^2}\right)^{\circ r} = \left(\tilde F_{k} \tilde F_{k}^T\right)^{\circ r} + \sum_{i=1}^r {r \choose i}(-1)^i\frac{\Lambda^i\left(\tilde F_{k} \tilde F_{k}^T\right)^{\circ (r-i)}\Lambda^i}{\norm{\mu}_2^{2i}}.
\end{equation}
Thus, an application of Weyl's inequality gives
\begin{equation}\label{eq:Win}
\evmin{\left(\tilde F_{k} \tilde F_{k}^T -\frac{\Lambda 1_N 1_N^T\Lambda}{\norm{\mu}_2^2}\right)^{\circ r}}\ge \evmin{\left(\tilde F_{k} \tilde F_{k}^T\right)^{\circ r}}-\left\|\sum_{i=1}^r {r \choose i}(-1)^i\frac{\Lambda^i\left(\tilde F_{k} \tilde F_{k}^T\right)^{\circ (r-i)}\Lambda^i}{\norm{\mu}_2^{2i}}\right\|_{\rm op}.
\end{equation}}

\revision{We start by bounding the term $\evmin{\left(\tilde F_{k} \tilde F_{k}^T\right)^{\circ r}}$.} From Gershgorin circle theorem, one obtains
    \begin{align}
	&\evmin{(\tilde{F}_{k}^{*r}) (\tilde{F}_{k}^{*r})^T}  \geq \min\limits_{i\in[N]} \|(\tilde{F}_k)_{i:}\|_2^{2r} - N \max\limits_{i\neq j} | \langle(\tilde{F}_{k})_{i:}, (\tilde{F}_{k})_{j:}\rangle |^r , \label{eq:svmin_Fk_lob} \\
	&\evmin{(\tilde{F}_{k}^{*r}) (\tilde{F}_{k}^{*r})^T} \leq \max\limits_{i\in[N]} \|(\tilde{F}_k)_{i:}\|_2^{2r} + N \max\limits_{i\neq j} | \langle(\tilde{F}_{k})_{i:}, (\tilde{F}_{k})_{j:}\rangle |^r . \label{eq:svmin_Fk_upb}
    \end{align}
    By Lemma \ref{lem:square_fx}, it holds w.p.\ at least $1-N\bigexp{ -\bigOmg{ \frac{\min_{l\in[0,k]} n_l}{\prod_{l=1}^{k-1}\log(n_l)} } } - \sum_{l=1}^{k}\exp(-\bigOmg{n_l})$ that
    \begin{align}\label{eq:svmin_diag}
	\|(\tilde{F}_k)_{i:}\|_2^{2r}
	= \bigTheta{ \left(d \prod_{l=1}^{k} n_l\beta_l^2\right)^{r} }, \quad\forall\,i\in[N].
    \end{align}
    In the following, we bound the second term on the RHS of \eqref{eq:svmin_Fk_upb}.
    For a fixed $j\in[N]$, Lemma \ref{lem:square_fx} implies that w.p.\ at least $1-\bigexp{ -\bigOmg{ \frac{\min_{l\in[0,k]} n_l}{\prod_{l=1}^{k-1}\log(n_l)} } } - \sum_{l=1}^{k}\exp(-\bigOmg{n_l})$
    over $(W_l)_{l=1}^k$ and $x_j$, we have
    \begin{align}\label{eq:event_Fkj}
	\norm{(\tilde{F}_k)_{j:}}_2^2 
	= \bigTheta{ d \prod_{l=1}^{k} n_l\beta_l^2 } .
    \end{align}
    Moreover, Theorem \ref{thm:lip_const_fk} implies that w.p.\ at least $1-\sum_{l=1}^{k}\bigexp{-\bigOmg{n_l}}$ over $(W_l)_{l=1}^{k},$
    \begin{align}\label{eq:event_lip_fk}
	\norm{f_k(x)-\E_x f_k(x)}_{\Lip}^2
	=\bigO{ \frac{\prod_{l=0}^k n_l}{\min_{l\in[0,k]}n_l}\, \prod_{l=1}^{k-1}\log(n_l)\, \prod_{l=1}^{k}\beta_l^2 } .
    \end{align}
    Let us condition on the intersection of these two events of $(W_l)_{l=1}^k$ and $x_j$, and derive probability bounds over $x_i$, for every $i\neq j.$
    Let $h(x_i)=\inner{(\tilde{F}_{k})_{i:}, (\tilde{F}_{k})_{j:}}$ be a function of $x_i$, then
    \begin{align*}
	\norm{h}_{\Lip}^2 
	\leq \norm{(\tilde{F}_{k})_{j:}}_2^2 \norm{f_k(x_i)-\E_x f_k(x_i)}_{\Lip}^2 
	=\bigO{\left(d\prod_{l=1}^{k}n_l\beta_l^2\right)^2  \frac{\prod_{l=1}^{k-1}\log(n_l)}{\min_{l\in[0,k]}n_l} } ,
    \end{align*}
    where the last estimate follows from \eqref{eq:event_Fkj} and \eqref{eq:event_lip_fk}.
    Using Assumption \ref{ass:data_dist2}, followed by a union bound over $\Set{x_i}_{i\neq j}$, we have for every $t>0$ that
    \begin{align}\label{eq:TSineq3}
	\mathbb P\left( \max_{i\in[N], i\neq j} \abs{\inner{(\tilde{F}_{k})_{i:}, (\tilde{F}_{k})_{j:}}} \geq t \right)
	&\leq (N-1) \bigexp{- \frac{t^2}{ \bigO{\left(d\prod_{l=1}^{k}n_l\beta_l^2\right)^2  \frac{\prod_{l=1}^{k-1}\log(n_l)}{\min_{l\in[0,k]}n_l} } } } .
    \end{align}
    Pick $t=N^{-1/(r-0.1)} \left( d \prod_{l=1}^{k} n_l\beta_l^2 \right)$. 
    Then, taking the intersection bound with \eqref{eq:event_Fkj} and \eqref{eq:event_lip_fk} yields 
    \begin{align}\label{eq:svmin_offdiag0}
	N\max\limits_{i\in[N], i\neq j} | \langle(\tilde{F}_{k})_{i:}, (\tilde{F}_{k})_{j:}\rangle |^r 
	\leq N\frac{\left( d \prod_{l=1}^{k} n_l\beta_l^2 \right)^r}{N^{r/(r-0.1)}} 
	= \littleO{ \left(d \prod_{l=1}^{k} n_l\beta_l^2\right)^r } 
    \end{align}
    w.p.\ at least
    \begin{align*}
	1 - (N-1) \bigexp{ - \bigOmg{ \frac{\min_{l\in[0,k]}n_l}{N^{2/(r-0.1)} \prod_{l=1}^{k-1}\log(n_l)} } } - \sum_{l=1}^{k} \bigexp{-\bigOmg{n_l}} .
    \end{align*}
    Since this holds for every given $x_j$, taking the union bound over $j\in[N]$ yields that 
    \begin{align}\label{eq:svmin_offdiag}
	N\max\limits_{i\neq j} | \langle(\tilde{F}_{k})_{i:}, (\tilde{F}_{k})_{j:}\rangle |^r
	= \littleO{ \left(d \prod_{l=1}^{k} n_l\beta_l^2\right)^r } 
    \end{align}
    w.p.\ at least
    \begin{align}\label{eq:problemma2}
	1 - N^2 \bigexp{ - \bigOmg{ \frac{\min_{l\in[0,k]}n_l}{N^{2/(r-0.1)} \prod_{l=1}^{k-1}\log(n_l)} } } - N \sum_{l=1}^{k} \bigexp{-\bigOmg{n_l}} .
    \end{align}
\revision{    Combining \eqref{eq:svmin_Fk_lob}, \eqref{eq:svmin_Fk_upb}, \eqref{eq:svmin_diag}, \eqref{eq:svmin_offdiag} gives that, with probability lower bounded by \eqref{eq:problemma2},
    \begin{equation}\label{eq:evmincf}
   \evmin{\left(\tilde F_{k} \tilde F_{k}^T\right)^{\circ r}} = \bigTheta{ \left( d \prod_{l=1}^{k} n_l\beta_l^2 \right)^r }. 
    \end{equation}
By applying again Gershgorin circle theorem and following similar passages, we also have that 
\begin{equation}\label{eq:bdop0}
\max_{i\in\{1, \ldots, r/2\}}   \left\|\left(\tilde F_{k} \tilde F_{k}^T\right)^{\circ (r-i)}\right\|_{\rm op} = \bigO{ \left( d \prod_{l=1}^{k} n_l\beta_l^2 \right)^{r-i} }
\end{equation}    
    w.p.\ at least 
    \begin{equation}\label{eq:problemma3}
	1 - N^2 \bigexp{ - \bigOmg{ \frac{\min_{l\in[0,k]}n_l}{N^{2/(r/2-0.1)} \prod_{l=1}^{k-1}\log(n_l)} } } - N \sum_{l=1}^{k} \bigexp{-\bigOmg{n_l}} .    
    \end{equation}
    Furthermore, by using \eqref{eq:event_Fkj} and that the Frobenius norm upper bounds the operator norm, we also obtain the following simple bound
    \begin{equation}\label{eq:bdop1}
       \left\|\left(\tilde F_{k} \tilde F_{k}^T\right)^{\circ (r-i)}\right\|_{\rm op}\le \left\|\left(\tilde F_{k} \tilde F_{k}^T\right)^{\circ (r-i)}\right\|_{F}\le  \bigO{ N\left(d \prod_{l=1}^{k} n_l\beta_l^2\right)^{r-i} } 
    \end{equation}
holding w.p.\ at least    $1-N\bigexp{ -\bigOmg{ \frac{\min_{l\in[0,k]} n_l}{\prod_{l=1}^{k-1}\log(n_l)} } } - \sum_{l=1}^{k}\exp(-\bigOmg{n_l})$.
    }
    
\revision{    Next, we upper bound the operator norm in the RHS of \eqref{eq:Win}. As the operator norm is sub-additive and sub-multiplicative, we have that
    \begin{equation}\label{eq:subsplit}
    \begin{split}
    &\left\|\sum_{i=1}^r {r \choose i}(-1)^i\frac{\Lambda^i\left(\tilde F_{k} \tilde F_{k}^T\right)^{\circ (r-i)}\Lambda^i}{\norm{\mu}_2^{2i}}\right\|_{\rm op}\le \sum_{i=1}^r {r \choose i}\left\|\frac{\Lambda}{\norm{\mu}_2}\right\|_{\rm op}^{2i}\left\|\left(\tilde F_{k} \tilde F_{k}^T\right)^{\circ (r-i)}\right\|_{\rm op}\\
    &\hspace{1em}= \sum_{i=1}^{r/2} {r \choose i}\left\|\frac{\Lambda}{\norm{\mu}_2}\right\|_{\rm op}^{2i}\left\|\left(\tilde F_{k} \tilde F_{k}^T\right)^{\circ (r-i)}\right\|_{\rm op}+\sum_{i=r/2+1}^r {r \choose i}\left\|\frac{\Lambda}{\norm{\mu}_2}\right\|_{\rm op}^{2i}\left\|\left(\tilde F_{k} \tilde F_{k}^T\right)^{\circ (r-i)}\right\|_{\rm op}:=S_1+S_2.
\end{split}   
    \end{equation}
 }
 
 \revision{
Let $h:\RR^d\to\RR$ be a function over a random sample $x$, defined as $h(x)=\inner{f_k(x),\mu}.$
    Then, $\Lambda_{ii} = h(x_i)-\mathbb E_x[h(x)].$
    Since $\norm{h}_{\Lip}^2\leq\norm{\mu}_2^2\norm{f_k}_{\Lip}^2$, it holds
    \begin{equation}\label{eq:TSineq}
	\PP\left(\abs{\Lambda_{ii}}\geq t\right)
	\leq \exp\left(-\frac{t^2}{2\norm{\mu}_2^2\norm{f_k}_{\Lip}^2}\right).
    \end{equation}
    By Lemma \ref{lem:norm_of_expected_feature_map}, it holds w.p.\ at least $1 - \sum_{l=1}^k\bigexp{-\bigOmg{n_l}}$ over $(W_l)_{l=1}^k$ that
    \begin{align}\label{eq:event_Efk}
	\norm{\mu}_2^2 = \bigTheta{ d\prod_{l=1}^{k} n_l \beta_l^2 } .
    \end{align}    
   Also, Theorem \ref{thm:lip_const_fk} shows that w.p.\ at least $1-\sum_{l=1}^{k}\bigexp{-\bigOmg{n_l}}$ over $(W_l)_{l=1}^{k}$, 
    \begin{align}\label{eq:event_lip_fk2}
	\norm{f_k}_{\Lip}^2
	=\bigO{ \frac{\prod_{l=0}^k n_l}{\min_{l\in[0,k]}n_l}\, \prod_{l=1}^{k-1}\log(n_l)\, \prod_{l=1}^{k}\beta_l^2} .
    \end{align}
   Now, pick $t=\|\mu\|_2^2N^{-1/(r/2-0.1)}$ in \eqref{eq:TSineq}. Then, taking the union bound over all the samples and over the events in \eqref{eq:event_Efk} and \eqref{eq:event_lip_fk2}, we conclude that
       \begin{align}\label{eq:Ln}
	\left\|\frac{\Lambda}{\norm{\mu}_2}\right\|_{\rm op}^2=\bigO{N^{-2/(r/2-0.1)}d\prod_{l=1}^{k} n_l \beta_l^2}
    \end{align}
    w.p.\ at least 
    \begin{align*}
	1-N \bigexp{ - \bigOmg{ \frac{\min_{l\in[0,k]}n_l}{N^{2/(r/2-0.1)} \prod_{l=1}^{k-1}\log(n_l)} } }  -\sum_{l=1}^{k}\bigexp{-\bigOmg{n_l}}.
    \end{align*} 
 By combining \eqref{eq:bdop0} and \eqref{eq:Ln}, we have that
 \begin{equation}\label{eq:subsplit1}
 S_1\le \bigO{N^{-2/(r/2-0.1)}\left(d\prod_{l=1}^{k} n_l \beta_l^2\right)^r}=\littleO{ \left(d \prod_{l=1}^{k} n_l\beta_l^2\right)^r } 
 \end{equation}
        w.p.\ lower bounded by \eqref{eq:problemma3}. Furthermore, by combining \eqref{eq:bdop1} and \eqref{eq:Ln}, we have that
 \begin{equation}\label{eq:subsplit2}
 S_2\le \bigO{N^{1-r/(r/2-0.1)}\left(d\prod_{l=1}^{k} n_l \beta_l^2\right)^r}=\littleO{ \left(d \prod_{l=1}^{k} n_l\beta_l^2\right)^r } 
  \end{equation}
w.p.\ lower bounded by \eqref{eq:problemma3}. By combining \eqref{eq:Win}, \eqref{eq:evmincf}, \eqref{eq:subsplit}, \eqref{eq:subsplit1} and \eqref{eq:subsplit2}, the desired result \eqref{eq:dresnew} follows.
}
% \end{proof} 
% \fi

\section{Missing Proofs from Section \ref{sec:lip_const_fk}}
\begin{definition}\label{def:PWL}
    A subset $A\subseteq\RR^n$ is called a polyhedron if it is the intersection of a finite family of (closed) half-spaces.
    A function $f:\RR^n\to\RR^m$ is called piecewise linear 
    if there exist a finite family of polyhedra $\Set{P_i}_{i=1}^{r}$ such that
    $\RR^n=\cup_{i=1}^{r} P_i$ and $f$ coincides with a linear function on each $P_i$.
\end{definition}
The following lemma establishes a formal connection between ReLU networks and PWL functions. Its proof is contained in Appendix \ref{app:lem:NN_PWL}.
\begin{lemma}\label{lem:NN_PWL}
    For every $k\in[L]$, $f_k,g_k:\RR^d\to\RR^{n_k}$ as defined in \eqref{eq:def_feature_map} are piecewise linear functions.
\end{lemma}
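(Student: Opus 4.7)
The plan is to prove the lemma by a straightforward induction on $k$, alternately handling the linear step (multiplication by $W_k^T$) and the nonlinear step (coordinate-wise ReLU). The base case $f_0(x)=x$ is trivially PWL with the single polyhedron $\RR^d$, and $g_1(x)=W_1^T x$ is linear, hence PWL.

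For the induction step I would assume that $f_{k-1}$ is PWL with a finite family $\{P_i\}_{i=1}^r$ of polyhedra covering $\RR^d$ on each of which $f_{k-1}$ coincides with some linear map $x\mapsto A_i x + b_i$. Since $g_k(x) = W_k^T f_{k-1}(x)$ is the composition of the linear map $W_k^T$ with a function that is linear on each $P_i$, we get that $g_k$ is itself linear on each $P_i$, so the same family $\{P_i\}$ witnesses that $g_k$ is PWL. This also handles $k=L$ since $f_L = g_L = W_L^T f_{L-1}$.

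The nonlinear step is where the refinement happens. On the polyhedron $P_i$ write $g_k(x) = C_i x + d_i$ for some $C_i \in \RR^{n_k \times d}$, $d_i \in \RR^{n_k}$. For each sign pattern $S \subseteq [n_k]$ define
\begin{equation*}
    P_{i,S} \;=\; P_i \;\cap\; \bigcap_{j \in S}\{x : (C_i)_{j:}\,x + (d_i)_j \ge 0\} \;\cap\; \bigcap_{j \notin S}\{x : (C_i)_{j:}\,x + (d_i)_j \le 0\}.
\end{equation*}
Each $P_{i,S}$ is a finite intersection of closed half-spaces, hence a polyhedron, and on $P_{i,S}$ the map $f_k$ coincides with the linear function that sends $x$ to the vector with $j$-th coordinate equal to $(C_i)_{j:}\,x + (d_i)_j$ for $j \in S$ and $0$ for $j \notin S$. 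The union $\bigcup_{S \subseteq [n_k]} P_{i,S}$ equals $P_i$, so $\bigcup_{i,S} P_{i,S} = \RR^d$, and there are at most $r \cdot 2^{n_k}$ such polyhedra. This shows $f_k$ is PWL and completes the induction.

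There is no real obstacle here; the only thing to be careful about is that ``piecewise linear'' in Definition~\ref{def:PWL} requires each piece to be a polyhedron (a finite intersection of half-spaces), so one must verify that the sign-pattern refinement preserves this finiteness, which it manifestly does because each ReLU coordinate contributes exactly two half-spaces and the number of coordinates at layer $k$ is finite.
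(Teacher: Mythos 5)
Your proof is correct and takes essentially the same approach as the paper: both decompose $\RR^d$ into polyhedra indexed by the sign patterns of the pre-activations, with $f_k$ (respectively $g_k$) linear on each piece. The only difference is organizational—you build the decomposition inductively by refining at each layer, whereas the paper directly enumerates the $2^{\sum_{l=1}^k n_l}$ activation patterns and writes out the resulting polyhedra and linear maps explicitly.
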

An equivalent way of defining piecewise linear maps is the following, see e.g. \citep{Gorokhovik2011}.
\begin{lemma}\label{lem:PWL_polyhedra}
    A function $f:\RR^n\to\RR^m$ is piecewise linear 
    if and only if there exist a finite family of polyhedra $\Set{P_i}_{i=1}^{T}$ 
    and matrices $\Set{A_i}_{i=1}^{T}\in\RR^{m\times n}$ such that:
    \begin{enumerate}
	\item $\RR^n=\bigcup_{i=1}^{T} P_i,$
	\item $\interior(P_i)\neq\emptyset,\quad\forall\,i\in[T],$
	\item $\interior(P_i)\cap \interior(P_j)=\emptyset\quad\forall\,i\neq j,$
	\item $f(x)=A_ix$ for every $x\in P_i.$
    \end{enumerate}
\end{lemma}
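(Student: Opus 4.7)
The plan is to establish the $(\Leftarrow)$ direction trivially and the $(\Rightarrow)$ direction by refining the given polyhedral decomposition using a finite hyperplane arrangement. For $(\Leftarrow)$, the four listed conditions are strictly stronger than those in Definition \ref{def:PWL} (they furnish a finite family of polyhedra covering $\RR^n$ on each of which $f$ equals the linear map $x\mapsto A_i x$), so nothing is to do.

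For $(\Rightarrow)$, start from a representation $\RR^n=\bigcup_{i=1}^{r}P_i$ as in Definition \ref{def:PWL}, where each $P_i$ is a finite intersection of closed half-spaces $H_{i,j}$ and $f$ restricted to $P_i$ coincides with a linear map $x\mapsto A_i x$. I would collect every bounding hyperplane $\partial H_{i,j}$ into a finite arrangement $\mathcal{H}$, and take $\{Q_t\}_{t=1}^{T}$ to be the closures of the full-dimensional open cells of $\mathcal{H}$. Each $Q_t$ is a finite intersection of closed half-spaces taken from $\mathcal{H}$, hence a polyhedron; by construction it is full-dimensional, so $\interior(Q_t)\neq\emptyset$; any two distinct full-dimensional cells have disjoint interiors by the definition of a hyperplane arrangement; and their closures cover $\RR^n$, since any point either lies in a full-dimensional cell or on one of the hyperplanes, in which case it lies in the closure of an adjacent full-dimensional cell. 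This yields the first three conditions of the lemma.

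The heart of the argument is to check that $f$ is linear on each $Q_t$ and is given by one of the matrices $A_i$. Fix $t$ and any $x\in\interior(Q_t)$. By construction $x$ lies strictly off every hyperplane of $\mathcal{H}$, so for every defining half-space $H_{i,j}$ of every $P_i$, $x$ either strictly satisfies or strictly violates the defining inequality. Consequently, for each $i$, the condition $x\in P_i$ is equivalent to $x\in\interior(P_i)$, so $\interior(Q_t)\cap P_i$ is simultaneously open in $\interior(Q_t)$ and relatively closed in $\interior(Q_t)$ (because $P_i$ is closed). Connectedness of $\interior(Q_t)$ plus the covering $\bigcup_i P_i=\RR^n$ then forces at least one index $i(t)$ with $\interior(Q_t)\subseteq P_{i(t)}$, and taking closures gives $Q_t\subseteq P_{i(t)}$. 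Thus $f(x)=A_{i(t)}x$ on all of $Q_t$, and assigning the matrix $A_{i(t)}$ to $Q_t$ completes the verification.

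I expect the one non-trivial step to be the clopen-plus-connectedness argument used to upgrade ``some point of $\interior(Q_t)$ lies in $P_i$'' to ``all of $\interior(Q_t)$ lies in $P_i$'', which depends on the crucial observation that the refined cells never cross a bounding hyperplane of any $P_i$. Everything else is routine bookkeeping about finite hyperplane arrangements and intersections of half-spaces.
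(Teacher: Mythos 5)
The paper does not actually prove this lemma: it is stated as a known equivalence with a pointer to \cite{Gorokhovik2011}, so there is no internal proof to compare against. Your self-contained argument is correct. The backward direction is indeed immediate, and your forward direction — refining the given decomposition by the arrangement of all bounding hyperplanes and taking the closed full-dimensional cells as the $Q_t$ — is the standard route: each open cell carries a constant sign pattern with respect to every defining half-space of every $P_i$, so membership in $P_i$ is constant on the cell; your clopen-plus-connectedness phrasing of this is valid (if slightly heavier than just invoking the constancy of the sign pattern). Two small points are worth making explicit in a write-up. First, you implicitly use that $\interior(\overline{C})=C$ for a nonempty open convex cell $C$; this is what guarantees both that every point of $\interior(Q_t)$ avoids all hyperplanes and that distinct cells yield closures with disjoint interiors. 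Second, an original piece $P_i$ with empty interior is contained in the union of its own bounding hyperplanes (for each of its points some defining inequality must be tight), hence never meets $\interior(Q_t)$; your clopen argument absorbs this case automatically, since the covering of $\interior(Q_t)$ is then effected by the full-dimensional pieces alone, but it deserves a sentence. Finally, note that both Definition \ref{def:PWL} and the lemma use ``linear'' in the homogeneous sense $x\mapsto A_ix$, consistent with the bias-free networks of \eqref{eq:def_feature_map}; your proof correctly carries the same matrices $A_i$ over to the refined cells, so no affine offsets arise.
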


\subsection{Proof of Theorem \ref{thm:lip_const_fk}}\label{app:thm:lip_const_fk}
% \ifpaper
% \begin{proof}
    Let $h_{p\to q}:\RR^{n_p}\to\RR^{n_q}$ be defined as
    \begin{align*}
	h_{p\to q} = A_q\circ\hat{\sigma}_{q-1}\circ A_{q-1}\circ \ldots\circ\hat{\sigma}_{p+1}\circ A_{p+1} ,
    \end{align*}
    where the mapping $A_l:\RR^{n_{l-1}}\to\RR^{n_l}$ is given by $A_l(x)=W_l^T x$,
    and the mapping $\hat{\sigma}_l:\RR^{n_l}\to\RR^{n_l}$ is given by $\hat{\sigma}(x)=[\sigma(x_1),\ldots,\sigma(x_{n_l})]^T$ for every $x\in\RR^{n_l}.$
    By definition, it holds $g_k(x)=h_{0\to k}(x).$
    In the following, we prove that for every $0\leq p<q\leq L$, 
    it holds w.p.\ $\geq 1-\sum_{l=p-1}^{q}\bigexp{-\bigOmg{n_l}}$ that
    \begin{align}\label{eq:bound_hpq}
	\norm{h_{p\to q}}_{\Lip}
	=\bigO{ \frac{\prod_{l=p}^{q} n_l}{\min_{l\in[p,q]}n_l}\, \prod_{l=p+1}^{q-1}\log(n_l)\, \prod_{l=p+1}^{q}\beta_l^2} .
    \end{align}
    The desired result follows by letting $p=0, q=k$.
    The proof of \eqref{eq:bound_hpq} is by induction over the length $s=q-p$.
    First, \eqref{eq:bound_hpq} holds for $s=1.$
    Suppose that \eqref{eq:bound_hpq} holds for all $(p, q)$ such that $q-p\le s-1$, 
    and we want to prove it for all $(p,q)$ with $q-p=s$. 
    It suffices to show the result for one pair and then do a union bound over all the possible pairs.
    Let us define 
    \begin{align*}
	j=\argmin_{l\in[p,q]} n_l,\quad t=\argmin_{l\in[p,q]\setminus\Set{j}} n_l.
    \end{align*}
    Consider three cases below. 
    In the first case, $j\in[p+1,q-1]$. By noting that
    \begin{align*}
	h_{p\to q}=h_{j\to q} \circ \hat{\sigma}_j \circ h_{p\to j}
    \end{align*}
    and using the Lipschitz property of a composition of Lipschitz continuous functions, one obtains
    \begin{align*}
	\norm{h_{p\to q}}_{\Lip}
	&\leq \norm{h_{p\to j}}_{\Lip} \norm{\hat{\sigma}_{j}}_{\Lip} \norm{h_{j\to q}}_{\Lip} \\
	&=\bigO{ \frac{\prod_{l=p}^{j} n_l}{\min_{l\in[p,j]}n_l}\, \prod_{l=p+1}^{j-1}\log(n_l)\, \frac{\prod_{l=j}^q n_l}{\min_{l\in[j,q]}n_l}\, \prod_{l=j+1}^{q-1}\log(n_l)\, \prod_{l=p+1}^{q}\beta_l^2}\\
	&=\bigO{ \frac{\prod_{l=p}^q n_l}{\min_{l\in[p,q]}n_l}\, \prod_{l=p+1}^{q-1}\log(n_l)\, \prod_{l=p+1}^{q}\beta_l^2 } ,
    \end{align*}
    where the first equality follows from induction assumption and $\norm{\hat{\sigma}}_{\Lip}\leq 1$, 
    the second equality follows from definition of $j.$
    In the second case, $j=q$ and $t\in[p+1,q-1]$, then similarly,
%     \begin{align*}
% 	h_{p\to q}=h_{t\to q} \circ \hat{\sigma}_t \circ h_{p\to t}
%     \end{align*}
%     and hence,
    \begin{align*}
	\norm{h_{p\to q}}_{\Lip}
	&\leq \norm{h_{p\to t}}_{\Lip} \norm{\hat{\sigma}_{t}}_{\Lip} \norm{h_{t\to q}}_{\Lip} \\
	&=\bigO{ \frac{\prod_{l=p}^{t} n_l}{\min_{l\in[p,t]}n_l}\, \prod_{l=p+1}^{t-1}\log(n_l)\, \frac{\prod_{l=t}^q n_l}{\min_{l\in[t,q]}n_l}\, \prod_{l=t+1}^{q-1}\log(n_l)\, \prod_{l=p+1}^{q}\beta_l^2}\\
	&=\bigO{ \frac{n_t \prod_{l=p}^q n_l}{n_t n_q}\, \prod_{l=p+1}^{q-1}\log(n_l)\, \prod_{l=p+1}^{q}\beta_l^2 } \\
	&=\bigO{ \frac{\prod_{l=p}^q n_l}{\min_{l\in[p,q]}n_l}\, \prod_{l=p+1}^{q-1}\log(n_l)\, \prod_{l=p+1}^{q}\beta_l^2 }. 
    \end{align*}
    It remains to handle the case where either $(j=p)$ or $(j=q \textrm{ and } t=p)$. 
    By Lemma \ref{lem:lip_const_fk}, it holds w.p.\ 1 over $(W_l)_{l=p+1}^{q-1}$ that
    there exists a set of $R$ tuples of diagonal matrices, say $\mathcal{D}=\Set{(\Sigma_{p+1}^1,\ldots,\Sigma_{q-1}^1),\ldots,(\Sigma_{p+1}^R,\ldots,\Sigma_{q-1}^R)}$, 
    with 0-1 entries on the diagonals such that 
    \begin{align}\label{eq:tuples_hpq}
	\norm{h_{p\to q}}_{\Lip}
	\leq\; \max\limits_{(\Sigma_{p+1},\ldots,\Sigma_{q-1})\in\mathcal{D}}\; 
	\norm{ \left(\prod_{l=p+1}^{q-1} W_l\Sigma_l\right) W_q }_{\op} .
    \end{align}
    According to Lemma \ref{lem:lip_const_fk}, $R$ can be interpreted as the maximum number of activation patterns of a $q-p$ layer network with layer widths $(n_p,n_{p+1},\ldots,n_q)$,
    where every hidden neuron has a definite sign pattern $\Set{-1,+1}.$
    Let $n_{\mathrm{max}}=\max_{l\in[p+1,q-1]}n_l$, then $R=\bigO{(n_{\mathrm{max}})^{n_p}}$ (see e.g. \citep{HaninRolnick2019, SerraEtal2018}). 
    Using the definition of operator norm and an $\epsilon$-net argument, the inequality \eqref{eq:tuples_hpq} becomes
    \begin{align}\label{eq:scase_hpq}
	\norm{h_{p\to q}}_{\Lip}
% 	&\leq\; \max\limits_{(\Sigma_{p+1},\ldots,\Sigma_{q-1})\in\mathcal{D}}\; \norm{ \left(\prod_{l=p+1}^{q-1} W_l\Sigma_l\right) W_q }_{\op} \nonumber\\
	&\leq\; \max\limits_{(\Sigma_{p+1},\ldots,\Sigma_{q-1})\in\mathcal{D}}\;\; \sup_{\norm{y}_2=1}\, \norm{y^T \left(\prod_{l=p+1}^{q-1} W_l\Sigma_l\right) W_q }_{2} \nonumber\\
	&\leq\; \max\limits_{(\Sigma_{p+1},\ldots,\Sigma_{q-1})\in\mathcal{D}}\;\; 2\sup_{y\in{\sf N}_{1/2}^{p}} \norm{\underbrace{y^T \left(\prod_{l=p+1}^{q-1} W_l\Sigma_l\right)}_{=:z^T} W_q}_2^2,
    \end{align}
    where ${\sf N}_{1/2}^{p}$ is a $\frac{1}{2}$-net of the unit sphere in $\RR^{n_p}$ and the last inequality follows from Lemma 4.4.1 in \citep{vershynin2018high}. 
    Fix $y\in {\sf N}_{1/2}^{p}$, and let $z$ be defined as above. Note that $z$ is independent of $W_q$.
    From the proof of Lemma \ref{lem:norm_W_SIGMA_wL}, we have
    \begin{align}\label{eq:sz_hpq}
	\norm{z}_2^2
	\leq\norm{\prod_{l=p+1}^{q-1} W_l\Sigma_l}_{\op}^2
	=\bigO{ \frac{\prod_{l=p}^{q-1} n_l}{\min_{l\in[p,q-1]}n_l} \prod_{l=p+1}^{q-1}\beta_l^2 }
    \end{align}
    w.p.\ at least $1-\sum_{l=p}^{q-1} \bigexp{-\bigOmg{n_l}}$ over $(W_l)_{l=p+1}^{q-1}.$
    Conditioned on the intersection of this event with the event \eqref{eq:tuples_hpq} of $(W_l)_{l=p+1}^{q-1}$,
    let us now study a concentration bound for $\norm{z^TW_q}_2^2$ where the only randomness is $W_q.$
    We have $\norm{z^T W_q}_2^2=\sum_{j=1}^{n_q} \inner{z,(W_q)_{:j}}^2$ 
    and $\norm{\inner{z,(W_q)_{:j}}^2}_{\psi_1}\leq c_1\beta_q^2\norm{z}_2^2.$
    Thus by Bernstein's inequality (see \Bernstein), 
    \begin{align*}
	\PP\left(\abs{\norm{z^T W_q}_2^2-\E_{W_q}\norm{z^T W_q}_2^2}>t\right)
	\leq\bigexp{-c_2\min\left(\frac{t}{c_1\beta_q^2\norm{z}_2^2}, \frac{t^2}{n_q c_1^2\beta_q^4\norm{z}_2^4}\right)},
    \end{align*}
    for some constant $c_2$. 
    Let $C=\max(c_2, 2).$ Then by substituting to the above inequality the values
    \begin{align*}
	t=\frac{Cc_1}{c_2}\max(n_q, n_p)\frac{\log(R)}{n_p}\beta_q^2\norm{z}_2^2,\quad 
	\E_{W_q}\norm{z^T W_q}_2^2=n_q\beta_q^2\norm{z}^2_2 ,
    \end{align*}
    we have w.p.\ at least $1-e^{-C\max(n_q,n_p)\log(R)/n_p}$ that
    \begin{align*}
	\norm{z^T W_q}_2^2=\bigO{ \max(n_q,n_p)\frac{\log(R)}{n_p} \beta_q^2\norm{z}^2_2 }.
    \end{align*}
    Now taking the union bound over $y\in{\sf N}_{1/2}^p$ and all tuples from $\mathcal{D}$,
    the RHS of \eqref{eq:scase_hpq} is bounded as
    \begin{align*}
	\max\limits_{(\Sigma_{p+1},\ldots,\Sigma_{q-1})\in\mathcal{D}}\;\; 2\sup_{y\in{\sf N}_{1/2}^{p}} \norm{z^T W_q}_2^2
	&=\bigO{ \max(n_q,n_p)\frac{\log(R)}{n_p}\beta_q^2\norm{z}^2_2}\\
	&=\bigO{ \max(n_q,n_p) \log(n_{\mathrm{max}}) \beta_q^2\norm{z}^2_2}
    \end{align*}
    w.p.\ at least 
    \begin{align*}
	1-R\abs{{\sf N}_{1/2}^p} e^{-C\max(n_q,n_p)\frac{\log(R)}{n_p}} \geq 1-e^{-\bigOmg{\max(n_q,n_p)}},
    \end{align*}
    where we used $\abs{{\sf N}_{1/2}^p}\leq 5^{n_p}$, $R=\bigO{(n_{\mathrm{max}})^{n_p}}$ and $C>1.$
    This combined with \eqref{eq:scase_hpq}, \eqref{eq:sz_hpq} implies
    \begin{align*}
	\norm{h_{p\to q}}_{\Lip}
	&=\bigO{ \max(n_q,n_p) \log(n_{\mathrm{max}}) \beta_q^2 \, \frac{\prod_{l=p}^{q-1} n_l}{\min_{l\in[p,q-1]}n_l} \prod_{l=p+1}^{q-1}\beta_l^2 }\\
	&=\bigO{ \frac{\prod_{l=p}^{q} n_l}{\min_{l\in[p,q]}n_l} \log(\max_{l\in[p+1,q-1]}n_l) \prod_{l=p+1}^{q}\beta_l^2 } \\
	&=\bigO{ \frac{\prod_{l=p}^{q} n_l}{\min_{l\in[p,q]}n_l} \prod_{l=p+1}^{q-1}\log(n_l) \prod_{l=p+1}^{q}\beta_l^2 } ,
    \end{align*}
    where the second estimate follows from the current value of $(j,t).$
    So, we have shown that \eqref{eq:bound_hpq} holds for every pair $(p,q)$ with $q-p=s.$
    Taking the union bound over all these pairs finishes the proof. 
    Note that this last step does not affect the final probability as the number of pairs is only a constant.
% \end{proof}
% \fi

\subsection{Proof of Lemma \ref{lem:lip_const_fk}}\label{app:lem:lip_const_fk}
% \ifpaper
% \begin{proof}
    Let $\gamma_d$ be the Lebesgue measure in $\RR^d.$
    Let us associate to $g_k:\RR^d\to\RR^{n_k}$ a set of polyhedra $\Set{P_i}_{i=1}^{T}$
    and matrices $\Set{A_i}_{i=1}^{T}\in\RR^{n_k\times n_d}$ as in Lemma \ref{lem:PWL_polyhedra}.
    First, let us show that
    \begin{align}\label{eq:Lip_op}
	\norm{g_k}_{\Lip} 
	=\max_{i\in[T]} \norm{A_i}_{\op}.
    \end{align}
    Pick any $x,y\in\RR^d.$ 
    By intersecting the line segment $[x,y]$ with the polyhedra,
    there exists a finite set of points $\Set{u_i}_{i=1}^{r}$ on $[x,y]$ such that:
    \emph{(i)} $u_0=x,u_r=y$, \emph{(ii)} $\norm{x-y}_2=\sum_{i=0}^{r-1}\norm{u_i-u_{i+1}}_2$, and \emph{(iii)} $[u_i,u_{i+1}]$ is contained in $P_{j_i}$ for some $j_i\in[T].$
    This implies
    \begin{align*}
	\norm{g_k(x)-g_k(y)}_2
	\leq \sum_{i=0}^{r-1} \norm{g_k(u_i)-g_k(u_{i+1})}_2
	= \sum_{i=0}^{r-1} \norm{A_{j_i} (u_i-u_{i+1})}_2
	&\leq \sum_{i=0}^{r-1} \norm{A_{j_i}}_{\op} \norm{u_i-u_{i+1}}_2\\
	&\leq \max_{i\in[T]} \norm{A_i}_{\op} \norm{x-y}_2 ,
    \end{align*}
    which means
    \begin{align*}
	\norm{g_k}_{\Lip} 
	= \sup_{x, y} \frac{\norm{g_k(x)-g_k(y)}_2}{\norm{x-y}_2}
	\leq \max_{i\in[T]} \norm{A}_{\op} .
    \end{align*}
    To show that the above inequality can be attained,
    let $i_*=\argmax_{i\in[T]}\norm{A_i}_{\op}.$ 
    Since $\interior(P_{i_*})\neq\emptyset,$ it holds $$\Setbar{\frac{x-y}{\norm{x-y}_2}}{x,y\in P_{i_*}}=\mathcal{S}^{n-1},$$
    where $\mathcal{S}^{n-1}$ denotes the unit sphere in $\RR^n$, and thus
    \begin{align*}
	\sup_{x, y} \frac{\norm{g_k(x)-g_k(y)}_2}{\norm{x-y}_2}
	\geq
	\sup_{x,y\in P_{i_*}} \frac{\norm{g_k(x)-g_k(y)}_2}{\norm{x-y}_2}
	=\sup_{x,y\in P_{i_*}} \frac{\norm{A_{i_*}(x-y)}_2}{\norm{x-y}_2}
	=\norm{A_{i_*}}_{\op} .
    \end{align*}
    This proves the equation \eqref{eq:Lip_op}.
    Next, let us define the following sets:
    \begin{align*}
	&S=\Setbar{x\in\RR^d}{f_{k-1}(x)=0},\\
	&B=\Setbar{x\in\RR^d\setminus S}{\exists\,l\in[k-1], i_l\in[n_l]: g_{l,i_l}(x)=0},\\
	&G=\RR^d\setminus(B\cup S).
    \end{align*}
    Let $\partial S=S\setminus\interior(S).$ 
    Then clearly, $\RR^d=G\cup B\cup\partial S\cup\interior(S).$
    Let us show that $\gamma_d(B)=\gamma_d(\partial S)=0.$
    By Lemma \ref{lem:NN_PWL}, $f_{k-1}$ is a PWL function,
    thus every level set of $f_{k-1}$ can be written as a union of finitely many polyhedra in $\RR^d.$
    This means that $\partial S$ is a union of finitely many polyhedra with dimension at most $d-1$, thus $\gamma_d(\partial S)=0.$
    Concerning the set $B$, note that for every $l\in[k-1], i_l\in[n_l]$,
    \begin{align*}
	g_{l,i_l}(x)
	=\sum_{i_0=1}^{d} \sum_{i_1=1}^{n_1}\ldots\sum_{i_{l-1}=1}^{n_{l-1}} \prod_{p=1}^{l} x_{i_0} (W_p)_{i_{p-1},i_p} \prod_{q=1}^{l-1} \mathbbm{1}_{g_{q,i_q}(x)>0} .
    \end{align*}
    By definition, any $x\in B$ satisfies $f_l(x)\neq 0$ for all $l\in[k-1]$.
    This implies that at each layer $q\in[k-1]$, there exists at least one active neuron, i.e. some $i_q\in[n_q]$ such that $g_{q,i_q}(x)>0$.
    Let $\mathcal{I}_l$ denote the set of active neurons that an input $x\in B$ may have at layer $l\in[k-1].$
    Then it holds
    \begin{align*}
	B\subseteq
	\bigcup_{l\in[k-1]}\;
	\bigcup_{i_l\in[n_l]}\;
	\bigcup_{\stackrel{\mathcal{I}_1\subseteq [n_1]}{\mathcal{I}_1\neq\emptyset}}
	\ldots
	\bigcup_{\stackrel{\mathcal{I}_{l-1}\subseteq [n_{l-1}]}{\mathcal{I}_{l-1}\neq\emptyset}}
	\Setbar{x\in\RR^d}{\sum_{i_0=1}^{d} \sum_{i_1\in\mathcal{I}_1}\ldots\sum_{i_{l-1}\in\mathcal{I}_{l-1}} \prod_{p=1}^{l} x_{i_0} (W_p)_{i_{p-1},i_p} = 0} .
    \end{align*}
    With probability 1 over $(W_l)_{l=1}^{k-1}$, the set of zeros of each polynomial inside the bracket above has measure zero.
    Since there are only finitely many such polynomials, one obtains $\gamma_d(B)=0$ .
    
    We are now ready to prove the lemma.
    From $\interior(P_i)\neq\emptyset$ and $\gamma_d(B\cup\partial S)=0$,
    it follows that
    \begin{align*}
	\interior(P_i)\cap (G\cup\interior(S))
	=\interior(P_i)\cap (\RR^d\setminus (B\cup\partial S))
	\neq\emptyset .
    \end{align*}
    For every $i\in[T],$ let $z_i\in \interior(P_i)\cap (G\cup\interior(S)).$
    Since $z_i\in\interior(P_i)$, it follows from \eqref{eq:Lip_op} that
    \begin{align*}
	\norm{g_k}_{\Lip} 
	=\max_{i\in[T]} \norm{A_i}_{\op}
	=\max_{i\in[T]} \norm{J(g_k)(z_i)}_{\op} .
    \end{align*}
    Now if $z_i\in\interior(S)$, then $J(g_k)(z_i)=0$, as $g_k$ is constant zero in a neighborhood of $z_i.$
    Otherwise, we must have $z_i\in G$, which implies $\mathcal{A}_{1\to k-1}(z_i)\in\Set{-1,+1}^{\sum_{l=1}^{k-1}n_l}.$
    Combining all these facts, we get
    \begin{align*}
	\norm{g_k}_{\Lip}
	=\;\max\limits_{z:\; \mathcal{A}_{1\to k-1}(z)\in\Set{-1,+1}^{\sum_{l=1}^{k-1}n_l}}\; \norm{J(g_k)(z)}_{\op} .
    \end{align*}
    Finally, the inequality $\norm{f_k}_{\Lip}\leq\norm{g_k}_{\Lip}$ follows from the 1-Lipschitz property of ReLU.
% \end{proof}
% \fi

\subsection{Proof of Lemma \ref{lem:NN_PWL}}\label{app:lem:NN_PWL}
% \ifpaper
% \begin{proof}
    Let $T=2^{\sum_{l=1}^{k} n_l},$
    and $\Set{\mathcal{A}_1,\ldots,\mathcal{A}_T}\in\Set{-1,+1}^{\sum_{l=1}^{k}n_l}$ 
    denote the set of all possible binary strings of dimension $\sum_{l=1}^{k}n_l$, where each entry takes value $-1$ or $+1.$ 
    Let us index the entries of each string by $\mathcal{A}_j=\Set{\mathcal{A}_{j,l,i_l}}_{l\in[k],i_l\in[n_l]}.$
    Let $P_j\subseteq\RR^d$ be the set of inputs 
    where the activation pattern of all neurons up to layer $k$
    matches perfectly with $\mathcal{A}_j$, namely
    \begin{align*}
	P_j
	&=\bigcap_{l\in[k]}\;\bigcap_{i_l\in[n_l]} \Setbar{x\in\RR^d}{g_{l,i_l}(x)\mathcal{A}_{j,l,i_l}\geq 0}\\
	&=\bigcap_{l\in[k]}\;\bigcap_{i_l\in[n_l]} \Setbar{x\in\RR^d}{ \sum_{i_0=1}^{d} \sum_{i_1=1}^{n_1}\ldots\sum_{i_{l-1}=1}^{n_{l-1}} \prod_{p=1}^{l} x_{i_0} (W_p)_{i_{p-1},i_p} \prod_{p=1}^{l-1} \mathbbm{1}_{\mathcal{A}_{j,p,i_p}> 0}\; \mathcal{A}_{j,l,i_l}\; \geq 0} .
    \end{align*}
    It is clear that $P_j$ is a polyhedron.
    Also, every coordinate function $f_{k,i_k}$ admits the following linear representation on $P_j$
    \begin{align*}
	f_{k,i_k}(x)=\sum_{i_0=1}^{d} \sum_{i_1=1}^{n_1}\ldots\sum_{i_{l-1}=1}^{n_{k-1}} \prod_{p=1}^{k} x_{i_0} (W_p)_{i_{p-1},i_p} \mathbbm{1}_{\mathcal{A}_{j,p,i_p}> 0},\quad\forall\,x\in P_j .
    \end{align*}
    This implies that $f_k$ coincides with a linear function on $P_j.$
    As every input must take one of the $T$ strings as an activation pattern, we also have $\RR^d=\cup_{i=1}^{T} P_j.$
    Thus according to Definition \ref{def:PWL}, $f_k$ is a PWL function.
    Similarly, $g_k$ is also piecewise linear.
% \end{proof}
% \fi

\end{document}